\documentclass[12pt]{article}

\usepackage[utf8]{inputenc} 
\usepackage[T1]{fontenc}    
\usepackage{url}            
\usepackage{booktabs}       
\usepackage{amsfonts}       
\usepackage{nicefrac}       
\usepackage{microtype}      
\usepackage{xcolor}         

\usepackage{color}
\usepackage{multirow}
\usepackage{graphicx}
\usepackage{amssymb,amsmath} 
\usepackage{amsthm}
\usepackage{amsbsy}
\usepackage{mathtools}
\usepackage{enumitem}
\usepackage{fullpage}

\usepackage{thmtools,thm-restate}

\usepackage{algorithm}
\usepackage{algpseudocode}
\usepackage{tikz}
\usetikzlibrary{fit,calc,tikzmark}
\makeatletter
\algrenewcommand\algorithmicrequire{\textbf{Input:}}
\algrenewcommand\alglinenumber[1]{\textcolor{gray!70}{\footnotesize#1}}
\algrenewcommand\algorithmiccomment[1]{\hfill\textcolor{gray!70}{\scriptsize$\triangleright$~#1}}
\makeatother

\newlength{\AlgBoxPadLeft}   
\newlength{\AlgBoxPadRight}  
\newlength{\AlgBoxPadTop}    
\newlength{\AlgBoxPadBottom} 

\newcommand{\nuc}{\operatorname{nuc}}

\DeclareMathOperator*{\polar}{polar}

\definecolor{shade}{rgb}{0.9,0.9,0.9}
\usepackage{bm}

\newcommand{\R}{\mathbb{R}}

\newcommand{\cC}{\mathcal{C}}

\newcommand{\op}{\operatorname{op}}

\newcommand{\dotprod}[1]{\left< #1\right>} 
 
\DeclareMathOperator{\argmininn}{argmin} 
\DeclareMathOperator{\argmaxinn}{argmax} 
\newcommand{\argmin}[1]{ \underset{#1}{\argmininn} \;}
\newcommand{\argmax}[1]{ \underset{#1}{\argmaxinn}\;}
\DeclareMathOperator{\traceOp}{tr}  
\DeclareMathOperator{\sign}{sign}   
\DeclareMathOperator{\Sign}{SIGN}   
\DeclareMathOperator{\diag}{diag}  
\DeclareMathOperator{\Diag}{Diag}  
\providecommand{\trace}[1]{\traceOp\left(#1\right)}

\newcommand{\LMO}{{\sf LMO}}

\usepackage{mdframed} 

\definecolor{shadecolor}{gray}{0.90}
\declaretheoremstyle[
headfont=\normalfont\bfseries,
notefont=\mdseries, notebraces={(}{)},
bodyfont=\normalfont,
postheadspace=0.5em,
spaceabove=6pt,
mdframed={
  skipabove=8pt,
  skipbelow=8pt,
  hidealllines=true,
  backgroundcolor={shadecolor},
  innerleftmargin=4pt,
  innerrightmargin=4pt} 
]{shaded}

\declaretheorem[style=shaded,within=section]{definition}
\declaretheorem[style=shaded,sibling=definition]{theorem}
\declaretheorem[style=shaded,sibling=definition]{proposition}

\declaretheorem[style=shaded,sibling=definition]{lemma}
\declaretheorem[style=shaded,sibling=definition]{remark}

\usepackage[most]{tcolorbox}  
\definecolor{myframecolor}{RGB}{0,60,125}
\colorlet{myframecolorlight}{myframecolor!20!white}
\tcbset{
  myremark/.style={
    colback=blue!5!white,
    colframe=myframecolorlight,
    coltitle=black,
    fonttitle=\bfseries,
    boxrule=0.8pt,
    arc=4pt,
    title=#1
  }
}

\usepackage[colorinlistoftodos]{todonotes}

\usepackage[hidelinks]{hyperref}
\usepackage[nameinlink]{cleveref}
\usepackage[
    maxbibnames=99,
  giveninits=false,
  backref=true,
  backend=biber,
  style=alphabetic,
  citestyle=alphabetic,
  url=false,
  isbn=false,
  doi=false
]{biblatex}
\hypersetup{
  colorlinks = true,    
  urlcolor   = blue,    
  linkcolor  = blue,    
  citecolor  = red,     
  filecolor  = black
}
\usepackage{fancyvrb}
\usepackage{array}

\title{Muon Does Not Converge on Convex Lipschitz Functions}

\author{
Tetiana Parshakova\thanks{Center for Computational Mathematics, Flatiron Institute, Simons Foundation}
\and Ahmed Khaled\thanks{Princeton University, Google DeepMind}
\and Michael Crawshaw\thanks{George Mason University}
\and Guillaume Garrigos\thanks{Université Paris Cité and Sorbonne Université, CNRS, Laboratoire de Probabilités, Statistique et Modélisation}
\and Robert M. Gower\footnotemark[1]
}

\date{}

\begin{document}

\maketitle

\begin{abstract}
Muon and its variants have shown strong empirical performance in a variety of deep learning tasks. Existing convergence analyses of Muon rely on smoothness assumptions, though arguably the most successful function class for developing deep learning methods (such as AdaGrad, Shampoo, Schedule-Free and more) has been the class of convex and Lipschitz functions. In this paper we question whether the classical convex Lipschitz model is a useful one for understanding Muon.

Our answer is no. We show that Muon does not converge on the class of convex and Lipschitz functions, regardless of the choice of learning rate schedule. We also show that error feedback restores convergence of Muon and all the non-Euclidean subgradient methods with momentum. 
However, this theoretical fix using error feedback degrades the performance of Muon in 
  two representative settings for image classification (CIFAR-10) and language modeling (nanoGPT on FineWeb-Edu 10B). 
Our conclusion is that convex Lipschitz theory, despite having a prominent role in the design of practical methods for deep learning, is not the most suited one for Muon.
This suggests that Muon's success must come from structure absent from this model, most plausibly related to smoothness.
\end{abstract}

\section{Introduction}

The Muon optimizer~\cite{muon,anthology}, together with related variants~\cite{polargrad,scion}, has recently emerged as a highly competitive optimization approach in a growing range of applications. These include training nanoGPT models at the 124M scale~\cite{modded_nanogpt_2024}, the nanochat challenge at 1.5B parameters~\cite{nanochat_challenge}, and  diffusion models~\cite{schaipp2025optimizationbenchmarkdiffusionmodels}. Muon has also been used in large-scale language model training, for example in the Moonlight 16B (3B active) MoE model~\cite{moonlight}, the 1T (32B active) parameters 
 Kimi frontier model~\cite{team2025kimi} and very recently the DeepSeek-V4 model~\cite{deepseekai2026deepseekv4} with 1.6T (49B active) parameters. These empirical results have generated substantial interest in understanding the theoretical properties of Muon and related spectral methods.

Muon combines spectral descent~\cite{carlson2015stochastic,carlson2015preconditioned} with momentum and a GPU-efficient approximation of the polar factor~\cite{amsel2025polar,muon}. Existing convergence analyses of Muon and related methods predominantly assume some form of smoothness; see \Cref{sec:prior-work} for a detailed review.

By contrast, one of the most influential theoretical settings for the design and analysis of optimization methods in deep learning has been the class of convex Lipschitz objectives. 
Adaptive methods such as AdaGrad~\cite{adagrad} were designed for convex and Lipschitz functions, which later gave rise to  RMSProp and Adam~\cite{adam}. It also extends naturally to matrix-valued methods such as Shampoo~\cite{pmlr-v80-gupta18a}, which can be viewed as a layerwise Kronecker approximation of full-matrix AdaGrad while still enjoying convergence guarantees in the  convex Lipschitz setting. More recently, Schaipp et al.~\cite{pmlr-v267-schaipp25a} showed that convex Lipschitz analyses can be surprisingly predictive of loss dynamics in modern deep learning practice. 
In the inaugural AlgoPerf competition~\cite{Dahl2023AlgoPerf}, the winning methods in the two rulesets were distributed Shampoo and Schedule-Free AdamW~\cite{Kasimbeg2025AlgoPerfResults}, both of which are closely tied to convex Lipschitz theory. In particular, Schedule-Free is motivated by optimal last-iterate convergence guarantees in the convex Lipschitz setting. Yet, despite the close conceptual connection between spectral descent and matrix-preconditioned methods, there is currently no convergence theory for Muon or spectral descent in the convex Lipschitz setting.

A central reason for this gap is that Muon may fail to converge on this class. 
We construct a convex Lipschitz objective, adapted from~\cite{pmlr-v97-karimireddy19a}, for which the iterates of Muon remain bounded away from the solution.
This lack of convergence holds even with a vanishing sequence of stepsizes, but requires an initialization in a narrow set.
We then strengthen this construction and show that even local convergence can fail for almost every initialization, provided the momentum parameter $\beta$ is chosen in $[0,1/2)$.
This stands in sharp contrast both to the non-Euclidean smooth setting~\cite[Theorem 4.6]{shen2025convergenceanalysisofmuon} and to the standard Euclidean nonsmooth setting~\cite[Theorem 9.9]{GarGow23}, for which anytime convergence rates are known when $\beta = 0$.

Our analysis relies on the observation that, on diagonal matrices, Muon is  equivalent to the signed momentum method. We prove this reduction in~\Cref{lem:muon-reduction}. This reduction is especially notable because signed momentum has also been studied as a simple proxy for Adam~\cite{balles2018dissecting,orvieto2025in}. Thus for diagonal matrices, Muon and Adam are in fact closely related methods. With this reduction, signed momentum becomes an exact specialization of Muon and therefore provides a tractable route to analysis. Building on this observation, we extend the counterexample of~\cite{pmlr-v97-karimireddy19a} for signed subgradient descent to prove non-convergence of signed momentum, and hence of Muon.

We then show that there is a modification of Muon that  converges in the convex Lipschitz setting.
We propose EF-Muon, a combination of spectral descent with error feedback~\cite{pmlr-v97-karimireddy19a}, and provide an anytime convergence rate guarantee for this method.
This EF modification can also be combined with any non-Euclidean descent method, resulting in a convergent method.
Empirically, however, this fix does not appear to help in the regimes where Muon is typically used.
Experiments on image classification (WideResNet-28-10 with 36M parameters on CIFAR-10~\cite{krizhevsky2009learning})
and language modeling (a nanoGPT-like transformer with 206M parameters on FineWeb-Edu~\cite{penedo2024fineweb}) tasks 
show that EF-Muon does not improve over standard Muon for practical problems, which are outside the convex Lipschitz class.

Taken together, our results suggest that the convex Lipschitz class might not be the right theoretical lens for understanding Muon. 
In this regime, much like Adam~\cite{ReddiKK18}, Muon itself can fail to converge, and modifying it to recover convergence does not yield a better practical method. This is striking because closely related optimization ideas, from AdaGrad and Shampoo to Schedule-Free, are all meaningfully informed by convex Lipschitz theory. For Muon, by contrast, the standard convex Lipschitz framework does not yield a straightforward positive theory for unmodified Muon, unlike for AdaGrad/Shampoo-style methods. Any convex Lipschitz theory for Muon must either modify the algorithm, restrict the problem class, or exploit assumptions beyond worst-case Lipschitz convexity.

\vspace{-0.1em}

\subsection{Contributions}
Our contributions can be summarized as follows. \\[0.25cm]
\noindent{\bf Reduction of Muon to signed momentum.} We exhibit a reduction of Muon to signed momentum in~\Cref{thm-muon-reduction-informal} for all functions that rely only on the diagonal of a matrix. \\[0.25cm]
\noindent{\bf Counterexamples for Muon.} 
We give Counterexamples~\ref{thm-counterex-dec-steps} and~\ref{Cex:offline stepsizes momentum} that show that Muon does not converge on the class of convex and Lipschitz functions.\\[0.25cm]
\noindent{\bf Non-Euclidean updates are compressions.} We show that the Muon update (specifically: regularized Muon) can be viewed as a compression operator, and that the update of all regularized non-Euclidean subgradient methods are compression operators~(\Cref{L:LMO gives compressor operator}). In the language of Nesterov~\cite{Nes12a}, this means that all sharp operators are, up to a multiplicative constant,  \emph{compression} operators \cite{BezHorRicSaf23}.\\[0.25cm]
\noindent{\bf Error feedback with momentum.} Using this interpretation as a compression, we develop the EF-M (error feedback with momentum) algorithm, which when combined with any non-Euclidean subgradient method results in a method that converges for convex and Lipschitz functions~(\Cref{thm:efm-convex}).

\section{Non-Euclidean subgradient methods}\label{sec-non-eclid-subgrad}

To simplify the discussion, we will consider a loss over a single linear layer. That is, let $W \in \R^{m\times n}$ be a matrix of parameters, and suppose our loss $f(W)\in \R$ is a function of this matrix, and our objective is to find $W$ that minimizes $f$.
Extensions to the setting such as a Cartesian product of matrices, which is the parameter space for deep learning, can be done by using an appropriate max norm over the Cartesian product space, see~\Cref{sec:Cartesian-compressor} and~\cite{anthology,Crawshaw2025}. 

 To define the local linearization of $f(W)$ we use the trace inner-product given by
\[
\dotprod{W, G} = \trace{W^\top G} = \sum_{ij} W_{ij} G_{ij}.
\]
This trace inner-product is equivalent to flattening the matrices $W$ and  $G$ into vectors, 
and then using the standard inner-product over vectors. 

\subsection{Two flavors of subgradient descent}

Let $W_t\in \R^{m\times n}$ be a given iterate at the $t$-th step. For a given norm $\|\cdot\|$ over matrices, we will consider  two variants of subgradient descent for minimizing $f(W)$. \emph{Regularized subgradient descent} is given by minimizing the local linearization with an additive regularizer
\begin{eqnarray}
         W_{t+1} &\in & \argmin{W \in \R^{m\times n} } \left ( f(W_t) + \dotprod{G_t,  W-W_t} +\frac{1}{2\lambda_t} \|W-W_t\|^2 \right ),   \label{eq:reg-gd}
\end{eqnarray}
whereas  \emph{constrained subgradient descent}  is given by minimizing the local linearization subject to a constraint
\begin{eqnarray}
         W_{t+1} &\in & \argmin{W \in \R^{m\times n} } \left (f(W_t) + \dotprod{G_t,  W-W_t}\right ) \quad \mbox{subject to} \quad \|W-W_t\|  \leq \lambda_t \label{eq:const-gd},
\end{eqnarray}
where $\lambda_t>0$ is the learning rate, and $G_t \in \partial f(W_t)$. 
Note that~\eqref{eq:reg-gd} and~\eqref{eq:const-gd} may have more than one solution, so to specify an algorithm one has to choose a particular solution $W_{t+1}$.

The question is, which norm over matrices should we use? 
In neural networks, matrix parameters $W$ represent linear layers that apply the transformation $x \mapsto Wx$, where $x$ denotes the current embedding or input vector. When stacking multiple linear layers with nonlinear activations, stable training often requires controlling the scale of the activations. In particular, the output $x' = Wx$ should remain bounded  so that $\|x'\|_2 = O(1)$ \cite{yang2024spectralconditionfeaturelearning}. One way to enforce this is to control both the weight matrix and the input embedding using
\[
\|Wx\|_2 \le \|W\|_{\op}\|x\|_2,
\]
where $\|W\|_{\op}$ denotes the operator (spectral) norm of $W$, defined as
\begin{equation}
\|W\|_{\op}
=
\sup_{\|x\|_2 =1}  \|Wx\|_2.
\label{eq:spectralnorm}
\end{equation} 
The norm of input $\|x\|_2$ is often bounded due to  some normalization layer, such as Layernorm~\cite{ba2016layernormalization} or RMSnorm~\cite{zhang2019root}.   
To incentivize the operator norm to remain bounded, we can use it to regularize~\eqref{eq:reg-gd}, or constrain~\eqref{eq:const-gd}, the update in subgradient descent. This leads to spectral subgradient descent.

\subsection{Muon and spectral subgradient descent}

Given a matrix $W$ we define its \emph{polar factor} as $\polar(W) := U V^\top$, where $W = U \Sigma V^\top$ is the reduced SVD decomposition of $W$.
If we use the operator norm $\|\cdot\| = \|\cdot\|_{\operatorname{op}}$ in constrained subgradient descent~\eqref{eq:const-gd}, then one of the solutions~\cite{scion} is given by 
\begin{align}
         W_{t+1} &=   W_t - \lambda_t  \polar(G_t),\label{eq:muon-const} \tag{specGD}
\end{align}
where $\lambda_t$ is the learning rate.
The above method is known as \emph{spectral subgradient descent}, and can be combined with momentum to give rise to the \emph{Muon} algorithm\footnote{We note that sometimes Muon uses a Nesterov style momentum~\cite{muon}, though for simplicity we consider only the Polyak style momentum.}:
\begin{align}
M_t &= \beta M_{t-1} + (1-\beta) G_t, 
\quad G_t \in \partial f(W_t),
\quad M_{-1}  =0,
\label{eq:muon} \tag{Muon}
 \\
W_{t+1} &= W_t - \lambda_t \polar(M_t), \nonumber
\end{align}
where $\beta \in [0, 1)$ is the momentum coefficient.
If we consider the regularized subgradient descent~\eqref{eq:reg-gd} instead of~\eqref{eq:const-gd}, then one obtains a similar algorithm~\cite{anthology,Crawshaw2025} dubbed \emph{regularized Muon} where the descent direction $\polar(M_t)$ is rescaled with the nuclear norm $\Vert M_t \Vert_{\nuc}$:
\begin{align}
M_t &= \beta M_{t-1} + (1-\beta) G_t, 
\quad G_t \in \partial f(W_t),
\quad M_{-1}  =0,
\label{eq:muon-reg} \tag{regMuon}
 \\
W_{t+1} &= W_t - \lambda_t \Vert M_t \Vert_{\nuc} \polar(M_t). \nonumber
\end{align}
Note that computing the polar factor in \eqref{eq:muon} can be done efficiently on GPUs without needing to compute the SVD.
Indeed the polar factor can be approximated by a matrix polynomial~\cite{muon,amsel2025polar}, which is how\footnote{As accessed on May 1, 2026.} Muon is  implemented in practice~\cite{modded_nanogpt_2024,nanochat_challenge}.

\section{Counterexamples for Muon on nonsmooth convex problems} \label{sec-counterex-muon}

We are going to show that Muon cannot converge in the convex nonsmooth setting. 
More precisely, that it is not possible to derive an anytime convergence result for the Muon algorithm over the class of convex Lipschitz functions.
The proofs of our claims can be found in \Cref{S:proofs counterexamples}.
Our main tool is the following convex Lipschitz function, which will serve as our main counterexample throughout.

\begin{definition}\label{def:counterexample matrix kinky}
Fix $m,n \geq 2$ and $c\in(0,1)$.
Define $f:\mathbb{R}^{m\times n}\to\mathbb{R}$ by
\begin{equation} \label{eq:kinky-function}
f(W) = c |W_{11}+W_{22}|+|W_{11}-W_{22}|.
\end{equation}
\end{definition}

This is a diagonal function: $f(W)$ depends only on the (first two) diagonal entries of $W$, and as such can naturally be identified with a function over $\mathbb{R}^2$.
In this setting, the Muon update simplifies considerably.
Indeed the polar factor of its subgradients $\polar(G_t)$ in \eqref{eq:muon} is obtained by taking the elementwise sign of $G_t$.
Consequently, Muon reduces exactly to signed momentum on this class of problems, as stated below; see \Cref{S:reduction muon matrix to vector} for the proofs.

\begin{proposition}[Informal]\label{thm-muon-reduction-informal}
If $f:\R^{m \times n} \to \R$ depends only on the main diagonal entries of its argument, 
then Muon 
is equivalent to the signed momentum algorithm.
\end{proposition}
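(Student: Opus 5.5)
The plan is an induction on $t$: the iterates stay of a fixed diagonal form, and on diagonal matrices the polar factor is just the elementwise sign. Let $k=\min(m,n)$ and let $d:\R^{m\times n}\to\R^k$ extract the main diagonal. Write $\Diag(x)$ for the $m\times n$ matrix with diagonal $x$ and zeros elsewhere. The hypothesis is that $f(W)=g(d(W))$ for some $g:\R^k\to\R$, which I take to be convex, or at least to have a subdifferential given by a chain rule.

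\textbf{Step 1 (subgradients).} Since $d$ is linear with adjoint $d^*(v)=\Diag(v)$, the chain rule for convex functions composed with linear maps gives
\[
\partial f(W)=\{\Diag(v): v\in\partial g(d(W))\}.
\]
So every subgradient $G_t$ is a diagonal matrix $\Diag(g_t)$ with $g_t\in\partial g(d(W_t))$. For nonconvex $f$ I would use the same statement for Clarke subgradients, or simply restrict the claim to the subgradient oracle that returns $\Diag(\nabla g)$.

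\textbf{Step 2 (momentum preserves diagonality).} By linearity and $M_{-1}=0$, induction gives $M_t=\Diag(m_t)$ with
\[
m_t=\beta m_{t-1}+(1-\beta)g_t .
\]
This is exactly the vector momentum recursion.

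\textbf{Step 3 (polar factor of a rectangular diagonal matrix).} This is the step that needs care. I would show that $\polar(\Diag(m))=\Diag(\sign(m))$, with the convention $\sign(0)=0$. Write $\Diag(m)=\sum_{i: m_i\neq 0}|m_i|\,\sign(m_i)\,e_i f_i^\top$, where $e_i$ and $f_i$ are the standard basis vectors of $\R^m$ and $\R^n$. This is a reduced SVD, with $U$ having columns $\sign(m_i)e_i$, $V$ having columns $f_i$, and $\Sigma=\diag(|m_i|)$, all over the indices where $m_i\neq 0$. Hence
\[
UV^\top=\sum_i \sign(m_i)\,e_if_i^\top .
\]
The obstacle is that $\polar$ is defined through an SVD, which is not unique when singular values repeat. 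I would resolve this with the standard fact that $UV^\top$ is independent of the choice of reduced SVD. One way to see it is the formula $UV^\top=W(W^\top W)^{\dagger/2}$. Alternatively, repeated singular values only rotate within the singular subspaces, and such rotations cancel in $UV^\top$. I would also remark that this matches the $\sign(0)=0$ convention for signed methods.

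\textbf{Step 4 (conclude).} Iterates starting from $W_0$ satisfy
\[
W_{t+1}=W_t-\lambda_t\Diag(\sign(m_t)).
\]
The off-diagonal entries never change, and
\[
d(W_{t+1})=d(W_t)-\lambda_t\sign(m_t).
\]
Since $f$ depends only on $d(W)$, both the sequence of diagonals and the function values coincide with those of signed momentum run on $g$ from $d(W_0)$. For the counterexample function in \Cref{def:counterexample matrix kinky}, the same argument applies with $g(x)=c|x_1+x_2|+|x_1-x_2|$; the remaining coordinates have zero gradient and sign, so they stay fixed. The regularized variant follows identically, with the extra factor $\|M_t\|_{\nuc}=\|m_t\|_1$.
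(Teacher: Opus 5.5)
Your proposal is correct and follows essentially the same route as the paper's proof of \Cref{lem:muon-reduction}. The chain rule $\partial f(W)=\Diag(\partial f_{\diag}(\diag(W)))$ makes every subgradient diagonal, the momentum stays diagonal by induction, and the explicit reduced SVD of \Cref{lem:polar-diag} gives $\polar(M_t)=\sign(M_t)$; your remark that $UV^\top=W(W^\top W)^{\dagger/2}$ does not depend on the chosen SVD is a welcome addition, since the paper takes the well-definedness of $\polar$ for granted.
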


As a consequence, for losses $f$ that only depend on the diagonal of $W$, we can use signed momentum to study the convergence of Muon. 
In particular, if we want to establish convergence of Muon on any class of functions, we should first study signed momentum since it is simpler.

In \cite{pmlr-v97-karimireddy19a} the authors study signed subgradient descent applied to the function $\hat f : \mathbb{R}^2 \to \mathbb{R}$, $\hat f(x,y) = c \vert x + y \vert + \vert x-y \vert$ which inspired our \Cref{def:counterexample matrix kinky}.
They show in \cite[Counterexample~2]{pmlr-v97-karimireddy19a} that signed subgradient descent  without momentum ($\beta=0$) may fail to converge, regardless of how the stepsize schedule is tuned.
Keeping in mind that spectral subgradient descent applied to our function $f$ is equivalent to signed subgradient descent on its diagonal elements, we can immediately deduce that Muon with no momentum ($\beta=0$) may fail to converge too.
But their counterexample suffers from two drawbacks, which we will improve upon.

First, their counterexample does not permit the use of \emph{momentum}, which is standard in the implementation of Muon.
We will show that the obstruction to convergence remains \emph{for every choice of momentum parameter} $\beta \in [0,1)$.

Second, their counterexample 
relies on using a subgradient that is not used in practice.
 To be precise, for the 
subgradient of the absolute value $\vert \cdot \vert$ evaluated at $0$ they use $+1$ or $-1$. Though this is a valid subgradient, 
it goes against standard implementations in PyTorch or TensorFlow, in which the returned subgradient is simply $0$.
We get rid of this technical assumption by showing that Muon will fail to converge \emph{no matter which subgradient is chosen}.

Our first Counterexample~\ref{thm-counterex-dec-steps} below shows that Muon cannot converge globally on the class of convex and Lipschitz functions when using an \emph{offline} decreasing stepsize schedule. 
See \Cref{fig-counterex-muon-trajectory} for an illustration of how the algorithm gets stuck, and see \Cref{S:proofs counterexamples} for the proofs.

\begin{restatable}[Muon cannot converge: offline stepsizes] {counterexample}{maincounterexample}\label{thm-counterex-dec-steps}  
Let $\beta \in [0, 1)$ and let $\{\lambda_t\}_{t=0}^\infty$ be a nonincreasing positive sequence chosen offline.
Let $f$ be as in \Cref{def:counterexample matrix kinky} with $c = \tfrac{1-\beta}{2}$, and consider the iterates $W_t$ of~\eqref{eq:muon} applied to $f$. 
Let $W^\star$ be any minimizer of $f$.

\hspace{1em} $\bullet$ 
If $\lim_{t \rightarrow \infty} \lambda_t > 0$, then there exists an open set of initializations such that  
\begin{equation} \label{eq:nonconv}
    \text{ for every $t \geq 0$,} \quad
    f(W_t)-\inf f \ge 1 - \beta 
    \quad \text{ and } \quad 
    \Vert W_t - W^\star \Vert_F \geq \frac{1 - \beta}{2}.
\end{equation}

\vspace{-0.5em} \hspace{1em} $\bullet$ 
If $\lambda_t$ is strictly decreasing to $0$, then there exists an initialization such that  \eqref{eq:nonconv} holds.
\end{restatable}

\begin{figure}[h!] 
\begin{center}
    \centering 
    \includegraphics[width=\textwidth]{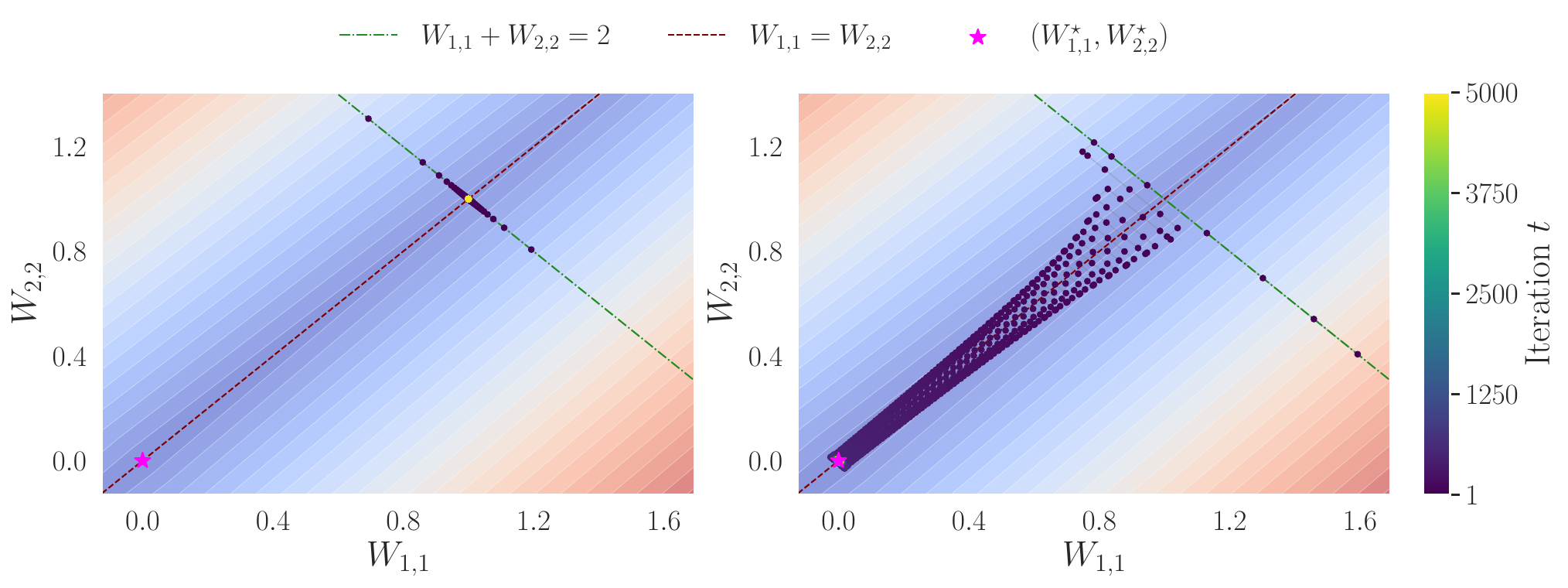}
    \caption{
    Level sets of $f$ given in \Cref{def:counterexample matrix kinky} in the $((W_t)_{1,1},(W_t)_{2,2})$ plane. \\
    Left: Muon iterates. 
    Right: EF-Muon iterates.
    } 
    \label{fig-counterex-muon-trajectory}
\end{center}
\end{figure}

We then relax our assumptions on the stepsizes in our second Counterexample \ref{Cex:offline stepsizes momentum}, in exchange for a restricted use of momentum:
$\beta \in [0, \tfrac{1}{2})$.
More precisely we make no monotonicity assumption on the stepsizes, allowing as well for adaptive schedules depending on the past subgradients.
This includes for instance stepsizes depending on $\|M_t\|_{\nuc}$, such as in \eqref{eq:muon-reg}.
Further, this second result stands for almost every starting point $W_0$, and is for instance true with probability one when the initialization is drawn from a standard continuous distribution.

\begin{restatable}[Muon cannot converge: adaptive stepsize and low momentum]{counterexample}{MaincounterExampleBeta} 
\label{Cex:offline stepsizes momentum}  
Consider  $\beta \in \left [0, \tfrac{1}{2}\right)$.
Let $f$ be as in \Cref{def:counterexample matrix kinky} with $c = \tfrac{1}{2} - \beta$.
Consider the iterates $W_t$ of~\eqref{eq:muon} or \eqref{eq:muon-reg} applied to $f$ where the stepsizes 
$\lambda_t$ are chosen as an offline function of the previous subgradients $G_0, \ldots, G_t$. 
Then there exists a set of zero measure such that, for any initialization outside of
this set, the iterates cannot converge to a minimizer of $f$.   
\end{restatable}

Counterexample~\ref{Cex:offline stepsizes momentum} shows that even a local convergence theory for Muon is not possible for convex and Lipschitz functions. 
More precisely, when $\beta \in [0,\tfrac{1}{2})$, almost every initialization in a ball centered at a minimizer $W^\star$ will fail to converge, no matter how small is the radius of this ball.
This applies in particular to \eqref{eq:muon-const}, which corresponds to the case $\beta = 0$.
There is a striking difference with the smooth setting, where \eqref{eq:muon-const} is known to converge with a suitable choice of stepsizes.

\section{Fixing Muon on Lipschitz nonsmooth functions}\label{sec-muon-ef}

Counterexample~\ref{thm-counterex-dec-steps} 
and~Counterexample~\ref{Cex:offline stepsizes momentum} show that  Muon does not converge on the class of convex and Lipschitz functions. 
Here we propose to modify Muon by incorporating an error feedback structure, which leads to an algorithm (named EF-Muon) enjoying global convergence for convex and Lipschitz functions.
Our approach has an even broader scope: we show that error feedback allows us to guarantee convergence \emph{for any non-Euclidean} subgradient descent scheme incorporating momentum. 

\subsection{From subgradient descent to compressed gradients via LMOs}

For any chosen norm $\| \cdot\|$, the regularized subgradient descent~\eqref{eq:reg-gd} combined with momentum can be equivalently rewritten \cite[Proposition 1]{anthology} in compact form
\begin{eqnarray}
        M_t &=& \beta M_{t-1} + (1- \beta) G_t, \quad G_t \in \partial f(W_t), \label{algo:sharp GD momentum} \\
         W_{t+1} &\in & W_t - \lambda_t   \|M_t\|_* \LMO_{\|\cdot\|}(M_t), \nonumber
\end{eqnarray}
where the \emph{linear maximization oracle} (LMO) and the \emph{dual norm} are respectively defined as
\begin{eqnarray*}
    \LMO_{\|\cdot\|}(M) &:=& \argmax{X \in \R^{m\times n} : \|X\|\leq 1} \langle X, M \rangle, 
    \\
    \|M\|_* &:=& \max_{X \in \R^{m\times n} : \|X\|\leq 1}  \langle X, M \rangle.  
\end{eqnarray*}
Formally the LMO is a set-valued operator, so in practice one must consider a particular selection in $\LMO_{\|\cdot\|}(M)$, a standard choice being the element with the least norm. 
For instance, when considering the operator norm, the least norm element of $\LMO_{\|\cdot\|_{\operatorname{op}}}(M)$ is exactly the polar factor $\polar(M)$, see~\Cref{lem-lmo-op-norm-least-norm-polar}. 
This illustrates that \eqref{eq:muon-reg} is a particular case of \eqref{algo:sharp GD momentum} for the operator norm.

The main operation in \eqref{algo:sharp GD momentum} is the computation of 
\begin{equation}\label{D:sharp operator}
    W \mapsto \|W\|_* \LMO_{\|\cdot\|}(W),
\end{equation}
which is also known (up to a change of sign) as the \emph{sharp operator} \cite{Nes12a}.
We show next that this sharp operator is, up to a multiplicative constant depending on the dimension of the problem, a \emph{compression} operator \cite{BezHorRicSaf23}.
This fact has been well-known for various choices of norms, 
but seems to be new when considering any arbitrary norm on a vector space.

\begin{restatable}[Dual subgradient compression operator]{proposition}{DualSubgradientCompression}
\label{L:LMO gives compressor operator}
Let $\|\cdot\|$ be any norm on $\mathbb{R}^{m\times n}$ with dual norm $\|\cdot\|_*$. 
Let $\alpha,\beta>0$ satisfy for all $W \in \mathbb{R}^{m\times n}$
\begin{equation} \label{eq:nor-equiv-alpha-beta}
\alpha \|W\|_F \le \|W\| \le \beta \|W\|_F.
\end{equation}
Then the operator $\mathcal{C}(W) := \alpha^2 \|W\|_*\LMO_{\|\cdot\|}(W)$
is a compression operator
, that is
\begin{equation} \label{eq:compressor-delta-alpha-beta}
\|W- \mathcal{C}(W)\|_F^2
\leq
\bigl(1-\delta\bigr)\|W\|_F^2,
\qquad
\delta=\frac{\alpha^2}{\beta^2}.
\end{equation}
In particular with the operator norm $\|\cdot\|_{\operatorname{op}}$ we can take $\alpha^2 = \delta = 1/r$ with $r = \min\{m,n\}$.
\end{restatable}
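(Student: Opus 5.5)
Let $X \in \LMO_{\|\cdot\|}(W)$ be any selection, so that $\|X\|\le 1$ and $\langle X, W\rangle = \|W\|_*$. Write $\mathcal{C}(W) = \alpha^2\|W\|_* X$. The plan is to expand the square and bound the two resulting terms separately:
\[
\|W-\mathcal{C}(W)\|_F^2 = \|W\|_F^2 - 2\alpha^2\|W\|_*\langle W, X\rangle + \alpha^4\|W\|_*^2\|X\|_F^2 .
\]
The cross term is exactly $-2\alpha^2\|W\|_*^2$. For the quadratic term we use the left inequality in \eqref{eq:nor-equiv-alpha-beta}: $\|X\|_F \le \|X\|/\alpha \le 1/\alpha$. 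Hence $\alpha^4\|W\|_*^2\|X\|_F^2 \le \alpha^2\|W\|_*^2$. Combining the two gives
\[
\|W-\mathcal{C}(W)\|_F^2 \le \|W\|_F^2 - \alpha^2\|W\|_*^2 .
\]

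It remains to lower bound $\|W\|_*$ in terms of $\|W\|_F$. For $W\neq 0$, take the test point $Y = W/(\beta\|W\|_F)$. By the right inequality in \eqref{eq:nor-equiv-alpha-beta}, $\|Y\|\le 1$, so
\[
\|W\|_* \ge \langle Y, W\rangle = \frac{\|W\|_F}{\beta}.
\]
Substituting, $\|W-\mathcal{C}(W)\|_F^2 \le (1-\alpha^2/\beta^2)\|W\|_F^2$, which is \eqref{eq:compressor-delta-alpha-beta}. The case $W=0$ is trivial. Note also that $\delta\le 1$ automatically, since $\alpha\le\beta$. No step is a genuine obstacle; the only care needed is to choose which side of the norm equivalence to use in each of the two bounds.

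For the operator norm, the standard inequalities $\|W\|_{\op}\le\|W\|_F\le\sqrt{r}\,\|W\|_{\op}$ give $\alpha = 1/\sqrt{r}$ and $\beta=1$. Hence $\alpha^2 = \delta = 1/r$. Any selection of the LMO works here, in particular $\polar(W)$ by \Cref{lem-lmo-op-norm-least-norm-polar}.
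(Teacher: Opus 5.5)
Your proof is correct and follows the paper's argument essentially step for step. You expand the square, use $\langle W, X\rangle = \|W\|_*$ for the cross term, and use the lower norm-equivalence bound to get $\|X\|_F \le 1/\alpha$; you then lower-bound $\|W\|_* \ge \|W\|_F/\beta$, where the test point $W/(\beta\|W\|_F)$ makes explicit what the paper calls ``by duality''. Finally, $\|W\|_{\operatorname{op}} \le \|W\|_F \le \sqrt{r}\,\|W\|_{\operatorname{op}}$ gives the operator-norm case exactly as in the paper's singular-value argument.
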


The above lemma allows us to draw a connection between Muon and the literature on compressed algorithms \cite{BezHorRicSaf23,DemMalSokRic23}.
Compressed algorithms are designed for solving optimization problems on distributed systems, which is a completely separate problem from ours.
This connection is nevertheless useful because this literature shares a similar point with us: it is known that some compressors may introduce such a bias that convergence may fail \cite{BezHorRicSaf23}, reminiscent of what we have learned in \Cref{sec-counterex-muon}.
More interesting, strategies were proposed for restoring convergence by modifying the algorithm.
One of them is the use of \emph{error feedback} \cite{SeiFuDroLiYu14,StiCorJag18,BezHorRicSaf23} which we detail and use next.

\subsection{Error feedback on top of non-Euclidean SGD with momentum}

Roughly speaking, error feedback  consists in keeping track of the discrepancy between a true direction $M_t$ and its compression $\mathcal{C}(M_t)$, and to use it in a later iteration so as to rectify the trajectory of the iterates.
We propose to incorporate error feedback into Muon, and more generally into \eqref{algo:sharp GD momentum} for any choice of norm, which gives rise to the EF-M algorithm below.
We do so following the ideas from \cite{pmlr-v97-karimireddy19a}  where error feedback was proposed to restore the convergence of sign subgradient descent, which corresponds in our framework to the choices $\Vert \cdot \Vert = \Vert \cdot \Vert_\infty$ and $\beta=0$.

\begin{algorithm}
\caption{Error feedback with momentum (EF-M)}
\label{alg:efm}
\begin{algorithmic}[1]
\Require stepsizes $\lambda_t>0$, momentum parameter $\beta\in[0,1)$, compressor $\cC:\R^{m\times n}\to\R^{m\times n}$
\State Initialize $W_0\in\R^{m\times n}$, $E_0=0$, and $M_{-1}=0$
\For{$t=0,1,\dots,T$}
    \State $G_t \in \R^{m\times n}$ \Comment{Sample a stochastic subgradient } 
    \State $
        M_{t}=\beta M_{t-1}+(1-\beta)G_t
    $ \Comment{ Update the momentum} 
    \State
    $
        P_t=E_t+\lambda_t M_{t}
    $  \Comment{Form the error-corrected message} 
    \State  
    $
        W_{t+1}=W_t-\cC(P_t)
    $ \Comment{Update the iterate with compressed weights}
    \State 
    $
        E_{t+1}=P_t-\cC(P_t)
    $ \Comment{Update the error feedback memory}
\EndFor
\end{algorithmic}
\end{algorithm}

We can use \Cref{L:LMO gives compressor operator} to instantiate \Cref{alg:efm} for various choices of norm.
When considering the operator norm $\Vert \cdot \Vert_{\operatorname{op}}$ on matrices, taking 
\begin{equation*}
    \mathcal{C}(W) = \frac{1}{\min\{m,n\}} \Vert W \Vert_{\nuc} \polar(W)
\end{equation*}
turns EF-M into EF-Muon, see \Cref{S:LMO compressors basic examples}.
The choice of the norm $\Vert \cdot \Vert_\infty$ on vectors gives EF-signSGD \cite{pmlr-v97-karimireddy19a}.
Since our framework is flexible enough to handle any normed vector space, we can also very easily specify \Cref{alg:efm} for product spaces of matrices associated with different layers.
We will use such variant of EF-Muon for our experiments in \Cref{sec-main-experiments}, and the corresponding technical details can be found in \Cref{sec:Cartesian-compressor} and \Cref{appdx-muonmax}.

Thanks to the error feedback mechanism in~\Cref{alg:efm}, we are able to restore the convergence of every non-Euclidean stochastic momentum method (e.g., spectral descent and Muon), on the class of convex and Lipschitz functions.
In \Cref{fig-counterex-muon-trajectory} we illustrate the convergence of EF-Muon on our counterexample from~\Cref{def:counterexample matrix kinky}.

\begin{restatable}[Anytime convex Lipschitz convergence of EF-M]{theorem}{MainTheoremEFM}
\label{thm:efm-convex}
Let  $f:\R^{m\times n}\to\R$ be convex and let $W^\star$ be a minimizer.
Let $\cC$ be a compression operator, meaning that for all $W\in\R^{m\times n}$,
\[
    \|\cC(W)-W\|_F^2\leq (1-\delta)\|W\|_F^2,
    \qquad \delta\in(0,1].
\]
Consider the iterates of \Cref{alg:efm}, and assume  that $G_t$ is a stochastic subgradient satisfying 
\[
    \mathbb{E}_t[G_t]\in \partial f(W_t),
    \qquad 
    \mathbb{E}_t\|G_t\|_F^2\leq \sigma^2.
\]
If we consider a vanishing schedule for the stepsizes $\lambda_t = 1/\sqrt{t+1}$, then
\[
    \mathbb{E} \left[ f(\bar{W}_T) \right] - f(W^\star) \leq \frac{\|W_0-W^\star\|_F^2}{2 \sqrt{T+1}} + \sigma^2 \left( \frac{2 \sqrt{1-\delta}}{\delta} + \frac{\beta}{1-\beta} + \frac{1}{2} \right) \frac{1 + \log(T+1)}{\sqrt{T+1}},
\]
where $\bar W_T=\frac{1}{T+1}\sum_{t=0}^T W_t$.
\end{restatable}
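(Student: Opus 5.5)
We follow the virtual-iterate technique of error feedback \cite{pmlr-v97-karimireddy19a, StiCorJag18}. Define $\tilde W_t := W_t - E_t$. From \Cref{alg:efm}, $W_{t+1} - E_{t+1} = W_t - \cC(P_t) - P_t + \cC(P_t) = W_t - E_t - \lambda_t M_t$. Hence
\[
\tilde W_{t+1} = \tilde W_t - \lambda_t M_t .
\]
So the virtual iterates perform uncompressed (Euclidean) SGD with momentum and stepsize $\lambda_t$. The proof then has three ingredients: a bound on the error memory $E_t$, a handling of momentum, and a standard telescoping argument on $\|\tilde W_t - W^\star\|_F^2$.

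\textbf{Step 1 (error bound).} Using the compression property and Young's inequality with parameter $\gamma$,
\[
\|E_{t+1}\|_F^2 \le (1-\delta)\|E_t + \lambda_t M_t\|_F^2 \le (1-\delta)\bigl((1+\gamma)\|E_t\|_F^2 + (1+1/\gamma)\lambda_t^2\|M_t\|_F^2\bigr).
\]
Since $M_t$ is a convex combination of past $G_s$, Jensen gives $\mathbb{E}\|M_t\|_F^2\le\sigma^2$. Choosing $\gamma=\delta/2$ and unrolling, with the fact that $\lambda_t$ is slowly varying ($\lambda_{t-k}/\lambda_t$ controlled), yields a bound of the form $\mathbb{E}\|E_t\|_F^2 \lesssim \frac{(1-\delta)}{\delta^2}\lambda_t^2\sigma^2$. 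For the stated constant $2\sqrt{1-\delta}/\delta$, what is actually needed is a bound on $\mathbb{E}\|E_t\|_F \cdot\|G_t\|_F$, so an estimate $\sqrt{\mathbb{E}\|E_t\|_F^2}\le \frac{2\sqrt{1-\delta}}{\delta}\lambda_{t}\sigma$ (up to an index shift) is the target. A cleaner route is to bound $\|E_t\|_F$ directly: $\|E_{t+1}\|_F\le\sqrt{1-\delta}(\|E_t\|_F+\lambda_t\|M_t\|_F)$, which unrolls to a geometric sum with ratio $\sqrt{1-\delta}$ and sum at most $\sqrt{1-\delta}/(1-\sqrt{1-\delta})\le 2\sqrt{1-\delta}/\delta$.

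\textbf{Step 2 (descent on virtual iterates).} Expand
\[
\|\tilde W_{t+1}-W^\star\|^2=\|\tilde W_t-W^\star\|^2-2\lambda_t\langle M_t,\tilde W_t-W^\star\rangle+\lambda_t^2\|M_t\|^2 .
\]
We need $\langle G_t, W_t - W^\star\rangle \ge f(W_t)-f(W^\star)$ in conditional expectation. To use it, we write $\langle M_t,\tilde W_t - W^\star\rangle = \langle G_t, W_t-W^\star\rangle - \langle G_t, E_t\rangle + \langle M_t - G_t, \tilde W_t - W^\star\rangle$.

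The momentum mismatch is the main obstacle. I would treat it by the standard trick of writing $M_t - G_t = \beta(M_{t-1}-G_t)$, or equivalently by introducing a second auxiliary sequence $Z_t = \tilde W_t - \frac{\beta}{1-\beta}\lambda_{t} M_{t-1}$ (exact when $\lambda_t$ is constant, with a correction of order $\lambda_{t-1}-\lambda_t$ otherwise). This sequence satisfies $Z_{t+1}\approx Z_t-\lambda_t G_t$, i.e. plain SGD. The deviation $\|Z_t - W_t\|\le \|E_t\|+\frac{\beta}{1-\beta}\lambda_t\|M_{t-1}\|$ then produces the $\frac{\beta}{1-\beta}\sigma^2\lambda_t$ term via Cauchy--Schwarz, $\mathbb{E}\|G_t\|\|M_{t-1}\|\le\sigma^2$. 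The cross term $\langle G_t,E_t\rangle$ similarly gives $\frac{2\sqrt{1-\delta}}{\delta}\sigma^2\lambda_t$, and $\lambda_t^2\|G_t\|^2/2$ gives the $\frac12\sigma^2$ term.

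\textbf{Step 3 (summation).} Divide by $2\lambda_t$ or, more simply, use $\lambda_t\ge\lambda_T$, and sum. The potential telescopes; the monotone decreasing stepsize lets us bound $\sum_t(\frac{1}{\lambda_t}-\frac{1}{\lambda_{t-1}})\|\cdot\|^2$, or alternatively weight by $\lambda_t$ and use $\sum\lambda_t\ge\sqrt{T+1}$. With $\sum_{t\le T}\lambda_t^2 = \sum 1/(t+1)\le 1+\log(T+1)$, we divide by $(T+1)\lambda_T=\sqrt{T+1}$ and apply Jensen on $\bar W_T$ to obtain the claimed bound. The delicate parts are making the time-varying-stepsize corrections in the auxiliary sequence $Z_t$ vanish or be absorbed, and keeping the exact constants. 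I would first verify the argument for constant $\lambda$ and then track the $\lambda_{t-1}-\lambda_t\ge0$ corrections, which have a favorable sign or telescope.
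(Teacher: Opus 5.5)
You have the virtual sequence $\widetilde W_t = W_t - E_t$ with $\widetilde W_{t+1} = \widetilde W_t - \lambda_t M_t$, the Cauchy--Schwarz treatment of the $\langle G_t, E_t\rangle$ and $\langle G_t, M_{t-1}\rangle$ cross terms, and the final weighting by a lower bound on the effective stepsize, all as in the paper. The gap is at the step you flag as delicate. Your claim that the stepsize corrections ``have a favorable sign or telescope'' is not justified. With $Z_t = \widetilde W_t - \frac{\beta}{1-\beta}\lambda_t M_{t-1}$ one gets exactly
\[
Z_{t+1} = Z_t - \bigl((1-\beta)\lambda_t + \beta\lambda_{t+1}\bigr) G_t + \frac{\beta^2}{1-\beta}(\lambda_t - \lambda_{t+1}) M_{t-1}.
\]
Your variant with $\lambda_{t-1}$ gives similarly $Z_{t+1} = Z_t - \lambda_t G_t + \frac{\beta}{1-\beta}(\lambda_{t-1}-\lambda_t) M_{t-1}$. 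Expanding $\|Z_{t+1}-W^\star\|_F^2$ therefore produces a cross term $2c_t\langle M_{t-1}, Z_t - W^\star\rangle$ with $c_t \geq 0$.

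This term has no sign. If anything it is typically positive, since $M_{t-1}$ averages subgradients and $\langle \mathbb{E}_s[G_s], W_s - W^\star\rangle \geq 0$ by convexity. It also cannot be bounded without a bound on $\|Z_t - W^\star\|_F$, and the theorem assumes no bounded domain. Summability of $c_t$ (e.g.\ $\sum_t c_t \le \frac{\beta^2}{1-\beta}\lambda_0$) helps only once the inner product is bounded. Closing the recursion by a Gronwall argument would introduce factors like $\exp\bigl(\frac{\beta^2}{1-\beta}\lambda_0\bigr)$ and extra additive terms, not the stated constants.

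The missing idea is to choose the coefficient of $M_{t-1}$ so that the recursion is exact for variable stepsizes. Fix the horizon $T$ and define backwards $b_{T+1} = \frac{\beta}{1-\beta}\lambda_T$ and $b_t = \beta(\lambda_t + b_{t+1})$, then set $Y_t = \widetilde W_t - b_t M_{t-1}$. Substituting $M_t = \beta M_{t-1} + (1-\beta)G_t$ gives $Y_{t+1} = Y_t - \eta_t G_t$ with $\eta_t = (1-\beta)(\lambda_t + b_{t+1})$, with no remainder. Unrolling shows $b_t \le \frac{\beta}{1-\beta}\lambda_t$, and that $\eta_t$ is a convex combination of $\lambda_t,\dots,\lambda_T$, so $\lambda_T \le \eta_t \le \lambda_t$; also $Y_0 = W_0$. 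Then run the plain SGD expansion for $Y_t$, use the decomposition $\langle S_t, Y_t - W^\star\rangle = \langle S_t, W_t - W^\star\rangle - \langle S_t, E_t\rangle - b_t\langle S_t, M_{t-1}\rangle$ with $S_t = \mathbb{E}_t[G_t]$, and weight by $\eta_t \geq \lambda_T$. This yields exactly the stated bound.

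A smaller issue is in your Step 1. The pointwise bound $\sqrt{\mathbb{E}\|E_t\|_F^2} \le \frac{2\sqrt{1-\delta}}{\delta}\lambda_t\sigma$ is not implied by the recursion for decreasing stepsizes, even up to an index shift, because the unrolled bound $\sigma\sum_{k\geq 1}(1-\delta)^{k/2}\lambda_{t-k}$ involves the larger stepsizes $\lambda_{t-k}$. You only need the sum $\sum_t \lambda_t\sqrt{\mathbb{E}\|E_t\|_F^2}$, though. Your norm recursion together with $\lambda_t\lambda_{t-k} \le \frac12(\lambda_t^2 + \lambda_{t-k}^2)$ bounds it by $\frac{\sqrt{1-\delta}}{1-\sqrt{1-\delta}}\sigma\sum_t\lambda_t^2 \le \frac{2\sqrt{1-\delta}}{\delta}\sigma\sum_t\lambda_t^2$, which recovers the paper's constant. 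The paper instead sums the squared recursion to get $\sum_t \mathbb{E}\|E_t\|_F^2 \le \frac{4(1-\delta)}{\delta^2}\sigma^2\sum_t\lambda_t^2$ and applies Cauchy--Schwarz in $t$.
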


In the Euclidean setting (i.e., $\mathcal{C} =\,$ identity  and $\delta = 1$),
\Cref{thm:efm-convex} recovers the standard convergence rate of SGD.
For general compressors, but turning off the momentum ($\beta=0$), \Cref{thm:efm-convex} recovers the convergence rate from  
\cite[Theorem III]{pmlr-v97-karimireddy19a}.
A more general result can be found in the appendix, allowing for any choice of stepsizes $\lambda_t$.
In the particular case of EF-Muon, we observe that \Cref{L:LMO gives compressor operator} suggests that the stepsizes should scale like $1/r$, where $r = \min\{m,n\}$.
This is in line with \cite[Theorem 4.6]{shen2025convergenceanalysisofmuon}, which is the only known convergence result for \eqref{eq:muon-const}, under the assumption that the stepsizes scale like $1/r$.

\section{Experiments}\label{sec-main-experiments}
In this section we aim to experimentally study the effects of error feedback on
Muon optimizer, by comparing Muon with EF-M method in~\Cref{alg:efm}.  
The results in the preceding sections were stated for a single parameter matrix. In applications, however, the loss function depends on a product space of parameters
typically matrices and vectors associated with different layers. We choose a norm over this product space (see~\Cref{sec:Cartesian-compressor} for details) so that the resulting regularized subgradient method~\eqref{algo:sharp GD momentum} gives a variant of the MuonMax method~\cite{Crawshaw2025,anthology}. MuonMax was in fact the first proposed variant of Muon~\cite{anthology}, and we choose this variant because in a systematic study over all variants of Muon, MuonMax was found to perform best among the regularized variants of Muon~\cite{Crawshaw2025}.

We then compare  MuonMax to   EF-MuonMax: the result of  our EF-M method  when paired with the same norm that results in MuonMax, 
see \Cref{appdx-muonmax}. We also benchmark against vanilla Muon which is given by~\eqref{eq:muon} on the matrix layers, and is often the most competitive variant of Muon~\cite{modded_nanogpt_2024}.
 All experiments use the \texttt{init2winit}~\cite{init2winit2026github} codebase on TPU v5 lite
pods with $2\times 4$ topology. We briefly describe both experiments below and leave more details to \Cref{sec-numerical-experiments} in the appendix. Each variant of Muon also needs to be combined with a \emph{paired optimizer}, which is used for updating the non-matrix parameters (e.g., biases). The choice of the paired optimizer depends on the task.

\textbf{Image classification.} We train a WideResNet-28-10 with 36M parameters
on CIFAR-10 dataset.
Both MuonMax and EF-MuonMax use signed momentum as the paired optimizer, 
while vanilla Muon uses SGD with momentum. We run 60 trials per method over different hyperparameter choices, and then choose the best resulting run. See~\Cref{sec:image-class} for details.

\textbf{Language modeling.} We train a nanoGPT model with 206M parameters on the FineWeb-Edu 10B dataset~\cite{penedo2024fineweb}.
We use NAdamW as the paired optimizer for all methods. 
For MuonMax and EF-MuonMax we use 30 trials per method to tune the hyperparameters. For vanilla Muon, we used the default hyperparameters from \texttt{init2winit}. These were tuned separately following the procedure outlined in~\cite{medapati_trai_neur_netw25}. We give more details in \Cref{sec-lang-modeling}.

\textbf{Results.}
The results are shown in \Cref{tab:combined-results} and 
\Cref{fig:finewebedu-nadam}.
On image classification, MuonMax achieves the lowest validation CE loss of $0.265$, slightly outperforming vanilla Muon $0.287$, 
while EF-MuonMax trails both at $0.388$.
On language modeling, the ordering shifts: vanilla Muon reaches the best validation CE loss of $3.01$, followed by MuonMax with $3.27$ and EF-MuonMax with $3.43$.
In both settings, EF-MuonMax is worse than MuonMax.

\begin{table}[h]
\centering
\small
\begin{tabular}{llccc}
\toprule
Task & Method & Learning rate & LR multiplier & Validation CE \\
\midrule
\multirow{3}{*}{Image classification}
& Vanilla Muon & $10^{-4}$            & $1$   & $0.287$ \\
& MuonMax      & $5\!\times\!10^{-4}$ & $100$ & \textbf{0.265} \\
& EF-MuonMax   & $5\!\times\!10^{-3}$ & $30$  & $0.388$ \\
\midrule
\multirow{3}{*}{Language modeling}
& Vanilla Muon & $1.36\!\times\!10^{-4}$ & $28.6$ & \textbf{3.01} \\
& MuonMax      & $0.01$                  & $30$   & $3.27$ \\
& EF-MuonMax   & $0.01$                  & $30$   & $3.43$ \\
\bottomrule
\end{tabular}
\vspace{0.5em}
\caption{Best configurations for each method on image classification and language modeling tasks. }
\label{tab:combined-results}
\end{table}

\begin{figure}[h!]
  \centering 
      \centering
      \includegraphics[width=\linewidth]{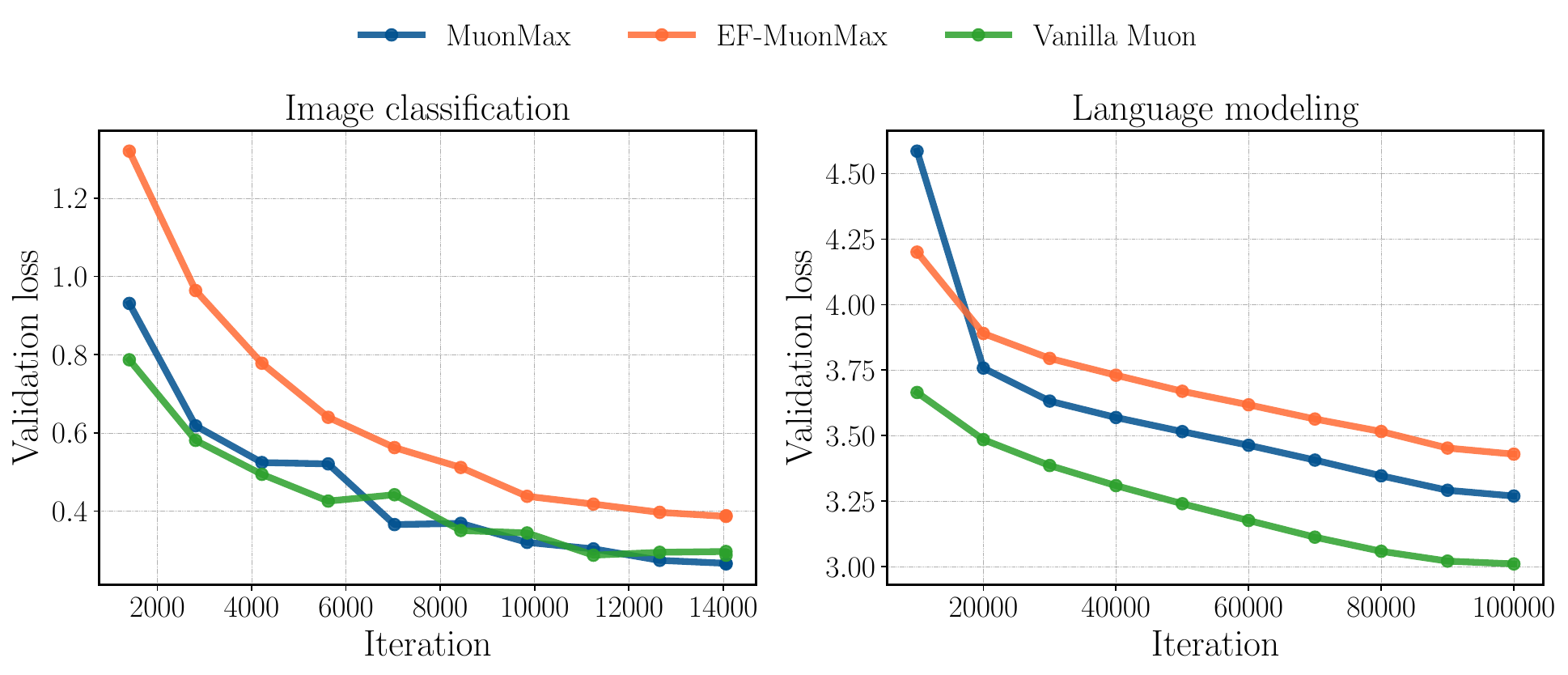}
      \caption{Validation CE loss of vanilla Muon, MuonMax and EF-MuonMax. Left: WideResNet-28-10 with 36M parameters on CIFAR-10. Right: nanoGPT with 206M parameters on FineWeb-Edu 10B.}
      \label{fig:finewebedu-nadam} 
\end{figure}

\paragraph{Discussion.}
These experiments do not support the error-feedback interpretation. Error
feedback fails to improve MuonMax on both tasks, contrary to what the convex
counterexamples and \Cref{thm:efm-convex} would suggest.
Therefore, it is not useful to view the LMO as a compression operator
whose error should be corrected by the feedback mechanism.

\section{Conclusions and future directions}

Our work is analogous in spirit to the early analysis of Adam by~\cite{ReddiKK18}, which showed that Adam can fail to converge on convex Lipschitz functions and proposed AMSGrad as a modification that restores convergence guarantees. In practice, however, AMSGrad has not displaced Adam, plausibly because its theoretical advantage does not consistently translate into empirical gains. We find a similar phenomenon for Muon: standard Muon can fail to converge on convex Lipschitz objectives, while an error-feedback modification restores a convergence guarantee but does not improve performance in our experiments. This suggests that worst-case convex Lipschitz theory, although useful as a diagnostic, is not by itself the right model for explaining Muon's empirical success.

Our negative results therefore point to the need for a different theoretical testbed for Muon. Convex smoothness is one natural candidate, and has already led to useful insights~\cite{davis2025spectral}; alternatively, the relevant assumptions may involve other forms of structure that are absent from the nonsmooth Lipschitz model. Regardless of the eventual model class, the reduction in~\Cref{thm-muon-reduction-informal} provides a useful diagnostic tool: any proposed explanation of Muon should be consistent with its exact specialization to signed momentum on diagonal matrices.

\section{Limitations} \label{sec:limitations}
Each counterexample comes with certain limitations. Counterexample~\ref{thm-counterex-dec-steps} requires decreasing stepsizes. This is fine for the variant of Muon in~\eqref{eq:muon}, which is the version most used in practice~\cite{modded_nanogpt_2024}, but it excludes the regularized variant of Muon~\eqref{eq:muon-reg}. Whereas our Counterexample~\ref{Cex:offline stepsizes momentum} holds for adaptive stepsize sequences, including \eqref{eq:muon-reg}, but not for the most used setting of $\beta$, since it only considers $\beta \in [0, \tfrac{1}{2}).$ A further limitation is that neither counterexample considers a scheduler for $\beta$. Some of the most interesting proofs of convergence for momentum in the stochastic setting hold for $\beta_t \uparrow 1$, see \cite{GarGow23}. Furthermore, though our counterexamples rule out a possible convergence result, they do not exclude the possibility of a complexity result, see~\Cref{asec:conv-complex-limit} for details of this distinction.
Finally, our work did not consider the approximate calculation of the polar factor~\cite{amsel2025polar,muon}, or using a Nesterov style momentum instead of Polyak style momentum.


\renewcommand*{\bibfont}{}
{ 
\printbibliography

@article{schaipp2025optimizationbenchmarkdiffusionmodels,
  title   = {Optimization benchmark for diffusion models on dynamical systems},
  author  = {Fabian Schaipp},
  year    = {2025},
  journal = {arXiv preprint arXiv:2510.19376},
  url     = {https://arxiv.org/abs/2510.19376}
}

@inproceedings{gruntkowska2025errorfeedbackmuonfriends,
  title     = {Error feedback for {{Muon}} and friends},
  author    = {Gruntkowska, Kaja and Gaponov, Alexander and Tovmasyan, Zhirayr and Richt{\'a}rik, Peter},
  booktitle = {The Fourteenth International Conference on Learning Representations},
  year      = {2026}
}

@book{Pey,
  title     = {Convex optimization in normed spaces},
  author    = {Peypouquet, Juan},
  year      = {2015},
  series    = {{{SpringerBriefs}} in {{Optimization}}},
  publisher = {Springer International Publishing},
  address   = {Cham},
  urldate   = {2015-04-24},
  isbn      = {978-3-319-13709-4 978-3-319-13710-0}
}

@article{adagrad,
  author    = {Duchi, John and Hazan, Elad and Singer, Yoram},
  title     = {Adaptive subgradient methods for online learning and stochastic optimization},
  year      = {2011},
  publisher = {JMLR.org},
  volume    = {12},
  journal   = {J. Mach. Learn. Res.},
  month     = jul,
  pages     = {2121–2159}
}

@book{golub13,
  author    = {Golub, Gene H. and van Loan, Charles F.},
  edition   = {Fourth},
  isbn      = {1421407949 9781421407944},
  publisher = {JHU Press},
  title     = {Matrix computations},
  url       = {http://www.cs.cornell.edu/cv/GVL4/golubandvanloan.htm},
  year      = {2013}
}

@article{yang2024spectralconditionfeaturelearning,
  title   = {A spectral condition for feature learning},
  author  = {Greg Yang and James B. Simon and Jeremy Bernstein},
  year    = {2024},
  journal = {arXiv preprint arXiv:2310.17813},
  url     = {https://arxiv.org/abs/2310.17813}
}

@article{ba2016layernormalization,
  title   = {Layer normalization},
  author  = {Jimmy Lei Ba and Jamie Ryan Kiros and Geoffrey E. Hinton},
  year    = {2016},
  journal = {arXiv preprint arXiv:1607.06450},
  url     = {https://arxiv.org/abs/1607.06450}
}

@inproceedings{balles2018dissecting,
  title     = {Dissecting {{Adam}}: The sign, magnitude and variance of stochastic gradients},
  author    = {Lukas Balles and Philipp Hennig},
  booktitle = {International Conference on Learning Representations},
  year      = {2018},
  url       = {https://openreview.net/forum?id=S1EwLkW0W}
}

@inproceedings{zhang2019root,
  title     = {Root mean square layer normalization},
  author    = {Zhang, Biao and Sennrich, Rico},
  booktitle = {Advances in Neural Information Processing Systems (NeurIPS)},
  year      = {2019}
}

@inproceedings{pmlr-v97-karimireddy19a,
  title     = {Error feedback fixes {{SignSGD}} and other gradient compression schemes},
  author    = {Karimireddy, Sai Praneeth and Rebjock, Quentin and Stich, Sebastian and Jaggi, Martin},
  booktitle = {Proceedings of the 36th International Conference on Machine Learning},
  pages     = {3252--3261},
  year      = {2019},
  volume    = {97},
  series    = {Proceedings of Machine Learning Research},
  publisher = {PMLR}
}

@inproceedings{orvieto2025in,
  title     = {In search of {{Adam}}'s secret sauce},
  author    = {Antonio Orvieto and Robert M. Gower},
  booktitle = {The Thirty-ninth Annual Conference on Neural Information Processing Systems},
  year      = {2025},
  url       = {https://openreview.net/forum?id=CH72XyZs4y}
}

@inproceedings{vyas2024soap,
  title     = {{{SOAP}}: Improving and stabilizing {{Shampoo}} using {{Adam}}},
  author    = {Nikhil Vyas and Depen Morwani and Rosie Zhao and Mujin Kwun and Itai Shapira and David Brandfonbrener Imber and Lucas Janson and Sham Kakade},
  booktitle = {Advances in Neural Information Processing Systems},
  volume    = {37},
  pages     = {43571--43602},
  year      = {2024}
}

@inproceedings{martens2015optimizing,
  title        = {Optimizing neural networks with {{Kronecker}}-factored approximate curvature},
  author       = {James Martens and Roger Grosse},
  booktitle    = {International Conference on Machine Learning},
  pages        = {2408--2417},
  year         = {2015},
  organization = {PMLR}
}

@InProceedings{pmlr-v80-gupta18a,
  title     = {{{Shampoo}}: Preconditioned stochastic tensor optimization},
  author    = {Vineet Gupta and Tomer Koren and Yoram Singer},
  booktitle = {Proceedings of the 35th International Conference on Machine Learning},
  pages     = {1842--1850},
  year      = {2018},
  volume    = {80},
  series    = {Proceedings of Machine Learning Research},
  publisher = {PMLR},
  url       = {https://proceedings.mlr.press/v80/gupta18a.html}
}

@article{StiCorJag18,
  title   = {Sparsified {{SGD}} with memory},
  author  = {Stich, Sebastian U. and Cordonnier, Jean-Baptiste and Jaggi, Martin},
  year    = {2018},
  journal = {Advances in Neural Information Processing Systems},
  volume  = {31}
}

@inproceedings{SeiFuDroLiYu14,
  title     = {1-bit stochastic gradient descent and its application to data-parallel distributed training of speech {{DNNs}}},
  booktitle = {Interspeech},
  author    = {Seide, Frank and Fu, Hao and Droppo, Jasha and Li, Gang and Yu, Dong},
  year      = {2014},
  volume    = {2014},
  pages     = {1058--1062},
  publisher = {Singapore}
}

@InProceedings{pmlr-v267-schaipp25a,
  title     = {The surprising agreement between convex optimization theory and learning-rate scheduling for large model training},
  author    = {Fabian Schaipp and Alexander H{\"a}gele and Adrien Taylor and Umut Simsekli and Francis Bach},
  booktitle = {Proceedings of the 42nd International Conference on Machine Learning},
  pages     = {53267--53294},
  year      = {2025},
  volume    = {267},
  series    = {Proceedings of Machine Learning Research},
  publisher = {PMLR},
  url       = {https://proceedings.mlr.press/v267/schaipp25a.html}
}

@inproceedings{ReddiKK18,
  author    = {Sashank J. Reddi and Satyen Kale and Sanjiv Kumar},
  booktitle = {International Conference on Learning Representations},
  title     = {On the convergence of {{Adam}} and beyond},
  year      = {2018}
}

@article{anthology,
  title   = {Old optimizer, new norm: An anthology},
  author  = {Jeremy Bernstein and Laker Newhouse},
  year    = {2024},
  journal = {arXiv preprint arXiv:2409.20325}
}

@misc{muon,
  author       = {Keller Jordan and Yuchen Jin and Vlado Boza and Jiacheng You and Franz Cesista and Laker Newhouse and Jeremy Bernstein},
  title        = {{Muon}: An optimizer for hidden layers in neural networks},
  year         = {2024},
  howpublished = {Blog post, retrieved May 2026.},
  url          = {https://kellerjordan.github.io/posts/muon/}
}

@misc{deepseekai2026deepseekv4,
  title        = {{DeepSeek-V4}: Towards Highly Efficient Million-Token Context Intelligence},
  author       = {{DeepSeek-AI}},
  year         = {2026},
  howpublished = {\url{https://huggingface.co/deepseek-ai/DeepSeek-V4-Pro/blob/main/DeepSeek_V4.pdf}},
  note         = {Technical report, hosted on Hugging Face}
}

@inproceedings{scion,
  title     = {Training deep learning models with norm-constrained {{LMOs}}},
  author    = {Pethick, Thomas and Xie, Wanyun and Antonakopoulos, Kimon and Zhu, Zhenyu and Silveti-Falls, Antonio and Cevher, Volkan},
  booktitle = {Proceedings of the 42nd International Conference on Machine Learning},
  series    = {Proceedings of Machine Learning Research},
  volume    = {267},
  pages     = {53230--53266},
  publisher = {PMLR},
  year      = {2025}
}

@inproceedings{adam,
  title     = {{Adam}: {A} Method for Stochastic Optimization},
  author    = {Kingma, Diederik P. and Ba, Jimmy},
  booktitle = {International Conference on Learning Representations (ICLR)},
  year      = {2015}
}

@InProceedings{carlson2015stochastic,
  title     = {Stochastic spectral descent for restricted {{Boltzmann}} machines},
  author    = {Carlson, David and Cevher, Volkan and Carin, Lawrence},
  booktitle = {Proceedings of the Eighteenth International Conference on Artificial Intelligence and Statistics},
  pages     = {111--119},
  year      = {2015},
  volume    = {38},
  series    = {Proceedings of Machine Learning Research},
  publisher = {PMLR}
}

@inproceedings{carlson2015preconditioned,
  author    = {Carlson, David E. and Collins, Edo and Hsieh, Ya-Ping and Carin, Lawrence and Cevher, Volkan},
  booktitle = {Advances in Neural Information Processing Systems},
  publisher = {Curran Associates, Inc.},
  title     = {Preconditioned spectral descent for deep learning},
  volume    = {28},
  year      = {2015}
}

@article{davis2025spectral,
  title   = {When do spectral gradient updates help in deep learning?},
  author  = {Davis, Damek and Drusvyatskiy, Dmitriy},
  journal = {arXiv preprint arXiv:2512.04299},
  year    = {2025}
}

@article{Crawshaw2025,
  author  = {Michael Timothy Crawshaw and Chirag Modi and Mingrui Liu and Robert M. Gower},
  title   = {An exploration of {{Non-Euclidean}} gradient descent: {{Muon}} and its many variants},
  journal = {arXiv preprint arXiv:2510.09827},
  year    = {2025}
}

@article{moonlight,
  title   = {{{Muon}} is scalable for {{LLM}} training},
  author  = {Liu, Jingyuan and Su, Jianlin and Yao, Xingcheng and Jiang, Zhejun and Lai, Guokun and Du, Yulun and Qin, Yidao and Xu, Weixin and Lu, Enzhe and Yan, Junjie and others},
  journal = {arXiv preprint arXiv:2502.16982},
  year    = {2025}
}

@article{polargrad,
  title   = {{{PolarGrad}}: A class of matrix-gradient optimizers from a unifying preconditioning perspective},
  author  = {Lau, Tim Tsz-Kit and Long, Qi and Su, Weijie},
  journal = {arXiv preprint arXiv:2505.21799},
  year    = {2025}
}

@article{kovalev2025understanding,
  title   = {Understanding gradient orthogonalization for deep learning via {{Non-Euclidean}} trust-region optimization},
  author  = {Kovalev, Dmitry},
  journal = {arXiv preprint arXiv:2503.12645},
  year    = {2025}
}

@inproceedings{penedo2024fineweb,
  title     = {The {{FineWeb}} datasets: Decanting the web for the finest text data at scale},
  author    = {Guilherme Penedo and Hynek Kydl{\'\i}{\v{c}}ek and Loubna Ben allal and Anton Lozhkov and Margaret Mitchell and Colin Raffel and Leandro Von Werra and Thomas Wolf},
  booktitle = {The Thirty-eight Conference on Neural Information Processing Systems Datasets and Benchmarks Track},
  year      = {2024},
  url       = {https://openreview.net/forum?id=n6SCkn2QaG}
}

@misc{modded_nanogpt_2024,
  title        = {{Modded-nanoGPT}: Speedrunning the {nanoGPT} baseline},
  author       = {Jordan, Keller and Bernstein, Jeremy and Rappazzo, Ben and Boza, Vlado and You, Jiacheng and Cesista, Franz and Koszarsky, Braden},
  year         = {2024},
  howpublished = {\url{https://github.com/KellerJordan/modded-nanogpt}},
  note         = {GitHub repository}
}

@misc{nanochat_challenge,
  author = {Andrej Karpathy},
  title = {nanochat: The best ChatGPT that \$100 can buy},
  year = {2025},
  publisher = {GitHub},
  url = {https://github.com/karpathy/nanochat}
}

@inproceedings{amsel2025polar,
  title     = {The {{Polar}} {{Express}}: Optimal matrix sign methods and their application to the {{Muon}} algorithm},
  author    = {Noah Amsel and David Persson and Christopher Musco and Robert M. Gower},
  booktitle = {The Fourteenth International Conference on Learning Representations},
  year      = {2026}
}

@article{krizhevsky2009learning,
  title     = {Learning multiple layers of features from tiny images},
  author    = {Krizhevsky, Alex and Hinton, Geoffrey and others},
  year      = {2009},
  publisher = {Toronto, ON, Canada}
}

@article{su2025isotropiccurvaturemodelunderstanding,
  title   = {Isotropic curvature model for understanding deep learning optimization: Is gradient orthogonalization optimal?},
  author  = {Weijie Su},
  year    = {2025},
  journal = {arXiv preprint arXiv:2511.00674},
  url     = {https://arxiv.org/abs/2511.00674}
}

@article{shen2025convergenceanalysisofmuon,
  title   = {On the convergence analysis of {{Muon}}},
  author  = {Wei Shen and Ruichuan Huang and Minhui Huang and Cong Shen and Jiawei Zhang},
  year    = {2025},
  journal = {arXiv preprint arXiv:2505.23737},
  url     = {https://arxiv.org/abs/2505.23737}
}

@article{li2025noteonconvergenceofmuon,
  title   = {A note on the convergence of {{Muon}}},
  author  = {Jiaxiang Li and Mingyi Hong},
  year    = {2025},
  journal = {arXiv preprint arXiv:2502.02900},
  url     = {https://arxiv.org/abs/2502.02900}
}

@article{sato2025criticalbatchmuon,
  title   = {Convergence bound and critical batch size of {{Muon}} optimizer},
  author  = {Sato, Naoki and Naganuma, Hiroki and Iiduka, Hideaki},
  year    = {2025},
  journal = {arXiv preprint arXiv:2507.01598},
  url     = {https://arxiv.org/abs/2507.01598}
}

@article{sfyraki2025lionsandmuons,
  title   = {Lions and {{Muon}}s: Optimization via stochastic {{Frank}}--{{Wolfe}}},
  author  = {Sfyraki, Maria-Eleni and Wang, Jun-Kun},
  year    = {2025},
  journal = {arXiv preprint arXiv:2506.04192},
  url     = {https://arxiv.org/abs/2506.04192}
}

@article{chang2025muonandbeyond,
  title   = {On the convergence of {{Muon}} and beyond},
  author  = {Chang, Da and Liu, Yongxiang and Yuan, Ganzhao},
  year    = {2025},
  journal = {arXiv preprint arXiv:2509.15816},
  url     = {https://arxiv.org/abs/2509.15816}
}

@inproceedings{xie2025tale,
  title     = {A tale of two geometries: Adaptive optimizers and {{Non-Euclidean}} descent},
  author    = {Xie, Shuo and Wang, Tianhao and Wu, Beining and Li, Zhiyuan},
  booktitle = {The Fourteenth International Conference on Learning Representations},
  year      = {2026}
}

@article{riabinin2026does,
  title   = {Where does warm-up come from? Adaptive scheduling for norm-constrained optimizers},
  author  = {Riabinin, Artem and Veprikov, Andrey and Bolatov, Arman and Tak{\'a}{\v{c}}, Martin and Beznosikov, Aleksandr},
  journal = {arXiv preprint arXiv:2602.05813},
  year    = {2026}
}

@article{jiang2026adaptive,
  title   = {Adaptive matrix online learning through smoothing with guarantees for nonsmooth nonconvex optimization},
  author  = {Jiang, Ruichen and Mhammedi, Zakaria and Mohri, Mehryar and Mokhtari, Aryan},
  journal = {arXiv preprint arXiv:2602.08232},
  year    = {2026}
}

@article{hazan2016introduction,
  title     = {Introduction to online convex optimization},
  author    = {Hazan, Elad},
  journal   = {Foundations and Trends in Optimization},
  volume    = {2},
  number    = {3-4},
  pages     = {157--325},
  year      = {2016},
  publisher = {Emerald Publishing Limited}
}

@article{tieleman2012lecture,
  title   = {Lecture 6.5-{{RMSProp}}, {{Coursera}}: Neural networks for machine learning},
  author  = {Tieleman, Tijmen and Hinton, Geoffrey},
  journal = {University of Toronto, Technical Report},
  volume  = {6},
  year    = {2012}
}

@article{GarGow23,
  title   = {Handbook of convergence theorems for (stochastic) gradient methods},
  author  = {Garrigos, Guillaume and Gower, Robert M.},
  year    = {2023},
  journal = {arXiv preprint arXiv:2301.11235}
}

@article{team2025kimi,
  title   = {{{Kimi K2}}: Open agentic intelligence},
  author  = {Team, Kimi and Bai, Yifan and Bao, Yiping and Charles, Y and Chen, Cheng and Chen, Guanduo and Chen, Haiting and Chen, Huarong and Chen, Jiahao and Chen, Ningxin and others},
  journal = {arXiv preprint arXiv:2507.20534},
  year    = {2025}
}

@article{watson1992characterization,
  title   = {Characterization of the subdifferential of some matrix norms},
  author  = {Watson, G Alistair},
  journal = {Linear algebra and its applications},
  volume  = {170},
  number  = {1},
  pages   = {33--45},
  year    = {1992}
}

@article{Dahl2023AlgoPerf,
  title   = {Benchmarking neural network training algorithms},
  author  = {Dahl, George E. and Schneider, Frank and Nado, Zachary and Agarwal, Naman and Sastry, Chandramouli Shama and Hennig, Philipp and Medapati, Sourabh and Eschenhagen, Runa and Kasimbeg, Priya and Suo, Daniel and Bae, Juhan and Gilmer, Justin and Peirson, Abel L. and Khan, Bilal and Anil, Rohan and Rabbat, Mike and Krishnan, Shankar and Snider, Daniel and Amid, Ehsan and Kongtao Chen and Chris J. Maddison and Rakshith Vasudev and Michal Badura and Ankush Garg and Peter Mattson},
  year    = {2023},
  journal = {arXiv preprint arXiv:2306.07179}
}

@inproceedings{Kasimbeg2025AlgoPerfResults,
  title     = {Accelerating neural network training: An analysis of the {{AlgoPerf}} competition},
  author    = {Kasimbeg, Priya and Schneider, Frank and Eschenhagen, Runa and Bae, Juhan and Sastry, Chandramouli Shama and Saroufim, Mark and Boyuan, Feng and Wright, Less and Yang, Edward Z. and Nado, Zachary and Medapati, Sourabh and Hennig, Philipp and Rabbat, Michael and Dahl, George E.},
  booktitle = {The Thirteenth International Conference on Learning Representations},
  year      = {2025},
  url       = {https://openreview.net/forum?id=CtM5xjRSfm}
}

@article{medapati_trai_neur_netw25,
  author  = {Sourabh Medapati and Priya Kasimbeg and Shankar Krishnan and Naman Agarwal and George Dahl},
  journal = {arXiv preprint arXiv:2503.03986},
  title   = {Training neural networks faster with minimal tuning using pre-computed lists of hyperparameters for {{NAdamW}}},
  url     = {https://arxiv.org/abs/2503.03986},
  year    = {2025}
}

@article{dozat2016incorporating,
  title   = {Incorporating {{Nesterov}} momentum into {{Adam}}},
  author  = {Dozat, Timothy},
  year    = {2016},
  url     = {https://openreview.net/pdf/OM0jvwB8jIp57ZJjtNEZ.pdf},
  journal = {ICLR 2016 Workshop Track}
}

@software{init2winit2026github,
  author = {Justin M. Gilmer and George E. Dahl and Zachary Nado and Priya Kasimbeg and Sourabh Medapati and Ahmed Khaled},
  title = {{init2winit}: a JAX codebase for initialization, optimization, and tuning research},
  url = {http://github.com/google/init2winit},
  version = {0.0.3},
  year = {2026},
}

@article{Nes12a,
  title     = {Efficiency of coordinate descent methods on huge-scale optimization problems},
  author    = {Nesterov, Yurii},
  year      = {2012},
  month     = jan,
  journal   = {SIAM Journal on Optimization},
  volume    = {22},
  number    = {2},
  pages     = {341--362},
  publisher = {Society for Industrial and Applied Mathematics}
}

@article{BezHorRicSaf23,
  title   = {On biased compression for distributed learning},
  author  = {Beznosikov, Aleksandr and Horv{\'a}th, Samuel and Peter Richt{\'a}rik and Safaryan, Mher},
  year    = {2023},
  journal = {Journal of Machine Learning Research},
  volume  = {24},
  number  = {276},
  pages   = {1--50}
}

@article{DemMalSokRic23,
  title   = {A guide through the zoo of biased {{SGD}}},
  author  = {Demidovich, Yury and Malinovsky, Grigory and Sokolov, Igor and Peter Richt{\'a}rik},
  year    = {2023},
  journal = {Advances in Neural Information Processing Systems},
  volume  = {36},
  pages   = {23158--23171}
}
}


\clearpage
\appendix
\part*{Supplementary Material}

\tableofcontents
\clearpage
\section{Prior work}
\label{sec:prior-work}

Here we provide a more extensive review of prior work on Muon and spectral descent. 
\paragraph{Spectral descent and Muon.}
General idea behind Muon~\cite{muon} is based on spectral descent, which corresponds to taking the update using steepest descent over spectral norm ball instead of Euclidean norm~\cite{carlson2015stochastic,carlson2015preconditioned}.
Recent work constructs modular norms which allow one to apply steepest descent under different norms to neural networks with many layers~\cite{anthology}.
Muon and its variants, e.g., Scion and PolarGrad, can also be interpreted through the same lens~\cite{muon,scion,polargrad}.

\paragraph{Smooth analyses.}
Existing convergence results for Muon rely on smoothness assumption.
In~\cite{davis2025spectral} authors study conditions under which a spectral update gives a larger one-step decrease than a Euclidean update. 
Su~\cite{su2025isotropiccurvaturemodelunderstanding} analyzes orthogonalized updates through an isotropic curvature model and shows when spectrum homogenization and full orthogonalization are favored. 
Kovalev~\cite{kovalev2025understanding} interprets subgradient orthogonalization as a non-Euclidean trust-region method and analyzes stochastic momentum variants that recover Muon.

Several papers also prove convergence guarantees for smooth Muon variants. 
Li et al.~\cite{li2025noteonconvergenceofmuon} study both constrained and regularized formulations. 
Shen et al.~\cite{shen2025convergenceanalysisofmuon} analyze constrained Muon with and without momentum, including a star-convex setting. 
Sato et al.~\cite{sato2025criticalbatchmuon} consider momentum, Nesterov acceleration, and weight decay, with an emphasis on batch-size effects. 
Sfyraki et al.~\cite{sfyraki2025lionsandmuons} study constrained Muon together with weight decay, gradient clipping, and variance reduction. 
Chang et al.~\cite{chang2025muonandbeyond} analyze variance-reduced Muon and also obtain stronger guarantees under additional growth conditions. 
Xie et al.~\cite{xie2025tale} consider adaptive smoothness and adaptive variance in a non-Euclidean setting. 
Under a generalized smoothness assumption Takáč et al. \cite{riabinin2026does}  examined adaptive scheduling for norm-constrained optimizers, including Muon, and its implications for warm-up strategies.
One of the few references that consider the nonsmooth setting is \cite{jiang2026adaptive} where they analyzed smoothed variants of spectral descent with guarantees for nonsmooth and nonconvex optimization. 

These works are complementary to ours. 
They explain why Muon can perform well in smooth regimes; our focus is the convex and Lipschitz function class, which also includes the nonsmooth functions.

\paragraph{Convex nonsmooth analyses and adaptive methods.}
The convex Lipschitz model is one of the fundamental models that gave rise to the development and analysis of modern adaptive first-order methods.
AdaGrad~\cite{adagrad} is an adaptive subgradient method for online
learning and stochastic optimization based on follow-the-regularized-leader family of algorithms~\cite{hazan2016introduction}.
Adam~\cite{adam} combines ideas from AdaGrad 
and RMSProp~\cite{tieleman2012lecture}, and has become a standard optimizer choice in deep learning.

This perspective also extends to matrix-based optimizers. Shampoo \cite{pmlr-v80-gupta18a} can be viewed as
a matrix version of AdaGrad which replaces full-matrix preconditioner by a Kronecker product, with regret-based convergence analysis.
K-FAC~\cite{martens2015optimizing} is derived by
approximating natural-gradient methods: it uses Kronecker-factored
approximations of Fisher (or Gauss-Newton) curvature blocks to define a
preconditioned gradient step. SOAP~\cite{vyas2024soap} combines these ideas with
Adam-style adaptivity, using Shampoo's eigenspaces while applying
coordinatewise scaling in the rotated basis.

Thus, although Shampoo, K-FAC, and SOAP are matrix-based, they remain
adaptive preconditioning methods: they change the metric in which gradients are
measured.
Muon is different, as it relies on the orthogonalization of the update direction.

More recently, Schaipp et al.~\cite{pmlr-v267-schaipp25a} showed that analyses under the convex and Lipschitz model are particularly informative for understanding training dynamics in deep learning.
Therefore, it is also important to study Muon under the model that led to the development of
widely used adaptive optimizers.

\paragraph{Error feedback.}
In \cite{pmlr-v97-karimireddy19a} authors gave convex and Lipschitz counterexample for sign subgradient descent, and proved that sign subgradient descent with error feedback mechanism restores convergence. 
More recently, \cite{gruntkowska2025errorfeedbackmuonfriends} extended error feedback analysis to Muon and LMO-based methods to distributed setting under the generalized smoothness assumptions. In particular, they use compression operator for communication between workers and error feedback to guarantee convergence in distributed setting.
In contrast, we interpret LMO itself as compression operator and use error feedback to guarantee convergence in the nonsmooth setting.

\section{Proofs in \Cref{sec-counterex-muon}}

\subsection{Signed momentum}

When our parameters are vectors $w \in \R^d$, a  special case of non-Euclidean subgradient descent is given by \emph{signed subgradient descent}. 
Indeed, when we use the infinity norm $\|w\|_{\infty} := \max_{i=1,\ldots, d} |w_i|$ in~\eqref{eq:const-gd}, then one realization of the algorithm is given by
\begin{align} \label{eq:signgd}
    w_{t+1} &= w_t - \lambda_t \sign(g_t), 
    \qquad g_t \in \partial f(w_t).
    \tag{signGD}
\end{align}
Here $\sign$ is defined as the standard sign operator 
\begin{equation} \label{eq:sign0}
    \sign(x) =
\begin{cases}
+1 &   x>0\\
 0 &   x=0\\
-1 &   x<0,
\end{cases}
\end{equation}
and in what follows we will allow ourselves to apply it elementwise to vectors or matrices while keeping the same overloaded notation.
Signed subgradient descent can be combined with momentum, giving rise to the \emph{signed momentum} algorithm (or Signum), defined for some momentum parameter $\beta \in [0, 1)$ as
\begin{align}  
 m_t & = \beta m_{t-1} + (1-\beta) g_t , 
 \quad g_t \in \partial f(w_t), 
 \quad m_{-1} = 0,
 \label{eq:signum} \tag{signMomentum} 
 \\
w_{t+1} &= w_t - \lambda_t \sign(m_t). \nonumber
\end{align}
Note that \eqref{eq:signum} can also be applied to matrices $W \in \mathbb{R}^{m \times n}$ seen as vectors: in this case the sign operation $\sign(M_t)$ is applied on every coefficient of the matrix.

\subsection{A reduction of Muon to signed momentum for diagonal functions}\label{S:reduction muon matrix to vector}

Before talking about diagonal functions, let us set some definitions regarding diagonal rectangular matrices.
Given a  vector $a\in\mathbb{R}^{\min\{n,m\}}$, we define the diagonal matrix $\Diag(a) \in \mathbb{R}^{m \times n}$ via
\begin{equation*} 
\Diag(a)_{ij} = \begin{cases}
    0 & i\neq j \\
    a_i & i=j,
\end{cases} 
\qquad i=1, \ldots, m, \quad j=1, \ldots, n.
\end{equation*}
We further say that a rectangular matrix  
$A\in\mathbb{R}^{m\times n}$ is \emph{diagonal} if $A = \Diag(a)$ for some vector $a\in\mathbb{R}^{\min\{n,m\}}$. 
We also introduce the adjoint operator $\diag : \mathbb{R}^{m \times n} \to \mathbb{R}^{\min\{m,n\}}$, where $\diag(A)$ returns the vector composed of the diagonal elements of $A$.
A first important fact is that computing the polar factor of a diagonal matrix is the same as applying the $\sign$ operator component-wise to this matrix.

\begin{lemma}[Polar factor of a rectangular diagonal matrix]\label{lem:polar-diag}
 
Let $A\in\mathbb{R}^{m\times n}$ be a diagonal matrix.
Then
\[
\polar(A)=\sign(A). 
\] 
\end{lemma}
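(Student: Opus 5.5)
We will exhibit an explicit reduced SVD of $A$ and read off the polar factor. The only subtlety is that zero diagonal entries must be dropped, since the reduced SVD only keeps nonzero singular values. This is precisely what makes $\polar(A)$ coincide with the sign convention $\sign(0)=0$ of \eqref{eq:sign0}.

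Write $A=\Diag(a)$ with $a\in\mathbb{R}^{r}$, $r=\min\{m,n\}$. Let $S=\{i : a_i\neq 0\}=\{i_1,\dots,i_k\}$. Denote by $e_i\in\mathbb{R}^m$ and $f_i\in\mathbb{R}^n$ the canonical basis vectors. Define
\[
U=\bigl[\sign(a_{i_1})e_{i_1},\dots,\sign(a_{i_k})e_{i_k}\bigr]\in\mathbb{R}^{m\times k},\quad
V=\bigl[f_{i_1},\dots,f_{i_k}\bigr]\in\mathbb{R}^{n\times k},
\]
and $\Sigma=\diag(|a_{i_1}|,\dots,|a_{i_k}|)$. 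The columns of $U$ and of $V$ are orthonormal, since they are signed distinct canonical vectors. The diagonal matrix $\Sigma$ has positive entries. Moreover,
\[
U\Sigma V^\top=\sum_{l=1}^k \sign(a_{i_l})|a_{i_l}|\,e_{i_l}f_{i_l}^\top=\sum_{i\in S}a_i e_i f_i^\top=A.
\]
Hence $U\Sigma V^\top$ is a reduced (compact) SVD of $A$. We then compute
\[
UV^\top=\sum_{i\in S}\sign(a_i)e_if_i^\top,
\]
which is the matrix with entries $\sign(a_i)$ in position $(i,i)$ and $0$ elsewhere. This is exactly $\sign(A)$ elementwise, because off-diagonal entries and zero diagonal entries of $A$ have sign $0$.

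The main point requiring care is that $\polar$ is well defined, independent of which reduced SVD is chosen. Ordering the singular values decreasingly is also a non-issue: permuting columns of $U,V,\Sigma$ simultaneously leaves $UV^\top$ unchanged. We will invoke the standard fact that $UV^\top$ is unique for the compact SVD. The reason is that $UV^\top$ equals $A(A^\top A)^{\dagger 1/2}$, which depends only on $A$. Alternatively, singular vectors of a repeated singular value $\sigma$ can only be changed by $U_\sigma Q$, $V_\sigma Q$ with $Q$ orthogonal, and this leaves $U_\sigma V_\sigma^\top$ invariant. With this, the explicit SVD above suffices. The case $A=0$ ($k=0$) gives $\polar(A)=0=\sign(A)$ trivially.
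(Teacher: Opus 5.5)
Your proposal is correct and follows the paper's proof: an explicit reduced SVD that drops the zero diagonal entries, uses signed canonical vectors as the columns of $U$, and reads off $UV^\top=\sign(A)$. Your extra remark that $UV^\top$ does not depend on which reduced SVD is chosen is a valid point that the paper leaves implicit.
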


\begin{proof}[Proof of Lemma~\ref{lem:polar-diag}.] 
Let $p = \min\{m,n\}$ and $A=\Diag(a)$ for some $a \in \mathbb{R}^p$.
Let $I=\{i\in\{1,\dots,p\} \mid a_i\neq 0\}$ and $r=|I|=\operatorname{rank}(A)$.
Index $I=\{i_1,\dots,i_r\}$.
Let $e_1', \dots, e_m'$ be the canonical basis of $\mathbb{R}^m$, and $e_1, \dots, e_n$ be the canonical basis of $\mathbb{R}^n$.
Define $U\in\mathbb{R}^{m\times r}$ and $V\in\mathbb{R}^{n\times r}$ via their columns: 
\[
u_j=\sign(a_{i_j})e_{i_j}'
\in \mathbb{R}^m,
\qquad v_j=e_{i_j} \in \mathbb{R}^n,
\qquad j=1,\dots,r,
\]
and set $\Sigma=\diag(|a_{i_1}|,\dots,|a_{i_r}|)\in\mathbb{R}^{r\times r}$.
Then  
\[
U\Sigma V^\top=\sum_{j=1}^r \sign(a_{i_j})|a_{i_j}| e_{i_j}'e_{i_j}^\top
=\sum_{j=1}^r a_{i_j}e_{i_j}'e_{i_j}^\top
= A,
\]
so $A=U\Sigma V^\top$ is a reduced SVD of $A$.
Therefore, since coordinates with $a_i=0$ do not appear in the reduced SVD
\begin{eqnarray*}
    \polar(A)&=&UV^\top \\
    &=&\sum_{j=1}^r \sign(a_{i_j})e_{i_j}'e_{i_j}^\top \\
    &=& \sum_{i=1}^p \sign(a_{i})e_{i}'e_{i}^\top \\
    &=& \Diag(\sign(a_1),\dots,\sign(a_{p})) \\
    &=& \sign(A),
\end{eqnarray*}
where the last equality comes from the fact that $A$ has zero coefficients outside its diagonal, and that $\sign(0)=0$.
\end{proof}

Our main reduction result will show that~\eqref{eq:muon} and~\eqref{eq:signum} are equivalent for diagonal functions. 
This is the formal statement corresponding to our informal \Cref{thm-muon-reduction-informal}.

\begin{restatable}[Muon reduction]{lemma}{MuonReduction}
\label{lem:muon-reduction}
Let $f: \R^{m\times n} \to \R$ be a 
\emph{diagonal function}, in the sense that 
\[f(W) \; = \; f_{\diag{}}(\diag(W))\]
for some  function $f_{\diag{}}: \R^{\min\{m, n\}} \to \R$.
Assume that $f_{\diag}$ is a convex and continuous function. 
If $W_t$ are the iterates of~\eqref{eq:muon} applied to $f$, then they also follow the~\eqref{eq:signum} algorithm applied to $f$.

\end{restatable}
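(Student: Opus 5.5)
The plan is to show that the subgradients of a diagonal function can always be taken diagonal, so that the momentum matrices $M_t$ stay diagonal; \Cref{lem:polar-diag} then turns $\polar(M_t)$ into $\sign(M_t)$ at every step. The point needing care is the \emph{choice} of subgradient. We must argue that $\partial f(W) = \{\Diag(g) : g \in \partial f_{\diag}(\diag(W))\}$. Then every admissible $G_t$ is diagonal, whichever element of the subdifferential is selected. Moreover, a sequence $G_t \in \partial f(W_t)$ used by \eqref{eq:muon} is also a valid sequence for \eqref{eq:signum}, and the two algorithms can be fed the same subgradients.

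\textbf{Step 1 (subdifferential of a diagonal function).} Write $f = f_{\diag}\circ \diag$, where $\diag:\R^{m\times n}\to\R^{p}$ with $p=\min\{m,n\}$ is linear and surjective, and its adjoint for the trace inner product is $\Diag$. Since $f_{\diag}$ is convex and continuous (hence finite everywhere), the chain rule for convex functions composed with linear maps gives $\partial f(W) = \Diag\bigl(\partial f_{\diag}(\diag(W))\bigr)$. No qualification condition is needed, because $f_{\diag}$ is continuous on all of $\R^p$.

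I expect this to be the main (mild) obstacle, since one wants equality and not just inclusion. If I prefer to avoid citing the chain rule, I will prove the inclusion ``$\subseteq$'' directly, and this is the direction that matters.
\begin{itemize}[label={}, leftmargin=0pt]
\item Take $G\in\partial f(W)$ and test the subgradient inequality at $W + E_{ij}$ with $i\neq j$. Since $f$ does not depend on off-diagonal entries, this gives $G_{ij}\le 0$. Testing at $W - E_{ij}$ gives $G_{ij}\ge 0$, so $G_{ij}=0$.
\item Testing at $W+\Diag(h)$ then shows $\diag(G)\in\partial f_{\diag}(\diag(W))$.
\end{itemize}
The reverse inclusion is immediate from $\langle \Diag(g), V-W\rangle = \langle g, \diag(V)-\diag(W)\rangle$.

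\textbf{Step 2 (induction).} By induction on $t$, the momentum $M_t$ is diagonal for every $t$. We have $M_{-1}=0$, each $G_t$ is diagonal by Step 1, and a linear combination of diagonal matrices is diagonal.

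\textbf{Step 3 (conclusion).} Applying \Cref{lem:polar-diag}, we get $\polar(M_t)=\sign(M_t)$. Hence $W_{t+1}=W_t-\lambda_t\polar(M_t)=W_t-\lambda_t\sign(M_t)$, with the same $M_t$ recursion and the same $G_t\in\partial f(W_t)$. This is exactly \eqref{eq:signum} applied to $f$, viewed as a function of matrices with the entrywise sign.

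As a remark, the off-diagonal entries of $W_t$ never move. The diagonal entries then follow \eqref{eq:signum} on $f_{\diag}$ with subgradients $\diag(G_t)\in\partial f_{\diag}(\diag(W_t))$, which is the vector form one uses in the counterexamples.
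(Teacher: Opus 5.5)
Your proposal is correct and follows the paper's proof essentially step for step. The chain rule $\partial f(W) = \Diag(\partial f_{\diag}(\diag(W)))$, for which the paper cites Peypouquet, Proposition 3.28, makes every subgradient diagonal; induction then keeps $M_t$ diagonal, and \Cref{lem:polar-diag} gives $\polar(M_t)=\sign(M_t)$. Your optional direct check of the inclusion, using the test points $W\pm E_{ij}$ and $W+\Diag(h)$, is a valid self-contained replacement for the cited chain rule, and that inclusion is the only direction the argument actually needs.
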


\begin{proof}
Let us first establish that every subgradient $G \in \partial f(W)$ is a diagonal matrix.
To see this, use the fact that $f = f_{\diag} \circ \diag$ where $f_{\diag}$ is convex continuous and $\diag : \mathbb{R}^{m \times n } \to \mathbb{R}^{\min\{m,n\}} $ is a linear operator.
That allows to write \cite[Proposition 3.28]{Pey}
\begin{equation*}
    \partial f(W) = \diag^* (\partial f_{\diag}(\diag(W))),
\end{equation*}
where the adjoint operator $\diag^*$ is exactly $\Diag$.
In other words the subgradients of $f$ belong to the range of $\Diag$, so they are indeed diagonal matrices.

At every iteration, the momentum update reads
\[
M_t = \beta M_{t-1} + (1-\beta) G_t,
\]
where $G_t \in \partial f(W_t)$ is a diagonal matrix, and the momentum matrix is initialized with $M_{-1} =0$.
By induction, $M_t$ is diagonal for every $t=0, 1,\ldots$.
Define $m_t := \diag(M_t)$ which verifies $M_t = \Diag(m_t)$.
We can then use~\Cref{lem:polar-diag} to establish 
\begin{equation*}
\polar(M_t) = \sign(M_t).
\end{equation*}
Consequently the iterates $W_t$ of~\eqref{eq:muon} verify
\begin{eqnarray*}
    M_t &=& \beta M_{t-1} + (1 - \beta) G_t, \quad G_t \in \partial f(W_t), \\
    W_{t+1} &=& W_t - \lambda_t \sign(M_t),
\end{eqnarray*}
which means that indeed $W_t$ follows the~\eqref{eq:signum} algorithm applied to $f$.

\end{proof}

\subsection{Basic results on our counterexample function from \Cref{def:counterexample matrix kinky}}

Recall that our counterexample function in \Cref{def:counterexample matrix kinky} is defined as 
\begin{equation*}
    f(W) = c \vert W_{11} + W_{22} \vert + \vert W_{11} - W_{22} \vert, \quad c \in (0,1).
\end{equation*}
It is not only a diagonal function, but only depends on the first two diagonal elements of $W$.
In view to exploit this sparse dependency on the coefficients of $W$, and for simplifying subsequent notations, we introduce 
\begin{equation*}
    \diag_2 : \mathbb{R}^{m \times n} \to \mathbb{R}^2, \quad
    \diag_2(W) = (W_{11}, W_{22}),
\end{equation*}
which is a linear operator returning the first two diagonal elements of a matrix.
Note also that $f$ is nonsmooth due to the presence of absolute values, so to compute its subgradients we will need a notation for the subgradient of the absolute value, which is the set-valued sign function
\begin{equation*}
    \Sign(x) := 
    \begin{cases}
        +1 & x>0 \\
        [-1,+1] & x = 0 \\
        -1 & x<0.
    \end{cases}
\end{equation*}
We now collect a few  facts about this function $f$.

\begin{lemma}\label{L:counterexample basic properties subgradient}
    Let $f$ be the function from \Cref{def:counterexample matrix kinky}. 
    \begin{enumerate}
        \item 
        $f(W) = f_{\diag}(\diag_2(W))$ for $f_{\diag} : \mathbb{R}^2 \to \mathbb{R}$ and
        \begin{equation*} 
            f_{\diag}(w_1,w_2) = c \vert w_1 + w_2 \vert + \vert w_1 - w_2 \vert.
        \end{equation*}
        \item $f_{\diag}$ and $f$ are convex, continuous and nonsmooth.
        \item $\inf f = \inf f_{\diag} = 0$.
        \item ${\rm{argmin}}~f_{\diag} = \{ 0 \}$ and $W^\star$ minimizes $f$ if and only if $W^\star_{1,1} = W^\star_{2,2} = 0$.
        \item The subdifferential of $f_{\diag{}}$ is
        \begin{eqnarray} \label{eq:subgrad-diag}
   \partial f_{\diag{}}(w)
   = c \Sign(w_{1}+w_{2}) \begin{bmatrix}
        1 \\1 
    \end{bmatrix} +\Sign(w_{1}-w_{2}) \begin{bmatrix}
        1 \\-1 
    \end{bmatrix}.
\end{eqnarray}
        
        \item The functions $f_{\diag} : (\mathbb{R}^2, \Vert \cdot \Vert_2) \to (\mathbb{R}, \vert \cdot \vert)$ and $f : (\mathbb{R}^{m \times n}, \Vert \cdot \Vert_F) \to (\mathbb{R}, \vert \cdot \vert)$ are $G$-Lipschitz continuous, with $G=\sqrt{2(1+ c^2)} \leq 2$.
    \end{enumerate}
\end{lemma}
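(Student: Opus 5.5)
We will verify the six items in order; each is elementary, and the only step needing real care is the subdifferential formula (item 5).

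For item 1, we read off from \Cref{def:counterexample matrix kinky} that $f(W)$ involves only $W_{11}$ and $W_{22}$, so $f = f_{\diag}\circ \diag_2$ with $f_{\diag}(w_1,w_2)=c|w_1+w_2|+|w_1-w_2|$. For item 2, $f_{\diag}$ is a nonnegative combination of absolute values of linear forms, hence convex and continuous. Composing with the linear map $\diag_2$ preserves convexity and continuity, so $f$ is convex and continuous as well. Nonsmoothness follows since $t\mapsto f_{\diag}(t,t)=2c|t|$ is not differentiable at $0$.

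For items 3 and 4, $f_{\diag}\ge 0$ and $f_{\diag}(0)=0$. Because $c>0$, $f_{\diag}(w)=0$ forces $w_1+w_2=0$ and $w_1-w_2=0$, i.e.\ $w=0$. Surjectivity of $\diag_2$ (here $m,n\ge 2$) then gives $\inf f=\inf f_{\diag}=0$, and $W^\star$ is a minimizer if and only if $\diag_2(W^\star)=0$.

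For item 5, we use the sum rule for subdifferentials of finite convex functions together with the chain rule $\partial(h\circ A)(w)=A^\top\partial h(Aw)$ for linear $A$, which is the same result \cite[Proposition 3.28]{Pey} invoked in the proof of \Cref{lem:muon-reduction}. Taking $h=|\cdot|$ with $\partial h=\Sign$, and $A$ the row vectors $(1,1)$ and $(1,-1)$, gives exactly \eqref{eq:subgrad-diag}. Full-domain finiteness guarantees the sum rule holds with equality.

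For item 6, we bound the norm of an arbitrary subgradient $g=c s_1(1,1)+s_2(1,-1)$ with $s_1,s_2\in[-1,1]$. The vectors $(1,1)$ and $(1,-1)$ are orthogonal, each of squared norm $2$, so
\[
\|g\|_2^2=2c^2s_1^2+2s_2^2\le 2(1+c^2)<4.
\]
A convex function on $\mathbb{R}^2$ with subgradients bounded by $G$ is $G$-Lipschitz, via $f(x)-f(y)\le\langle g_x,x-y\rangle$ and symmetry. For $f$ itself, $|f(W)-f(W')|\le G\|\diag_2(W-W')\|_2\le G\|W-W'\|_F$.
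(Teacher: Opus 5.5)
Your proof is correct and follows the paper's argument: the same sum and chain rule computation gives the subdifferential, and the same bound $\|g\|_2^2 = 2c^2 s_1^2 + 2 s_2^2 \le 2(1+c^2)$ gives the Lipschitz constant. The differences are cosmetic. You spell out steps the paper calls immediate, such as using $c>0$ for the unique minimizer and the restriction $t \mapsto 2c|t|$ for nonsmoothness. You also transfer the Lipschitz bound to $f$ by composing with the nonexpansive map $\diag_2$, whereas the paper bounds $\|\diag_2^*(g)\|_F$ via the chain rule.
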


\begin{proof}~~

    \begin{enumerate}
        \item Immediate.
        \item Trivial.
        \item $f(W)\geq 0$, and $f(0) = 0$, therefore $\inf_W f(W) = 0$. The same holds for $f_{\diag}$.
        \item Immediate.
        \item This follows from basic sum and composition rules for convex continuous functions and linear operators \cite[Proposition 3.28 \& Theorem 3.30]{Pey}, 
        and $\partial |x| = \Sign(x)$.
        \item The Lipschitz constant of the convex function $f_{\diag}$ is equal to $\sup_{w \in \mathbb{R}^2} \sup_{g \in \partial f_{\diag}(w)} \Vert g \Vert_2$.
        From the previous point, we know that 
        \begin{equation*}
            g \in \partial f_{\diag}(w) \implies
            g = (cs_1 + s_2, cs_1 - s_2),
        \end{equation*}
        for some $s_1 \in \Sign(w_{1}+w_{2})$ and $s_2 \in \Sign(w_{1}-w_{2})$.
        Then 
        \begin{equation*}
            \Vert g \Vert_2^2 = 
            2 c^2 s_1^2 + 2 s_2^2 \leq 2(c^2+1).
        \end{equation*} 
        It follows that $f_{\diag}$ is $\sqrt{2(c^2+1)}$-Lipschitz continuous, with $\sqrt{2(c^2+1)}\leq 2$ coming from the fact that $c \leq 1$.

        The Lipschitz constant of the convex function $f$ is equal to $\sup_{W \in \mathbb{R}^{m \times n}} \sup_{G \in \partial f(W)} \Vert G \Vert_F$.
        Using a chain rule as before leads to
        \begin{equation*}
            G \in \partial f(W) \implies
            G = \diag_2^*(g), 
        \end{equation*}
        where  $g \in \partial f_{\diag}(\diag_2(W))$.
        Because $\diag_2$ is a projection between Euclidean spaces, we have $\Vert \diag_2 \Vert_{\operatorname{op}} \leq 1$. 
        And because we just saw that $\Vert g \Vert \leq \sqrt{2(c^2+1)}$, we conclude that $\Vert G \Vert_F \leq \sqrt{2(c^2+1)}$ as well.
   
    \end{enumerate}
\end{proof}

Now we present an analog to the reduction Lemma \ref{lem:muon-reduction}, which shows that studying \eqref{eq:muon} on our counterexample is equivalent to studying the two-dimensional iterates of \eqref{eq:signum}.

\begin{restatable}[Muon reduction on our counterexample]{lemma}{MuonReductionPractical}
\label{L:muon reduction practical 2D}
Let $f$ be the function from \Cref{def:counterexample matrix kinky}, that we decompose as
\begin{equation*}
    f(W) = f_{\diag}(\diag_2(W)),
    \qquad
    f_{\diag}(w_1,w_2) = c \vert w_1 + w_2 \vert + \vert w_1 - w_2 \vert.
\end{equation*}

If $W_t \in  \R^{m\times n}$ are the iterates of~\eqref{eq:muon} applied to $f$, then the iterates $w_t = \diag_2(W_t) \in \mathbb{R}^2$ follow the~\eqref{eq:signum} algorithm applied to $f_{\diag}$, and we have $f(W_t) = f_{\diag}(w_t)$.

\end{restatable}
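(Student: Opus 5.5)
The plan is to combine \Cref{lem:muon-reduction} with the chain rule for the subdifferential of $f = f_{\diag}\circ\diag_2$, and then track only the first two diagonal coordinates.

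First, since $f$ is a diagonal function (it equals $\tilde f(\diag(W))$ with $\tilde f(a) = f_{\diag}(a_1,a_2)$, which is convex and continuous by \Cref{L:counterexample basic properties subgradient}), \Cref{lem:muon-reduction} applies. It shows that the iterates $W_t$ of \eqref{eq:muon} satisfy $M_t = \beta M_{t-1} + (1-\beta)G_t$ and $W_{t+1} = W_t - \lambda_t \sign(M_t)$, with $G_t \in \partial f(W_t)$ and every $M_t$ diagonal.

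Second, I identify the subgradients more precisely. By the same chain rule used in \Cref{L:counterexample basic properties subgradient} \cite[Proposition 3.28]{Pey}, $\partial f(W) = \diag_2^*(\partial f_{\diag}(\diag_2(W)))$. Here $\diag_2^*(g)$ is the matrix with $g_1,g_2$ in positions $(1,1),(2,2)$ and zeros elsewhere. Hence $G_t = \diag_2^*(g_t)$ for some $g_t \in \partial f_{\diag}(w_t)$, where $w_t := \diag_2(W_t)$.

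Third, I apply the linear map $\diag_2$ to the recursion. Set $m_t := \diag_2(M_t)$. By linearity, $m_t = \beta m_{t-1} + (1-\beta) g_t$ with $m_{-1}=0$, because $\diag_2\circ\diag_2^* = \mathrm{id}_{\mathbb{R}^2}$. Since $\sign$ acts entrywise, $\diag_2(\sign(M_t)) = \sign(m_t)$. Applying $\diag_2$ to the iterate update therefore gives $w_{t+1} = w_t - \lambda_t\sign(m_t)$, which is exactly \eqref{eq:signum} on $f_{\diag}$ with subgradients $g_t\in\partial f_{\diag}(w_t)$. The identity $f(W_t) = f_{\diag}(w_t)$ follows directly from item 1 of \Cref{L:counterexample basic properties subgradient}.

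The only delicate point is the second step: every admissible choice of $G_t$, including the selections at kinks, must correspond to an admissible $g_t$ for the reduced algorithm. The exact chain rule (equality, not just inclusion), valid here because $f_{\diag}$ is finite and continuous everywhere, guarantees this correspondence in both directions. Everything else is bookkeeping with linear maps.

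The stepsizes need no special handling. If the $\lambda_t$ depend on past subgradients, for instance through $\|M_t\|_{\nuc} = \|m_t\|_1$ in the regularized variant, they are determined identically in both descriptions, so the reduction holds as stated.
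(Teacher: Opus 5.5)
Your proposal is correct and follows essentially the same route as the paper: the exact chain rule $\partial f(W) = \diag_2^*(\partial f_{\diag}(\diag_2(W)))$, diagonality of $M_t$ so that $\polar(M_t)=\sign(M_t)$, and then applying the linear map $\diag_2$ together with $\diag_2\circ\sign = \sign\circ\diag_2$. The only cosmetic difference is that you import the diagonality/polar step from \Cref{lem:muon-reduction} rather than repeating the induction inside the proof; this is valid because $\min\{m,n\}\ge 2$ makes $f$ a diagonal function in that lemma's sense.
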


\begin{proof}

Let us first establish that every subgradient $G \in \partial f(W)$ is a diagonal matrix.
To see this, use the fact that $f = f_{\diag} \circ \diag_2$ where $f_{\diag}$ is convex continuous and $\diag_2 : \mathbb{R}^{m \times n} \to \mathbb{R}^2$ is a linear operator.
That allows to write \cite[Proposition 3.28]{Pey}
\begin{equation*}
    \partial f(W) = \diag_2^* (\partial f_{\diag}(\diag_2(W))).
\end{equation*}
It is a simple exercise to see that $\diag_2^*$ is exactly 
\begin{equation*}
    \diag_2^* : \mathbb{R}^2 \to \mathbb{R}^{m \times n}, \quad
    \diag_2^*(w_1, w_2) = \Diag(w_1,w_2, 0, \dots, 0).
\end{equation*}
In particular we see that every subgradient of $f$ is a diagonal matrix.

Let us now consider the iterates $W_t$ of~\eqref{eq:muon}.
At every iteration, the momentum update reads
\[
M_t = \beta M_{t-1} + (1-\beta) G_t,
\]
where $G_t \in \partial f(W_t)$ is a diagonal matrix.
We further know that the momentum matrix is initialized with $M_{-1} =0$ which is also diagonal.
By induction, we deduce that $M_t$ is diagonal for every $t \geq 0$.
We can then use~\Cref{lem:polar-diag} to establish that
\begin{equation*}
\polar(M_t) = \sign(M_t).
\end{equation*}
Consider now $w_t := \diag_2(W_t)$, and use the linearity of $\diag_2$ to write
\begin{eqnarray*}
    w_{t+1} &=& \diag_2(W_{t+1}) \\
    &=& \diag_2(W_t) - \lambda_t \diag_2(\polar(M_t)) \\
    &=&
    w_t - \lambda_t \diag_2(\sign (M_t)).
\end{eqnarray*}
Because  $\diag_2 \circ \sign = \sign \circ \diag_2$, we deduce that
\begin{equation*}
    w_{t+1} = w_t - \lambda_t \sign( \diag_2(M_t))
    =
    w_t - \lambda_t \sign(m_t)
\end{equation*}
after defining $m_t := \diag_2(M_t)$.
Moreover we have seen earlier that $G_t = \diag_2^*(g_t)$, where $g_t \in \partial f_{\diag}(\diag_2(W_t)) = \partial f_{\diag}(w_t)$.
This means that $\diag_2(G_t) = g_t$.
We can therefore write 
\begin{equation*}
    m_t = \diag_2(M_t)
    =
    \beta \diag_2(M_{t-1}) +(1- \beta) \diag_2(G_t)
    =
    \beta m_{t-1} + (1 - \beta) g_t.
\end{equation*}
We finally have verified that
\begin{eqnarray*}
    m_t &=& \beta m_{t-1} + (1 - \beta) g_t, \quad g_t \in \partial f_{\diag}(w_t), \\
    w_{t+1} &=& w_t - \lambda_t \sign(m_t),
\end{eqnarray*}
meaning that $w_t$ follows the~\eqref{eq:signum} algorithm applied to $f_{\diag}$.

\end{proof}

\subsection{Proofs of Counterexamples \ref{thm-counterex-dec-steps} and \ref{Cex:offline stepsizes momentum}}
\label{S:proofs counterexamples}

\maincounterexample*

\begin{proof}[Proof of Counterexample \ref{thm-counterex-dec-steps}]
First of all, we know thanks to~\Cref{L:muon reduction practical 2D} that we mostly need to examine the iterates of~\eqref{eq:signum} applied to the function $f_{\diag{}} : \mathbb{R}^{2} \to \mathbb{R}$,
\begin{equation*} 
    f_{\diag{}}(w) = c|w_1+w_2| + |w_1-w_2|.
\end{equation*}
The iterates of~\eqref{eq:signum} verify in this case
\[
g_t\in \partial f_{\diag{}}(w_t),
\qquad
m_t=\beta m_{t-1}+(1-\beta)g_t,
\qquad
w_{t+1}=w_t-\lambda_t \sign(m_t).
\]

Before starting our analysis of this algorithm, we need to focus on the following quantities, that we will later need to track along the iterations:
\[
\lambda:=\lim_{t\to\infty}\lambda_t,
\quad
R_t:=\frac{\lambda}{2}+S_t, \quad 
S_t :=\sum_{s=0}^\infty (-1)^s(\lambda_{t+s}-\lambda),
\quad t=0,1,\dots.
\]
Our assumptions that $\lambda_t$ is nonincreasing and converging to $\lambda$ imply, via the alternating-series test, that $S_t$ is a convergent series, and therefore is well defined.
We can further infer that $S_t \geq 0$, and even that $S_t >0$ in the case that $\lambda_t$ is strictly decreasing (which we assume when $\lambda=0$).
This implies that $R_t > 0$ holds true, whether $\lambda =0$ or not.
Another point to observe is that $S_t$ is, up to a sign change, the tail of the series $S_0$:
\begin{equation*}
    S_t = \sum_{k=t}^\infty (-1)^{k-t}(\lambda_k - \lambda)
    =
    (-1)^t \sum_{k=t}^\infty (-1)^{k}(\lambda_k - \lambda).
\end{equation*}
Because we know that $S_0$ is a converging series, the above means that $S_t$ tends to zero when $t \to + \infty$.
A last point to be made is that $R_t$ can be equivalently defined through the recursion $R_{t+1}=\lambda_t-R_t$.
Indeed
\begin{eqnarray*}
    R_{t+1} 
    &=& \frac{\lambda}{2} + S_{t+1} \\
    &=& \frac{\lambda}{2} + \sum_{s=0}^\infty (-1)^s(\lambda_{t+s +1}-\lambda) \\
    &=& \frac{\lambda}{2} - \sum_{s=1}^\infty (-1)^s(\lambda_{t+s}-\lambda) \\
    &=& \frac{\lambda}{2} + (\lambda_t - \lambda) - \sum_{s=0}^\infty (-1)^s(\lambda_{t+s}-\lambda) \\
    &=& \lambda_t - \frac{\lambda}{2} - S_t 
    = \lambda_t - R_t.
\end{eqnarray*}

Let us now start the analysis of the iterates $w_t$ of \eqref{eq:signum}.
Consider initializing the algorithm with $w_0  \in \mathcal{W}$, where the set of initializations $\mathcal{W}$ is defined by
\begin{equation*}
    \mathcal{W} := \left\{
r\begin{bmatrix}1\\ 1\end{bmatrix}
+
(R_0 + \delta)\begin{bmatrix}1\\[2pt]-1\end{bmatrix}  ~\bigg |~  r \geq 1, ~\vert \delta \vert < R_t, ~ t=0, 1, \ldots
\right\}.
\end{equation*}
Here a couple of observations need to be made, depending on the value of $\lambda$.
If $\lambda =0$ then $R_t> 0$ but also $R_t = S_t \to 0$. In this case $\delta$ must be equal to zero, and $\mathcal{W}$ is nonempty.
If $\lambda > 0$ then $R_t = \tfrac{\lambda}{2} + S_t \geq \tfrac{\lambda}{2}$.
In this case any choice $\vert \delta \vert < \tfrac{\lambda}{2}$ is valid, and $\mathcal{W}$ has a nonempty interior.

Consider now that we take $w_0 \in \mathcal{W}$ with parameters $r, \delta$.
We are going to prove by induction that, for every $t=0, 1, \ldots$
\begin{equation}\label{e:cexinduction}
    w_t=
r\begin{bmatrix}1\\ 1\end{bmatrix}
+
(\delta + (-1)^tR_t)\begin{bmatrix}1\\[2pt]-1\end{bmatrix}.
\end{equation}
It is clear that \eqref{e:cexinduction} is true at $t=0$, from our choice of $w_0$. 
Assume now that \eqref{e:cexinduction} holds for all $0, \ldots, t$, and let us show that it also holds for $t+1$.
To compute $\partial f_{\diag}(w_t)$, observe that 
\begin{equation*}
    \sign(w_{t,1} + w_{t,2}) = \sign(2 r) = +1,
    \qquad 
    \sign(w_{t,1} - w_{t,2})  = \sign(2(\delta + (-1)^t R_t)) = (-1)^t,
\end{equation*}
where the last equality stems from the fact that $\vert \delta \vert < R_t$.
Using \Cref{L:counterexample basic properties subgradient} we understand that $\partial f_{\diag}(w_t)$ is reduced to a unique subgradient, that we note $g_t$, and whose expression is given by
\[
g_t=
c\begin{bmatrix}1\\ 1\end{bmatrix}
+
(-1)^t\begin{bmatrix}1\\[2pt]-1\end{bmatrix}.
\]
We can now unroll the recursion defining the momentum sequence $m_t$, and write
\begin{eqnarray*}
m_t
&=& (1-\beta)\sum_{s=0}^t \beta^{t-s} g_s \\
&=& c(1-\beta)\sum_{s=0}^t \beta^{t-s}\begin{bmatrix}1\\ 1\end{bmatrix}
+
(1-\beta)\sum_{s=0}^t \beta^{t-s}(-1)^s\begin{bmatrix}1\\ -1\end{bmatrix} \\
&=& c(1-\beta)\sum_{s=0}^t \beta^s \begin{bmatrix}1\\ 1\end{bmatrix}
+
(1-\beta)(-1)^t\sum_{s=0}^t (-\beta)^s \begin{bmatrix}1\\ -1\end{bmatrix} \\
&=& c(1-\beta)\frac{1-\beta^{t+1}}{1-\beta}\begin{bmatrix}1\\ 1\end{bmatrix}
+
(1-\beta)(-1)^t\frac{1-(-\beta)^{t+1}}{1+\beta}\begin{bmatrix}1\\ -1\end{bmatrix} \\
&=& \underbrace{c\bigl(1-\beta^{t+1}\bigr)}_{=:a_t}\begin{bmatrix}1\\ 1\end{bmatrix}
+
(-1)^t\underbrace{\frac{1-\beta}{1+\beta}\bigl(1-(-\beta)^{t+1}\bigr)}_{=:b_t}\begin{bmatrix}1\\ -1\end{bmatrix}.
\end{eqnarray*}

Since $\beta \in [0,1)$ we know that $a_t \geq 0$.
We can also see that $b_t > a_t$ due to the fact that
\[
c = \frac{1 - \beta}{2}<\frac{1-\beta}{1+\beta},
\qquad 
1-\beta^{t+1}\le 1-(-\beta)^{t+1}.
\]
This means that the sign of $m_t$ will be governed by the coefficient of the
$\bigl[\begin{smallmatrix}1\\
-1\end{smallmatrix}\bigr]$-term, that is
\[
\sign(m_t)=(-1)^t\begin{bmatrix}1\\[2pt]-1\end{bmatrix}.
\]
Hence
\begin{eqnarray*} 
w_{t+1}
&=& w_t-\lambda_t\sign(m_t) \\
&=& r\begin{bmatrix}1\\ 1\end{bmatrix}
+
(\delta + (-1)^tR_t)\begin{bmatrix}1\\[2pt]-1\end{bmatrix}-\lambda_t(-1)^t\begin{bmatrix}1\\[2pt]-1\end{bmatrix} \\
&=&
r\begin{bmatrix}1\\ 1\end{bmatrix}
+
(\delta + (-1)^t(R_t-\lambda_t))\begin{bmatrix}1\\[2pt]-1\end{bmatrix} \\
&=&
r\begin{bmatrix}1\\ 1\end{bmatrix}
+
(\delta + (-1)^{t+1}R_{t+1})\begin{bmatrix}1\\[2pt]-1\end{bmatrix},
\end{eqnarray*} 
where the last step uses $R_{t+1}=\lambda_t-R_t$. This proves that \eqref{e:cexinduction} holds for $t+1$.

Now we simply write that
\[
f_{\diag{}}(w_t)-\inf f_{\diag{}}
=
c|w_{t,1}+w_{t,2}|+|w_{t,1}-w_{t,2}|
\geq c|w_{t,1}+w_{t,2}|
=
2cr
\ge 2c = 1 - \beta.
\]

To conclude the proof, it remains to formally use \Cref{L:muon reduction practical 2D} to convert this lower bound into one for \eqref{eq:muon}.
Consider the function $f(W) = f_{\diag} (\diag_2(W))$, and define the following set of initializations
\begin{equation*}
    \Omega = \{ W \in \mathbb{R}^{m \times n} \mid  \diag_2(W) \in \mathcal{W} \}. 
\end{equation*}
From what we know about $\mathcal{W}$, we can say that $\Omega$ is nonempty and has a nonempty interior when $\lambda>0$.
Consider now the iterates of \eqref{eq:muon} applied to $f$ and initialized with $W_0 \in \Omega$.
Defining $\hat w_t := \diag_2(W_t)$, we know by definition that $\hat w_0 \in \mathcal{W}$.
Using \Cref{L:muon reduction practical 2D}, we also know that $\hat w_t$ follow the \eqref{eq:signum} algorithm applied to $f_{\diag}$, with $f_{\diag}(\hat w_t) = f(W_t)$.
Combining what precedes, we finally obtain that
\begin{equation*}
    f(W_t) = f_{\diag}(\hat w_t) \geq 1 - \beta.
\end{equation*}
It remains to use the Lipschitzness of $f$ to convert this into a bound on the iterates themselves.
For every minimizer $W^\star$ of $f$, we can use the fact that $f$ is $2$-Lipschitz continuous (see \Cref{L:counterexample basic properties subgradient})  to write
\begin{equation*}
    1 - \beta \leq f(W_t) = \vert f(W_t) - f(W^\star) \vert \leq 2 \Vert W_t - W^\star \Vert_F.
\end{equation*}
To conclude the proof, we simply need to replace $\Omega$ with its interior when $\lambda > 0$ to obtain an open set of initializations.
\end{proof}

\MaincounterExampleBeta*
\begin{proof}[Proof of Counterexample \ref{Cex:offline stepsizes momentum}]
Let $W_t$ be the iterates of \eqref{eq:muon} applied to $f$.
Recall that we assumed the stepsize at each step to be chosen as a function of the previously encountered gradients. 
That is, there exist functions $\phi_t : (\mathbb{R}^{m \times n})^{t+1} \to \mathbb{R}$ defined offline such that $\lambda_t = \phi_t(G_0, G_1, \ldots, G_t)$. 
In the light of \Cref{L:muon reduction practical 2D}, let us define $f_{\diag}(w_1,w_2) = c \vert w_1 + w_2 \vert + \vert w_1 - w_2 \vert$ so that $f(W) = f_{\diag}(\diag_2(W))$.
Let us also define $w_t := \diag_2(W_t)$ which satisfy $f(W_t) = f_{\diag}(w_t)$ and follow the \eqref{eq:signum} algorithm
\begin{eqnarray*}
    m_t &=& \beta m_{t-1} + (1 - \beta) g_t, \quad g_t \in \partial f_{\diag}(w_t), \\
    w_{t+1} &=& w_t - \lambda_t \sign(m_t),
\end{eqnarray*}
where $G_t = \diag_2^*( g_t)$.
In particular we see that $\lambda_t$ can be written as an offline function $\hat \phi_t$ of the gradients $g_0, \dots, g_t$.

Our main task is now to provide a lower bound for this \eqref{eq:signum} algorithm.
A key goal is to show that for almost every initialization, the subgradient $g_t \in \partial f_{\diag}(w_t)$ can only take on
finitely many values. 
From the subdifferential formula \eqref{eq:subgrad-diag}, we define this finite set of values as
\[
\mathcal{G} = \left \{  c s_1 \begin{bmatrix}
        1 \\1 
    \end{bmatrix} +s_2 \begin{bmatrix}
        1 \\-1 
    \end{bmatrix}
    \ \Big | \ s_1, s_2 \in \{\pm 1 \} \right \}.
\]
Define now
\begin{equation} \label{eq:Elambda}
    E_\phi := \bigcup_{T \in \mathbb{N}_0} \left\{ 2\sum_{t=0}^{T-1} \hat \phi_t(h_0, \ldots, h_t) \delta_t \ \Big|\ \delta_t \in\{\pm1\}, ~ h_t\in \mathcal{G},
    ~ t=0, \ldots, T-1\right\},
\end{equation}
and
\begin{equation} \label{eq:Wlambda}
    \hat{\mathcal{W}} := \{w \in \mathbb{R}^2 \mid w_1+w_2 =0\}\cup\{w \in \mathbb{R}^2 \mid w_1 -w_2\in E_\phi\}.
\end{equation}
The set $E_\phi$ is countable: for each fixed $T$, the displayed set is finite,
and $E_\phi$ is a countable union of such finite sets. 
Hence $E_\phi$ has measure zero. Since $w\mapsto w_1 -w_2$ is a nonzero linear functional,
\[
\{w \mid w_1 -w_2\in E_\phi\}=\bigcup_{x\in E_\phi}\{w \mid w_1 -w_2=x\}
\]
is a countable union of affine hyperplanes, hence has measure zero. 
Taking the union with  $\{w \mid w_1 + w_2=0\}$ also gives a measure zero set, 
thus $\hat{\mathcal{W}}$ has measure zero.
We finally define 
\[
\mathcal{W} := \{ W \in \mathbb{R}^{m \times n} \mid \diag_2(W) \in \hat{\mathcal{W}} \}
\]
which also has measure zero.
Indeed,
\[
\mathcal{W}
=
\{W\mid W_{11}+W_{22}=0\}
\cup
\bigcup_{x\in E_\phi}\{W\mid W_{11}-W_{22}=x\},
\]
which is a countable union of affine hyperplanes in $\mathbb{R}^{m \times n}$.
We will impose from now on that $W_0 \notin \mathcal{W}$.
Note that this is equivalent to saying that $w_0 =\diag_2(W_0) \notin \hat{\mathcal{W}}$.

Let 
\begin{equation} \label{eq:p-q-t}
p_t:=w_{t,1} +w_{t,2} =   
\Big\langle w_t, \begin{bmatrix}
        1 \\1 
    \end{bmatrix} \Big\rangle, \qquad 
    q_t:=w_{t,1} -w_{t,2} =   
    \Big\langle w_t, \begin{bmatrix}
        1 \\-1 
    \end{bmatrix} \Big\rangle.
\end{equation}
Our initialization choice $w_0\notin \hat{\mathcal W}$ implies that  $p_0\neq 0$ and $q_0\notin E_\phi$. Since $0\in E_\phi$ (take $T=0$ in~\eqref{eq:Elambda}), we also have $q_0\neq 0$. 

We show by induction that, for all $t= 0, 1, \ldots$,
\begin{equation}\label{eq:offline-momentum-invariant}
p_t=p_0,
\qquad
q_t=q_0-2\sum_{s=0}^{t-1}\lambda_s \sign(q_s),
\qquad
q_t\neq 0.
\end{equation}
The claim is immediate at $t=0$.
Assume \eqref{eq:offline-momentum-invariant} holds at time $t$. By induction we have that $p_t \neq 0$ and $q_t \neq 0$, thus
\[
 \sign(p_0)\in\{\pm1\},
\qquad
 \sign(q_t)\in\{\pm1\}.
\]

Since $p_t\neq 0$ and $q_t\neq 0$, the subgradient~\eqref{eq:subgrad-diag} is unique, 
and is given by 
\[
g_t
=
c \sign(p_0) \begin{bmatrix}
        1 \\1 
    \end{bmatrix} + \sign(q_t) \begin{bmatrix}
        1 \\-1 
    \end{bmatrix}.
\]

Since $m_{-1}=0$ and $g_t$ is  a weighted sum of $ \begin{bmatrix}
        1 \\1 
    \end{bmatrix}$ and $ \begin{bmatrix}
        1 \\ -1 
    \end{bmatrix}$ for all $t$, 
we have
\begin{equation} \label{eq:mtrsofjserfzrf}
m_t=\alpha_t  \begin{bmatrix}
        1 \\1 
    \end{bmatrix}+\eta_t  \begin{bmatrix}
        1 \\ -1 
    \end{bmatrix}
\end{equation}
for some scalars $\alpha_t,\eta_t$. The momentum recursion gives
\[
\alpha_t=\beta\alpha_{t-1}+(1-\beta)c \sign(p_0),
\qquad
\eta_t=\beta\eta_{t-1}+(1-\beta)\sign(q_t),
\qquad
\alpha_{-1}=\eta_{-1}=0.
\]
Hence
\[
\alpha_t=c(1-\beta^{t+1}) \sign(p_0)
\qquad \implies \qquad 
|\alpha_t|\le c.
\]
Also, since $|\sign(q_j)|=1$ for all $j=0, \ldots, t$ and $\eta_{-1}=0$, we have
\[
| \eta_t| = |(1-\beta)\sum_{j=0}^t \beta^{ j}\sign(q_j) | \; \leq \; (1-\beta)\sum_{j=0}^t \beta^{ j} = 1- \beta^{t+1} \leq 1. 
\]
Consequently
\[
\sign(q_t)\eta_t
=
(1-\beta)+\beta \sign(q_t)\eta_{t-1}
\ge
(1-\beta)-\beta|\eta_{t-1}|
\ge
1-2\beta.
\]
Since $ 1-2\beta > c$, the above gives
\[
\sign(q_t)\eta_t>c\ge |\alpha_t| \geq 0.
\]
As a consequence of the above we have that
\begin{equation}\label{eq:sign-mom-beta12-invariants}
\sign(\eta_t)=\sign(q_t),
\qquad
|\eta_t|>|\alpha_t|.
\end{equation}

Consequently $m_t$ in~\eqref{eq:mtrsofjserfzrf} is given by
\[
m_t=(\alpha_t+\eta_t, \alpha_t-\eta_t).
\]
By \eqref{eq:sign-mom-beta12-invariants},
\[
\sign(\alpha_t+\eta_t)=\sign(q_t),
\qquad
\sign(\alpha_t-\eta_t)=-\sign(q_t).
\]
Hence
\[
\sign(m_t)
=
\begin{bmatrix}
        \sign(q_t) \\ -\sign(q_t) 
    \end{bmatrix}
=
\sign(q_t)  \begin{bmatrix}
        1 \\ -1 
    \end{bmatrix}.
\]

The update becomes
\begin{equation} \label{eq:w_tesfezfezfzer}
w_{t+1}=w_t-\lambda_t \sign(q_t)  \begin{bmatrix}
        1 \\ -1 
    \end{bmatrix}.
\end{equation}
Hence by definition of $p_t$ in~\eqref{eq:p-q-t} we have that
\[
   p_{t+1} = \left\langle w_{t+1}, \begin{bmatrix}
        1 \\ 1 
    \end{bmatrix} \right\rangle = 
    p_t-\lambda_t \sign(q_t)  \left\langle 
\begin{bmatrix}
1 \\ -1
\end{bmatrix},
\begin{bmatrix}
1 \\ 1
\end{bmatrix}
\right\rangle\;=\;p_t,
\]
which shows the first part of~\eqref{eq:offline-momentum-invariant} for $t+1$. For
$\delta_t=-\sign(q_t)\in\{\pm1\}$,  by definition of $q_t$ in~\eqref{eq:p-q-t}  we have
that 
\begin{eqnarray*}
q_{t+1} & = & \left\langle w_{t+1}, \begin{bmatrix}
        1 \\ -1 
    \end{bmatrix} \right\rangle \\
    & =& q_t-\lambda_t \sign(q_t)\left\langle 
\begin{bmatrix}
1 \\ -1
\end{bmatrix},
\begin{bmatrix}
1 \\ -1
\end{bmatrix}
\right\rangle \\
&=& q_t-2\lambda_t \sign(q_t),
\end{eqnarray*}
and unrolling over $t$ yields
\begin{equation*}
    q_{t+1} = q_0 -  2 \sum_{s=0}^{t} \lambda_s \sign(q_s),
\end{equation*}
which shows the second part of~\eqref{eq:offline-momentum-invariant} for $t+1$. For the
last part of~\eqref{eq:offline-momentum-invariant}, notice that
\begin{eqnarray*}
    q_{t+1} = 0 &\iff& q_0 =  2 \sum_{s=0}^{t} \lambda_s \sign(q_s) \\
    &\iff& q_0 =  2 \sum_{s=0}^{t} \hat \phi_s(g_0, \ldots, g_s) \sign(q_s).
\end{eqnarray*}
Since $g_s \in \mathcal{G}$ and $\sign(q_s)\in\{\pm1\}$ for all $s$, it follows that
$q_0 \in E_\phi$. By construction $q_0 \notin E_\phi$, hence $q_{t+1}\neq 0$.
This proves the claim for $t+1$. 

Therefore $p_t=p_0\neq 0$ for all $t$, and
\[
f_{\diag}(w_t)\ge c | w_{t,1} + w_{t,2}| = c|p_t|=c|p_0| = \frac{1-2 \beta}{2}\vert w_{0,1} + w_{0,2} \vert >0,
\]
which in turn means that
\begin{equation*}
    f(W_t) - \inf f \geq c|p_0|>0.
\end{equation*}
So we conclude that the iterates cannot converge to a minimizer.

For \eqref{eq:muon-reg}, set $\tilde{\lambda}_t = \lambda_t \|M_t\|_{\nuc}$. Since $M_t$ is determined by $G_0, \ldots, G_t$, $\tilde{\lambda}_t$ is covered by the same argument.
\end{proof}

\section{Proofs in \Cref{sec-muon-ef}}

\subsection{LMOs and compression operators}

\DualSubgradientCompression*
\begin{proof}
For the proof, let $\hat{\mathcal{C}}(W)=  \|W\|_*\LMO(W)$, where we have dropped the $\alpha^2$ re-scaling.
Expand the squares
\[
\|W-\alpha^2 \hat{\mathcal{C}}(W)\|_F^2
=
\|W\|_F^2+\alpha^4\|\hat{\mathcal{C}}(W)\|_F^2-2\alpha^2\langle W,\hat{\mathcal{C}}(W)\rangle.
\]
By definition of $\LMO$ and the dual norm,
\[
\langle W,\LMO(W)\rangle=\|W\|_*,
\]
hence
\[
\langle W,\hat{\mathcal{C}}(W)\rangle=\|W\|_*^2.
\]
Also
\[
\|\hat{\mathcal{C}}(W)\|_F=\|W\|_*\|\LMO(W)\|_F,
\]
and since $\|\LMO(W)\|\le 1$ and $\|U\|\ge \alpha\|U\|_F$ for every $U\in\mathbb{R}^{m\times n}$, we have
\[
\|\LMO(W)\|_F \le \frac{1}{\alpha}\|\LMO(W)\| \le \frac{1}{\alpha}.
\]
Therefore,
\[
\|\hat{\mathcal{C}}(W)\|_F^2 \le \frac{1}{\alpha^2}\|W\|_*^2.
\]
Substituting into the expansion gives
\[
\|W-\alpha^2 \hat{\mathcal{C}}(W)\|_F^2
\le
\|W\|_F^2+\left(\frac{\alpha^4}{\alpha^2}-2\alpha^2\right)\|W\|_*^2,
\]
which becomes
\[
\|W-\alpha^2 \hat{\mathcal{C}}(W)\|_F^2
\le
\|W\|_F^2-\alpha^2\|W\|_*^2.
\]
Finally, $\|W\|\le \beta\|W\|_F$ implies by duality that
\[
\|W\|_* \ge \frac{1}{\beta}\|W\|_F.
\]
Hence
\[
\|W-\alpha^2\hat{\mathcal{C}}(W)\|_F^2
\le
\|W\|_F^2-\frac{\alpha^2}{\beta^2}\|W\|_F^2
=
\left(1-\frac{\alpha^2}{\beta^2}\right)\|W\|_F^2.
\]
This proves that $\mathcal{C}(W) = \alpha^2\hat{\mathcal{C}}(W)$ is a
$
\delta=\frac{\alpha^2}{\beta^2}
$ compression operator.

To conclude the proof, we provide an explicit computation of the constants $\alpha,\beta$ when considering the operator norm.
Consider any $W \in \mathbb{R}^{m \times n}$ and let $r$ be its rank, with of course $r \leq \min\{m,n\}$.
Consider  $W = U \Sigma V^\top$ its reduced SVD where $\Sigma = \Diag(\sigma) \in \mathbb{R}^{r \times r}$, with $\sigma \in \mathbb{R}^r$.
Then
\begin{equation*}
    \Vert W \Vert_F = \Vert \sigma \Vert_2,
    \qquad  
    \Vert W \Vert_{\text{op}} = \Vert \sigma \Vert_\infty.
\end{equation*}
We can then use classic bounds in $\mathbb{R}^r$ to write
\begin{equation*}
    \frac{1}{\sqrt{r}} \Vert \sigma\Vert_2 \leq \Vert \sigma \Vert_\infty \leq \Vert \sigma \Vert_2,
\end{equation*}
and the conclusion follows from $\tfrac{1}{\sqrt{r}} \geq \tfrac{1}{\sqrt{\min\{m,n\}}}$.
\end{proof}

\subsection{Basic choices of LMO-based compressors and EF-Muon}
\label{S:LMO compressors basic examples}

As stated in \Cref{L:LMO gives compressor operator}, for any given norm $\Vert \cdot \Vert$, the corresponding LMO is a compression operator up to a multiplicative factor.
Below we list explicit choices of norms and their corresponding multiplicative factors, whose proof can be found in standard textbooks such as~\cite{golub13}. 

\begin{remark}[Basic LMO-based compressors]\label{R:LMO-compressors}
For any of the following norms $\Vert \cdot \Vert$, the operator
\begin{equation*}
    \mathcal{C}(W) = \alpha^2 \Vert W \Vert_* \LMO_{\|\cdot \|}(W)
\end{equation*}
is a $\delta$-compression operator in the sense of \Cref{L:LMO gives compressor operator}, where 
\begin{itemize}
    \item if $\Vert \cdot \Vert=\|\cdot\|_1$ on $\mathbb{R}^{d}$, 
    then $\Vert \cdot \Vert_* = \Vert \cdot \Vert_\infty$,  $\alpha=1$ and $\delta=1/d$.
    \item if $\Vert \cdot \Vert=\|\cdot\|_2$ on $\mathbb{R}^{d}$,
    then $\Vert \cdot \Vert_*=\Vert \cdot \Vert_2$, $\alpha=1$ and $\delta=1$.
    \item if $\Vert \cdot \Vert=\|\cdot\|_\infty$ on $\mathbb{R}^{d}$,
    then $\Vert \cdot \Vert_*=\Vert \cdot \Vert_1$, $\alpha=1/\sqrt d$ and $\delta=1/d$.
    \item if $\Vert \cdot \Vert=\|\cdot\|_p$ on $\mathbb{R}^{d}$,
    then $\Vert \cdot \Vert_*=\Vert \cdot \Vert_q$, $\alpha=d^{\min\left \{0, \tfrac{1}{p} - \tfrac{1}{2}\right \}}$, 
    and $\delta = d^{-2\lvert \frac1p-\frac12\rvert}$.
    \item if $\Vert \cdot \Vert=\|\cdot\|_{\operatorname{nuc}}$ on $\mathbb{R}^{m\times n}$,
    then $\Vert \cdot \Vert_*=\Vert \cdot \Vert_{\operatorname{op}}$, $\alpha=1$, 
    and $\delta=1/\min\{m,n\}$.
    \item if $\Vert \cdot \Vert=\|\cdot\|_{\operatorname{op}}$ on $\mathbb{R}^{m\times n}$,
    then $\Vert \cdot \Vert_*=\Vert \cdot \Vert_{\operatorname{nuc}}$,   
    and $\alpha^2 = \delta=1/\min\{m,n\}$.
\end{itemize}
These are worst-case bounds; for structured vectors (e.g., sparse or low rank gradients) the effective compression can be much better.
\end{remark}

Computing an element in the $\LMO_{\|\cdot \|}(W)$ can be done in closed form for the norms $\|\cdot\|_1$, $\|\cdot\|_2$, $\|\cdot\|_\infty$, $\|\cdot\|_{\operatorname{nuc}}$, see \cite[Section D.1]{gruntkowska2025errorfeedbackmuonfriends} for more details.
For the operator norm $\|\cdot\|_{\operatorname{op}}$, we prove here  that the polar factor belongs to the LMO.

\begin{lemma}\label{lem-lmo-op-norm-least-norm-polar}
For any $A\in \R^{m\times n}$, $\polar(A)$ is the unique least Frobenius norm element of 
$\LMO_{\|\cdot\|_{\op}}(A)$.
\end{lemma}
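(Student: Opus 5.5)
We will describe the set $\LMO_{\|\cdot\|_{\op}}(A)$ explicitly and then minimize the Frobenius norm over it. Let $A = U_r \Sigma_r V_r^\top$ be the reduced SVD of $A$, with $r=\operatorname{rank}(A)$ and $\sigma_1,\dots,\sigma_r>0$. If $A=0$, the LMO is the whole unit ball, its least-norm element is $0$, and $\polar(0)=0$ by convention (empty SVD). So we assume $r\ge 1$. The first step is the standard duality fact that $\max_{\|X\|_{\op}\le 1}\langle X,A\rangle = \|A\|_{\nuc} = \sum_i \sigma_i$. The upper bound follows by writing $\langle X,A\rangle = \sum_{i=1}^r \sigma_i\, u_i^\top X v_i$ and using $|u_i^\top X v_i|\le \|X\|_{\op}\le 1$. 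The value is attained at $X=\polar(A)=U_rV_r^\top$, since $\langle U_rV_r^\top, U_r\Sigma_rV_r^\top\rangle = \trace{\Sigma_r}$. Hence $\polar(A)\in\LMO_{\|\cdot\|_{\op}}(A)$.

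The second step characterizes every maximizer $X$. Equality in the bound forces $u_i^\top X v_i = 1$ for every $i\le r$, because all $\sigma_i>0$. Combined with $\|Xv_i\|_2\le 1$ and $\|u_i\|_2=1$, equality in Cauchy--Schwarz gives $Xv_i = u_i$. Similarly, $\|X^\top u_i\|_2\le 1$ gives $X^\top u_i = v_i$. Thus $XV_r = U_r$ and $X^\top U_r = V_r$.

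Next, complete $U_r, V_r$ to orthonormal bases $[U_r, U_\perp]$ and $[V_r, V_\perp]$, and write $X$ in block form. The two relations kill the off-diagonal blocks: $U_\perp^\top X V_r = U_\perp^\top U_r = 0$ and $U_r^\top X V_\perp = V_r^\top V_\perp = 0$. Hence
\[
X = U_rV_r^\top + U_\perp Z V_\perp^\top, \qquad Z := U_\perp^\top X V_\perp .
\]
Conversely, any such $X$ with $\|Z\|_{\op}\le 1$ has $\|X\|_{\op}=\max\{1,\|Z\|_{\op}\}\le 1$ and attains the maximum, so this parametrizes the whole LMO set.

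Finally, the two summands are Frobenius-orthogonal, since their ranges and row spaces are orthogonal. Therefore
\[
\|X\|_F^2 = r + \|Z\|_F^2 \ge r = \|\polar(A)\|_F^2,
\]
with equality if and only if $Z=0$, that is, $X=\polar(A)$. This gives both minimality and uniqueness.

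The main obstacle is the second step, the rigidity argument showing $Xv_i=u_i$ and $X^\top u_i = v_i$ for every maximizer. Strict positivity of the $\sigma_i$ is essential there, and it is exactly why the reduced SVD, which excludes zero singular values, is used. The remaining steps are routine block bookkeeping.
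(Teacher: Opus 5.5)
Your proof is correct and follows the same route as the paper: parametrize the maximizer set as $U_rV_r^\top + U_\perp Z V_\perp^\top$ with $\|Z\|_{\op}\le 1$, then use Frobenius orthogonality to get $\|X\|_F^2 = r + \|Z\|_F^2$, with equality iff $Z=0$. The only difference is that you derive this parametrization directly from the equality case of Cauchy--Schwarz, whereas the paper identifies $\LMO_{\|\cdot\|_{\op}}(A)$ with $\partial\|A\|_{\nuc}$ and cites Watson's characterization of that subdifferential, which makes your version self-contained.
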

\begin{proof}
By duality of the spectral and nuclear norms~\cite{watson1992characterization},
\[
\LMO_{\|\cdot\|_{\op}}(A) = \partial \|A\|_{\nuc}.
\]

Let $r$ be the rank of $A$, and $A=U \Sigma V^\top$ be full SVD of $A$.
Write
\[
U=\begin{bmatrix}U_r & U_\perp\end{bmatrix},
\qquad
V=\begin{bmatrix}V_r & V_\perp\end{bmatrix},
\]
where $U_r\Sigma_r V_r^\top$ is the reduced SVD of $A$.

By \cite{watson1992characterization}, the subdifferential is equal to
\[
\partial \|A\|_{\operatorname{nuc}} = 
U_r V_r^\top  + \{ U_\perp E V_\perp^\top \mid E \in \R^{(m-r)\times (n-r)}, ~\|E\|_{\operatorname{op}}\leq 1 \}.
\]
Moreover, $\polar(A) = U_r V_r^\top$.

Consider any $E \in \R^{(m-r)\times (n-r)}$ with $\|E\|_{\operatorname{op}}\leq 1$, then
\[
U_r V_r^\top  + U_\perp E V_\perp^\top \in \partial \|A\|_{\operatorname{nuc}}.
\]
By orthogonal invariance of Frobenius norm,
\[
\|U_r V_r^\top  + U_\perp E V_\perp^\top\|^2_F
= \left \| U \begin{bmatrix}
    I_r & 0 \\ 
    0 & E
\end{bmatrix}V^\top \right\|^2_F = r + \|E\|^2_F \geq r = \|\polar(A)\|_F^2.
\]
Equality is attained if and only if $E=0$,
which concludes the result.
\end{proof}

We can now specialize \Cref{alg:efm} for the LMO-induced compression operator when considering the operator norm.

\begin{algorithm}
\caption{Error feedback Muon (EF-Muon)}
\label{algo:EF muon}
\begin{algorithmic}[1]
\Require stepsizes $\lambda_t>0$, momentum parameter $\beta\in[0,1)$, $r = \min\{m,n\}$
\State Initialize $W_0\in\R^{m\times n}$, $E_0=0$, and $M_{-1}=0$
\For{$t=0,1,\dots,T$}
    \State $G_t \in \R^{m\times n}$ \Comment{Sample a stochastic subgradient } 
    \State $
        M_{t}=\beta M_{t-1}+(1-\beta)G_t
    $ \Comment{ Update the momentum} 
    \State
    $
        P_t=E_t+\lambda_t M_{t}
    $  \Comment{Form the error-corrected message} 
    \State  
    $
        W_{t+1}=W_t-\tfrac{1}{r} \Vert P_t \Vert_{\operatorname{nuc}} \polar(P_t) 
    $ \Comment{Update the iterate with a spectral step}
    \State 
    $
        E_{t+1}=E_t + W_{t+1} - (W_t - \lambda_t M_t)
    $ \Comment{Update the error feedback memory}
\EndFor
\end{algorithmic}
\end{algorithm}

\subsection{LMO-based compression operator over layers}
\label{sec:Cartesian-compressor}
Neural networks are not parametrized by a single matrix, but instead by a Cartesian product of matrices
\begin{equation} \label{eq:cart-prod}
    W := (W^1, \ldots , W^L, \theta) 
\end{equation} 
where $W^{\ell} \in \R^{m_{\ell} \times n_{\ell}}$ is the weight matrix for all $\ell=1, \ldots, L$ layers, and $\theta \in \R^k$ contains all the non-matrix parameters of the entire network.

To define a subgradient method we need to now choose a norm over the product space~\eqref{eq:cart-prod}. To have the resulting subgradient method perform Muon-type updates over the weight matrices, we will choose a spectral norm over these parameters. It then remains to aggregate all parameters into one norm and use~\Cref{L:LMO gives compressor operator} to define a valid compression operator.

We now choose a norm over this product space so that 
\eqref{eq:muon-reg} with respect to this norm recovers a variant of MuonMax~\cite{Crawshaw2025,anthology}.
MuonMax was the first proposed variant of Muon~\cite{anthology}. 
We choose this variant because in a systematic study of Muon variants, 
it was found to be best performing regularized variant of Muon~\cite{Crawshaw2025}.
We define such norm in \Cref{proposition-muonmax-norm} along with corresponding compression
operator and LMO.
Note that up to a scalar factor, the compression applies the polar factor separately to
each layer.

\begin{lemma}[Product norm]\label{proposition-muonmax-norm}
Consider a Cartesian product of matrices as in \eqref{eq:cart-prod}, and for each $\ell =1, \ldots, L$ consider $d_\ell = \min\{m_\ell,n_\ell\}$.
If we define, for some $s>0$,
\begin{equation} \label{e-def-muonmax-norm}
    \|W\| := \left( \left( \max_{\ell \in [L]} \sqrt{\frac{d_{\ell}}{s}} \|W^\ell\|_{\text{op}} \right)^2 + k \|\theta\|_\infty^2 \right)^{1/2},
\end{equation}
then $\|\cdot\|$ is a norm on
\[
\mathbb{R}^{m_1\times n_1}\times\cdots\times\mathbb{R}^{m_L\times n_L}\times\mathbb{R}^k.
\]
Its dual norm is
\begin{equation*}
    \|W\|_* = \left( s y(W)^2 + \frac{\|\theta\|_1^2}{k} \right)^{1/2},
\end{equation*}
where
\begin{equation*}
    y(W) = \sum_{\ell=1}^L \frac{1}{\sqrt{d_\ell}} \|W^\ell\|_{\operatorname{nuc}}.
\end{equation*}
Moreover, for $W \neq 0$, the least product-Frobenius norm element of the set
$\LMO_{\|\cdot\|}(W)$ equals
\begin{equation*}
    \LMO^{\min}(W) = \frac{1}{\|W\|_*} \left( s \frac{y(W)}{\sqrt{d_1}} \operatorname{polar}(W^1), \ldots, s \frac{y(W)}{\sqrt{d_L}} \operatorname{polar}(W^L), \frac{\|\theta\|_1}{k}\sign(\theta) \right).
\end{equation*} 

In addition, the corresponding compressor is
\begin{equation}\label{e-muonmax-norm-compressor}
    \mathcal{C}(W) = \min \{s, 1/L \} \left( \frac{y(W)}{\sqrt{d_1}} \operatorname{polar}(W^1), \ldots, \frac{y(W)}{\sqrt{d_L}} \operatorname{polar}(W^L), \frac{\|\theta\|_1}{sk} \sign(\theta) \right).
\end{equation}

\end{lemma}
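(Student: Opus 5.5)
We prove the four claims in the order stated: norm, dual norm, least-norm LMO element, and compressor. The idea is to view $\|\cdot\|$ as the Euclidean norm of two block seminorms. Set
\[
a(W):=\max_{\ell\in[L]}\sqrt{d_\ell/s}\,\|W^\ell\|_{\op},
\qquad
b(W):=\sqrt{k}\,\|\theta\|_\infty,
\]
so that $\|W\|=\|(a(W),b(W))\|_2$.

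\emph{Norm.} The function $a$ is a norm on the matrix blocks, since it is a weighted max of norms. The function $b$ is a norm on $\theta$. Composing the monotone norm $\|\cdot\|_2$ on $\R^2_{\ge 0}$ with these gives absolute homogeneity and the triangle inequality, because $\|\cdot\|_2$ is monotone on the nonnegative orthant. Definiteness holds because $a(W)=b(W)=0$ forces every block to vanish.

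\emph{Dual norm.} Fix $W$ and take $X$ with $a(X)=u$, $b(X)=v$ and $u^2+v^2\le 1$. The dual of a weighted max of norms is the sum of the dual norms with reciprocal weights. The spectral/nuclear duality then gives
\[
\sup_{a(X)\le u}\sum_\ell\langle X^\ell,W^\ell\rangle=u\sum_\ell\sqrt{s/d_\ell}\,\|W^\ell\|_{\nuc}=u\sqrt{s}\,y(W).
\]
Similarly, $\sup_{b(X_\theta)\le v}\langle X_\theta,\theta\rangle=v\|\theta\|_1/\sqrt{k}$. Maximizing $u\sqrt{s}\,y+v\|\theta\|_1/\sqrt k$ over the unit disc is a Cauchy--Schwarz step. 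It yields $\|W\|_*=(s y^2+\|\theta\|_1^2/k)^{1/2}$, with the optimal radii uniquely determined as
\[
u^\star=\frac{\sqrt s\, y}{\|W\|_*},\qquad v^\star=\frac{\|\theta\|_1}{\sqrt k\,\|W\|_*}.
\]

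\emph{LMO.} This is the step I expect to need the most care. The argument is that every maximizer must attain equality in each inequality above. So $a(X)=u^\star$, $b(X)=v^\star$, and for each $\ell$ with $W^\ell\ne0$ we get
\[
X^\ell\in u^\star\sqrt{s/d_\ell}\,\LMO_{\|\cdot\|_{\op}}(W^\ell).
\]
In other words, all nonzero blocks saturate the max. Blocks with $W^\ell=0$ may be anything with $\sqrt{d_\ell/s}\,\|X^\ell\|_{\op}\le u^\star$. The $\theta$ block must lie in $(v^\star/\sqrt k)\LMO_{\|\cdot\|_\infty}(\theta)$. Since the scales $(u^\star,v^\star)$ are forced, the LMO set is a Cartesian product of per-block sets. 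Hence its least product-Frobenius element is obtained blockwise.

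For each matrix block, \Cref{lem-lmo-op-norm-least-norm-polar} gives $\polar(W^\ell)$ as the least-norm element, and this also covers $W^\ell=0$ since $\polar(0)=0$. For the $\theta$ block, the least-norm element of $\LMO_{\|\cdot\|_\infty}(\theta)$ is $\sign(\theta)$ with zeros on null coordinates. Substituting $u^\star,v^\star$ gives the displayed $\LMO^{\min}$:
\[
u^\star\sqrt{s/d_\ell}=\frac{s\,y}{\sqrt{d_\ell}\,\|W\|_*},\qquad \frac{v^\star}{\sqrt k}=\frac{\|\theta\|_1}{k\,\|W\|_*}.
\]

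\emph{Compressor.} We apply \Cref{L:LMO gives compressor operator}, which needs the lower constant $\alpha$. Using $\|W^\ell\|_F^2\le d_\ell\|W^\ell\|_{\op}^2\le s\,a(W)^2$ and $\|\theta\|_2^2\le k\|\theta\|_\infty^2$, we obtain
\[
\|W\|_F^2\le Ls\,a^2+b^2\le \max\{Ls,1\}\,\|W\|^2,
\]
so $\alpha^2=\min\{1,1/(Ls)\}$ is admissible. An upper constant $\beta$ exists by finite-dimensional norm equivalence. Then
\[
\mathcal C(W)=\alpha^2\|W\|_*\LMO^{\min}(W)=\alpha^2\Bigl(\tfrac{s y}{\sqrt{d_\ell}}\polar(W^\ell),\ \tfrac{\|\theta\|_1}{k}\sign(\theta)\Bigr).
\]
Factoring out $s$ and using $s\,\alpha^2=\min\{s,1/L\}$ gives exactly \eqref{e-muonmax-norm-compressor}.
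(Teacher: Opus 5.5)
Your proof is correct, and its skeleton matches the paper's. You decompose the norm blockwise, identify the dual, show the maximizer set is a product of rescaled per-block LMOs, take least-norm elements blockwise via \Cref{lem-lmo-op-norm-least-norm-polar} and $\sign(\theta)$, and plug the same constant $\alpha^2=\min\{1,1/(sL)\}$ into \Cref{L:LMO gives compressor operator}. The paper obtains that constant by the same Frobenius comparison you use.

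The difference lies in how the norm axioms, the dual norm and the LMO set are obtained. The paper writes $\|W\|$ as the hybrid outer norm $\|v\|_{\operatorname{hyb}}=(\|v_{1:L}\|_\infty^2+v_{L+1}^2)^{1/2}$ applied to the $L+1$ scaled block norms. It then imports all three facts from the general composite-norm result \cite[Lemma~3.3]{Crawshaw2025}. You instead nest a Euclidean norm on $(a,b)$ over a weighted max, and derive everything directly from monotonicity, H\"older and Cauchy--Schwarz together with their equality cases.

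Your route is self-contained. It also makes explicit a point the paper only asserts (``after the outer hybrid LMO fixes the nonnegative block scalings the inner maximization separates''). For $W\neq 0$ the radii $(u^\star,v^\star)$ are uniquely forced, and zero blocks range over a full scaled ball. So the LMO set is literally a Cartesian product, which is exactly what licenses computing the least-norm element blockwise. The paper's route is shorter and reusable for any monotone outer norm and any inner norms, at the cost of outsourcing the equality-case analysis to the cited lemma.
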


\begin{proof}
Let $v=(v_1, \ldots, v_{L+1}) \in \mathbb{R}^{L+1}$
with $v\neq 0$.
Define
\[
\|v\|_{\operatorname{hyb}}
=
\left(
\|v_{1:L}\|_\infty^2+v_{L+1}^2
\right)^{1/2}.
\]
This is the ``hybrid'' outer norm introduced in \cite{Crawshaw2025} that satisfies
\begin{eqnarray*}
    \|v\|_{\operatorname{hyb},*} &=& \left(
\|v_{1:L}\|_1^2+v_{L+1}^2 \right)^{1/2} \\
    \LMO_{\operatorname{hyb}}(v) &=& \frac{1}{\|v\|_{\operatorname{hyb},*}} \left( 
    \|v_{1:L}\|_1 \Sign(v_1), \ldots, 
    \|v_{1:L}\|_1 \Sign(v_L), v_{L+1} \right).
\end{eqnarray*} 

Now, set for all $\ell=1, \ldots, L$
\[
g_\ell(X):=\sqrt{\frac{d_\ell}{s}}\|X\|_{\op},
\qquad
g_{L+1}(z):=\sqrt{k}\|z\|_\infty,
\qquad
f(r):=\|r\|_{\operatorname{hyb}}.
\]
Then
\[
\|W\|
=
f\bigl(g_1(W^1),\ldots,g_L(W^L),g_{L+1}(\theta)\bigr),
\] 
so the claim follows from \cite[Lemma~3.3]{Crawshaw2025} after translating  the argmin convention to our argmax convention.

Indeed, since $f^* = \|\cdot\|_{\operatorname{hyb},*}$,
\[
g_{\ell,*}(X)=\frac{\|X\|_{\operatorname{nuc}}}{\sqrt{d_\ell/s}},
\qquad
g_{L+1,*}(z)=\frac{\|z\|_1}{\sqrt{k}},
\]
and therefore
\[
    \|W\|_* = \left( s y(W)^2 + \frac{\|\theta\|_1^2}{k} \right)^{1/2},
\]
where \[
    y(W) = \sum_{\ell=1}^L \frac{1}{\sqrt{d_\ell}} \|W^\ell\|_{\operatorname{nuc}}.
\]
Then the full LMO set is
\begin{eqnarray*} 
\LMO(W)
&=&
\left \{(\phi_1 U^1,\ldots,\phi_L U^L,\phi_{L+1}u^\theta)\mid \phi\in\Phi,\ u=(U^1,\ldots,U^L,u^\theta)\in\mathcal U \right \}\\
\Phi
&=&
\LMO_{\operatorname{hyb}}
\left(
    \frac{\|W^1\|_{\operatorname{nuc}}}{\sqrt{d_1/s}},
    \ldots,
    \frac{\|W^L\|_{\operatorname{nuc}}}{\sqrt{d_L/s}},
    \frac{\|\theta\|_1}{\sqrt{k}}
\right) \\
\mathcal{U} &=&
\left ( 
\LMO_{g_1}(W^1), \ldots, \LMO_{g_L}(W^L), \LMO_{g_{L+1}}(\theta)
\right ).
\end{eqnarray*}
Moreover,
\[
\LMO_{g_\ell}(X)=\sqrt{\frac{s}{d_\ell}}\partial\|X\|_{\nuc},
\qquad
\LMO_{g_{L+1}}(z)=\frac{1}{\sqrt{k}}\Sign(z),
\]
and using $\sqrt{d_\ell/s}>0$,
\[
\Phi = \frac{1}{\|W\|_*} \left (
\sqrt{s} y(W)
\Sign \left ({\|W^1\|_{\operatorname{nuc}}}\right), 
\ldots,
\sqrt{s} y(W)
\Sign \left ({\|W^L\|_{\operatorname{nuc}}}\right), 
\frac{\|\theta\|_1}{\sqrt{k}}
\right ).
\]
We now show that least product-Frobenius norm element of LMO is
\begin{equation} \label{eq:muonmax_lmo_min}
    \LMO^{\min}(W)= \frac{1}{\|W\|_*} \left( s \frac{y(W)}{\sqrt{d_1}} \operatorname{polar}(W^1), \ldots, s \frac{y(W)}{\sqrt{d_L}} \operatorname{polar}(W^L), \frac{\|\theta\|_1}{k}\sign(\theta) \right).
\end{equation}
The nonnegative block scalings
for blocks with $W_\ell \neq 0$ and for vector block are
\[
a_\ell = \frac{s y(W)}{\sqrt{d_\ell}\|W\|_*},
\qquad
b = \frac{\|\theta\|_1}{k\|W\|_*}.
\]
If $W^\ell=0$, the block $\ell$ does not affect $\|W\|_*$ and $y(W)$, 
and the least Frobenius norm choice is $0$.
Indeed, after the outer hybrid LMO fixes the nonnegative block scalings
the inner maximization separates over blocks.
For each matrix block, the least Frobenius norm maximizer of
\[
\max_{\|Z\|_{\op}\leq  a_\ell}\langle W^\ell, Z\rangle
\]
is $a_\ell\polar(W^\ell)$
by \Cref{lem-lmo-op-norm-least-norm-polar}.
For the vector block, the least Euclidean norm maximizer of
\[
\max_{\|z\|_\infty\leq b}\langle \theta,z\rangle
\]
is $b\sign(\theta)$. 
Therefore, \eqref{eq:muonmax_lmo_min} gives the least Frobenius norm element of
the LMO set.

Finally,
\begin{eqnarray*}
    \|W\|^2 &=& \left( \max_{\ell \in [L]} \sqrt{\frac{d_{\ell}}{s}} \|W^\ell\|_{\text{op}} \right)^2 + k \|\theta\|_\infty^2 \\
    &\geq& \left( \max_{\ell \in [L]} \frac{1}{\sqrt{s}} \|W^\ell\|_F \right)^2 + \|\theta\|_F^2 \\
    &=& \frac{1}{s} \left( \max_{\ell \in [L]} \|W^\ell\|_F \right)^2 + \|\theta\|_F^2 \\
    &\geq& \frac{1}{sL} \left( \sum_{\ell \in [L]} \|W^\ell\|_F^2 \right) + \|\theta\|_F^2 \\
    &\geq& \min \{1, 1/sL\} \left( \sum_{\ell \in [L]} \|W^\ell\|_F^2 + \|\theta\|_F^2 \right) \\
    &=& \min \{1, 1/sL\} \|W\|_F^2.
\end{eqnarray*}
Hence $\alpha = \min \{1, 1/\sqrt{sL} \}$ in \Cref{L:LMO gives compressor operator}, 
so
\[
    \mathcal{C}(W) = \min \{s, 1/L \} \left( \frac{y(W)}{\sqrt{d_1}} \operatorname{polar}(W^1), \ldots, \frac{y(W)}{\sqrt{d_L}} \operatorname{polar}(W^L), \frac{\|\theta\|_1}{sk} \sign(\theta) \right).
\]
\end{proof}

\subsection{MuonMax}\label{appdx-muonmax}
In this section we define MuonMax in \Cref{alg:reg-muonmax}, 
which is a variant of \eqref{eq:muon-reg} with respect to the 
product norm in \Cref{proposition-muonmax-norm}.

\begin{algorithm}[h!]
\caption{MuonMax}
\label{alg:reg-muonmax}
\begin{algorithmic}[1]
\Require stepsize schedule $\{\lambda_t\}_{t=0}^\infty$, momentum parameter $\beta\in[0,1)$, scaling parameter $s>0$
\State Initialize $W_0 \in \mathbb{R}^{m_1\times n_1}\times\cdots\times\mathbb{R}^{m_L\times n_L}\times\mathbb{R}^k$ 
\State Set $M_{-1}=0$
\For{$t=0,1,\dots,T$}
    \State $G_t=(G_t^1, \ldots, G_t^L, g_t^\theta)$ \Comment{Sample a stochastic subgradient } 
    \State $
        M_{t}=\beta M_{t-1}+(1-\beta)G_t
    $ \Comment{ Update the momentum}  
    \State $y(M_t) = \sum_{\ell=1}^L \frac{1}{\sqrt{d_\ell}} \|M_t^\ell\|_{\operatorname{nuc}}$
    \State 
    $
        \LMO(M_t) = \frac{1}{\|M_t\|_*}  \left(
         s\frac{y(M_t)}{\sqrt{d_1}}\operatorname{polar}(M_t^1),
        \ldots,
         s\frac{y(M_t)}{\sqrt{d_L}}\operatorname{polar}(M_t^L),
        \frac{\|m_t^\theta\|_1}{k}\sign(m_t^\theta)
    \right)
    $ \Comment{LMO}
    \State  
    $
        W_{t+1}=W_t-\lambda_t \|M_t\|_* \LMO(M_t)
    $ \Comment{Update the iterate} 
\EndFor
\State \Return $W_T$ 
\end{algorithmic}
\end{algorithm}

We also define EF-MuonMax in \Cref{alg:ef-muon-layers}.
This algorithm combines
error feedback with \Cref{alg:reg-muonmax}, by specializing \Cref{alg:efm} to 
compressor in \Cref{proposition-muonmax-norm}.
Thus by \Cref{thm:efm-convex} iterates of
\Cref{alg:ef-muon-layers} converge to a minimizer.

With a slight abuse of notation, we use the name MuonMax also for variants in 
which the signed momentum paired optimizer is replaced by another paired optimizer.

\begin{algorithm}[h!]
\caption{EF-MuonMax}
\label{alg:ef-muon-layers}
\begin{algorithmic}[1]
\Require stepsize $\lambda>0$, momentum parameter $\beta\in[0,1)$, scaling parameter $s>0$
\State Initialize $W_0 \in \mathbb{R}^{m_1\times n_1}\times\cdots\times\mathbb{R}^{m_L\times n_L}\times\mathbb{R}^k$ 
\State Set $E_0=0$, and set $M_{-1}=0$
\For{$t=0,1,\dots,T$}
    \State $G_t=(G_t^1, \ldots, G_t^L, g_t^\theta)$  \Comment{Sample a stochastic subgradient } 
    \State $
        M_{t}=\beta M_{t-1}+(1-\beta)G_t
    $ \Comment{ Update the momentum} 
    \State
    $
        P_t=E_t+\lambda M_{t}
    $  \Comment{Form the error-corrected message} 
    \State $y(P_t) = \sum_{\ell=1}^L \frac{1}{\sqrt{d_\ell}} \|P_t^\ell\|_{\operatorname{nuc}}$
    \State 
    $
        \mathcal{C}(P_t)=\min \{s, \frac{1}{L} \} \left(
         \frac{y(P_t)}{\sqrt{d_1}}\operatorname{polar}(P_t^1),
        \ldots,
         \frac{y(P_t)}{\sqrt{d_L}}\operatorname{polar}(P_t^L),
        \frac{\|p_t^\theta\|_1}{sk}\sign(p_t^\theta)
    \right)
    $ \Comment{Compressor}
    \State  
    $
        W_{t+1}=W_t-\mathcal{C}(P_t)
    $ \Comment{Update the iterate}
    \State 
    $
        E_{t+1}=P_t-\mathcal{C}(P_t)
    $ \Comment{Update the error feedback memory}
\EndFor
\State \Return $\displaystyle W_T$
\end{algorithmic}
\end{algorithm}

\subsection{Convergence bounds for error feedback with momentum}

The convergence result from the main body \Cref{thm:efm-convex} is a
direct corollary of \Cref{thm:efm-convex-general} below.

\begin{theorem}
\label{thm:efm-convex-general}
Consider \Cref{alg:efm}. Suppose that  $f:\R^{m\times n}\to\R$ is convex and admits a minimizer $W^\star$. At iteration $t$ let $G_t$ be a stochastic subgradient such that 
\[
    \mathbb{E}_t[G_t]\in \partial f(W_t),
    \qquad 
    \mathbb{E}_t\|G_t\|_F^2\leq \sigma^2.
\]
Let 
$\cC$ be a $\delta$-approximate compressor, meaning that for all $W\in\R^{m\times n}$,
\[
    \|\cC(W)-W\|_F^2\leq (1-\delta)\|W\|_F^2,
    \qquad \delta\in(0,1].
\]
Then, for the averaged iterate $\bar W_T=\frac{1}{T+1}\sum_{t=0}^T W_t$ and non-increasing stepsizes $\lambda_t$, we have
\[
    \mathbb{E} \left[ f(\bar{W}_T) \right] - f(W^\star) \leq \frac{\|W_0-W^\star\|_F^2}{2 \lambda_T (T+1)} + \sigma^2 \left( \frac{2 \sqrt{1-\delta}}{\delta} + \frac{\beta}{1-\beta} + \frac{1}{2} \right) \frac{\sum_{t=0}^T \lambda_t^2}{\lambda_T (T+1)}.
\]
\end{theorem}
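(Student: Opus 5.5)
The proof will use the standard virtual-iterate technique for error feedback, as in \cite{pmlr-v97-karimireddy19a,StiCorJag18}, together with a second correction that absorbs the momentum. First define $\tilde W_t := W_t - E_t$. From \Cref{alg:efm} we have $W_{t+1} = W_t - \cC(P_t)$ and $E_{t+1} = P_t - \cC(P_t) = E_t + \lambda_t M_t - \cC(P_t)$. Subtracting gives
\[
\tilde W_{t+1} = \tilde W_t - \lambda_t M_t,
\]
so the virtual sequence runs uncompressed SGD with momentum.

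To handle the momentum, introduce a second auxiliary sequence $Z_t := \tilde W_t - \frac{\beta}{1-\beta}\lambda_{t-1} M_{t-1}$, in the style of the iterate-moving-average reformulation of heavy ball in \cite{GarGow23}. For constant $\lambda$ this gives $Z_{t+1} = Z_t - \lambda G_t$. With non-increasing $\lambda_t$ there is an extra term $\frac{\beta}{1-\beta}(\lambda_{t-1}-\lambda_t)M_{t-1}$. If this gets messy, I will instead expand $\|\tilde W_{t+1}-W^\star\|_F^2$ directly, write $M_t = (1-\beta)\sum_s \beta^{t-s}G_s$, and bound the cross terms $\langle M_t - G_t, \tilde W_t - W^\star\rangle$. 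The target is a one-step inequality of the form
\[
2\lambda_t\,\mathbb{E}\langle \mathbb{E}_t G_t, W_t - W^\star\rangle \le \mathbb{E}\|Z_t-W^\star\|^2 - \mathbb{E}\|Z_{t+1}-W^\star\|^2 + \lambda_t^2\sigma^2 + 2\lambda_t\,\mathbb{E}\langle G_t, W_t - Z_t\rangle .
\]
The last term splits into $W_t - \tilde W_t = E_t$ (the compression error) and $\tilde W_t - Z_t$ (the momentum lag). I will bound each by Young's inequality, $2\lambda_t\langle G_t, X\rangle \le \lambda_t^2\rho\|G_t\|^2 + \rho^{-1}\|X\|^2$, with a suitable $\rho$.

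The key auxiliary bounds are as follows. For the momentum, convexity of the norm gives $\mathbb{E}\|M_t\|_F^2 \le \sigma^2$, since $M_t$ is a convex combination of the $G_s$ with weights summing to at most one. For the error, the compressor property gives
\[
\|E_{t+1}\|_F \le \sqrt{1-\delta}\,\|E_t + \lambda_t M_t\|_F .
\]
Unrolling this and using the triangle inequality, $\|E_{t+1}\| \le \sum_{s\le t} (1-\delta)^{(t-s+1)/2}\lambda_s\|M_s\|$. Squaring with Cauchy--Schwarz and summing the geometric series in $q = \sqrt{1-\delta}$, where $\sum_k q^k \le 1/(1-q) \le 2/\delta$, gives
\[
\sum_t \mathbb{E}\|E_t\|^2 \lesssim \frac{4(1-\delta)}{\delta^2}\sigma^2\sum_t \lambda_t^2,
\]
using monotonicity of $\lambda_t$. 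I expect choosing the Young constant to produce exactly the stated factor $\frac{2\sqrt{1-\delta}}{\delta}$, and the momentum lag similarly to produce $\frac{\beta}{1-\beta}$.

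Finally, I divide the one-step inequality by $\lambda_t$, or equivalently use $\lambda_T \le \lambda_t$ on the left, and telescope. Using $\mathbb{E}_t G_t \in \partial f(W_t)$ gives $f(W_t) - f^\star \le \langle \mathbb{E}_t G_t, W_t - W^\star\rangle$. Summing over $t$, dividing by $T+1$, and applying Jensen's inequality to $\bar W_T$ then gives the stated bound, with $Z_0 = W_0$ because $E_0 = 0$ and $M_{-1} = 0$.

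The main obstacle is the telescoping with non-constant stepsizes in the presence of both the $E_t$ and momentum corrections. The approach I will try first is to multiply the inequality by $1/\lambda_t$ and sum the distance terms by parts, which yields the $\frac{\|W_0-W^\star\|^2}{2\lambda_T(T+1)}$ term. This needs a bounded-domain-free argument, so the Abel summation must be arranged so that the drift terms have the favorable sign. If that fails, I will keep $\lambda_t$ weights and lower-bound them by $\lambda_T$.
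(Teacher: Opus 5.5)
Your handling of the compression error is fine. The virtual iterate $\widetilde W_t=W_t-E_t$ satisfies $\widetilde W_{t+1}=\widetilde W_t-\lambda_tM_t$. Your unrolled bound $\sum_t\mathbb{E}\|E_t\|_F^2\le\frac{4(1-\delta)}{\delta^2}\sigma^2\sum_t\lambda_t^2$ holds and does not even need monotonicity. Young's inequality with $\rho=2\sqrt{1-\delta}/\delta$ then reproduces the constant $\frac{2\sqrt{1-\delta}}{\delta}$.

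The gap is the momentum step with non-constant stepsizes, which you flag as the main obstacle but do not resolve. With $a=\frac{\beta}{1-\beta}$ and $Z_t=\widetilde W_t-a\lambda_{t-1}M_{t-1}$ you get $Z_{t+1}=Z_t-\lambda_tG_t+a(\lambda_{t-1}-\lambda_t)M_{t-1}$. Your displayed one-step inequality is the SGD inequality for the exact recursion $Z_{t+1}=Z_t-\lambda_tG_t$, so it is valid only for constant stepsizes. In general, expanding the square leaves $2a(\lambda_{t-1}-\lambda_t)\langle M_{t-1},Z_t-W^\star\rangle$. This term has no definite sign and scales with the unbounded distance $\|Z_t-W^\star\|_F$.

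None of your fallbacks removes it. Dividing by $\lambda_t$ and summing by parts produces $\sum_t(\lambda_t^{-1}-\lambda_{t-1}^{-1})\mathbb{E}\|Z_t-W^\star\|_F^2$ with nonnegative coefficients, which is exactly where a bounded domain is normally assumed. Keeping the $\lambda_t$ weights telescopes the SGD part but leaves the drift term untouched. Expanding $\|\widetilde W_{t+1}-W^\star\|_F^2$ directly does not help as stated: bounding $\langle M_t-G_t,\widetilde W_t-W^\star\rangle$ in absolute value again needs bounded iterates. If you instead expand $M_t$ into past subgradients and shift the inner products back in time, the weight on $f(W_s)-f(W^\star)$ becomes $(1-\beta)\sum_{t=s}^{T}\beta^{t-s}\lambda_t$. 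That is only $(1-\beta)\lambda_T$ at $s=T$, so you recover the stated bound on $\bar W_T$ only up to a factor $1/(1-\beta)$.

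The missing idea is a momentum correction whose coefficients are defined backward from the horizon $T$, which is legitimate because the $\lambda_t$ are deterministic. Set $b_{T+1}=a\lambda_T$, $b_t=\beta(\lambda_t+b_{t+1})$, $\eta_t=(1-\beta)(\lambda_t+b_{t+1})$ and $Y_t=\widetilde W_t-b_tM_{t-1}$. Then
\[
Y_{t+1}=\widetilde W_t-(\lambda_t+b_{t+1})\bigl(\beta M_{t-1}+(1-\beta)G_t\bigr)=Y_t-\eta_tG_t
\]
holds exactly, with $Y_0=W_0$. Unrolling gives $b_t\le a\lambda_t$ and shows that $\eta_t$ is a convex combination of $\lambda_t,\dots,\lambda_T$, so $\lambda_T\le\eta_t\le\lambda_t$.

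The rest of your plan then goes through with $\eta_t$ as the weight in place of $\lambda_t$:
\begin{enumerate}
\item Apply the SGD inequality to $Y_t$.
\item Split $\langle S_t,Y_t-W^\star\rangle=\langle S_t,W_t-W^\star\rangle-\langle S_t,E_t\rangle-b_t\langle S_t,M_{t-1}\rangle$, where $S_t=\mathbb{E}_t[G_t]$.
\item Use your two cross-term bounds; the momentum one gives $2a\sigma^2\lambda_t^2$ because $b_t\eta_t\le a\lambda_t^2$. Also use $\eta_t^2\le\lambda_t^2$.
\item Finish with $\eta_t\ge\lambda_T$ and $f(W_t)-f(W^\star)\ge 0$, divide by $\lambda_T(T+1)$, and apply Jensen.
\end{enumerate}
The terminal value $b_{T+1}=a\lambda_T$ is what makes $\eta_T=\lambda_T$ and avoids the $1/(1-\beta)$ loss described above.
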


\begin{proof}[Proof of \Cref{thm:efm-convex-general}]
Define
\[
    \widetilde W_t:=W_t-E_t,
    \qquad \Delta_t := \cC(P_t).
\]
Using the updates of \Cref{alg:efm}, we have
\begin{align*}
    \widetilde W_{t+1}
    &=
    W_{t+1}-E_{t+1}\\
    &=
    \bigl(W_t-\Delta_t\bigr)-\bigl(P_t-\Delta_t\bigr)\\
    &=
    W_t-P_t\\
    &=
    W_t-(E_t+\lambda_t M_t)\\
    &=
    \widetilde W_t-\lambda_t M_t.
\end{align*}
Now set
\[
    a:=\frac{\beta}{1-\beta},
    \qquad
    b_{T+1} := a \lambda_T,
    \qquad
    b_t := \beta (\lambda_t + b_{t+1})
    \qquad
    \eta_t := (1 - \beta) (\lambda_t + b_{t+1}),
\]
and
\[
    Y_t := \widetilde W_t- b_t M_{t-1}.
\]
Then
\begin{align*}
    Y_{t+1}
    &=
    \widetilde W_{t+1}-b_{t+1} M_t \\
    &=
    \widetilde W_t-(\lambda_t+b_{t+1})M_t \\
    &=
    \widetilde W_t-(\lambda_t+b_{t+1})
        \bigl(\beta M_{t-1}+(1-\beta)G_t\bigr) \\
    &=
    \widetilde W_t-b_tM_{t-1}-\eta_tG_t \\
    &=
    Y_t-\eta_tG_t.
\end{align*}
Next, unrolling the recursion for $b_t$, and using
$b_{T+1}=a\lambda_T$, gives
\[
    b_t
    =
    \sum_{j=t}^{T}\beta^{j-t+1}\lambda_j
    +
    \beta^{T-t+1}a\lambda_T.
\]
Since
\begin{align*}
    \sum_{j=t}^{T}\beta^{j-t+1}
    +
    a\beta^{T-t+1}
    =
    \sum_{k=1}^{T-t+1}\beta^k
    +
    a\beta^{T-t+1}
    =
    a(1 - \beta^{T-t+1}) + \beta^{T-t+1} a
    =
    a,
\end{align*}
and $\lambda_j \leq \lambda_t$ for all $j \geq t$, we have $b_t \leq a \lambda_t$.
Finally,
\[
    \eta_t
    =
    (1-\beta)(\lambda_t+b_{t+1}) 
    =
    (1-\beta)\sum_{j=t}^{T}\beta^{j-t}\lambda_j
    +
    \beta^{T-t+1}\lambda_T.
\]
The coefficients are nonnegative and sum to
\[
    (1-\beta)\sum_{j=t}^{T}\beta^{j-t}
    +
    \beta^{T-t+1}
    =
    1.
\]
Thus $\eta_t$ is a convex combination of
$\lambda_t,\lambda_{t+1},\dots,\lambda_T$. Since
$\lambda_t$ is nonincreasing,
$\lambda_T\le \eta_t\le \lambda_t$.

Let
\[
    S_t:=\mathbb{E}_t[G_t]\in\partial f(W_t).
\]
From the recursion $Y_{t+1}=Y_t- \eta_t G_t$, we obtain
\begin{align*}
    \mathbb{E}_t\|Y_{t+1}-W^\star\|_F^2
    &=
    \mathbb{E}_t\|Y_t-W^\star-\eta_t G_t\|_F^2\\
    &=
    \|Y_t-W^\star\|_F^2
    -2 \eta_t \langle \mathbb{E}_t[G_t], Y_t-W^\star\rangle
    + \eta_t^2 \mathbb{E}_t\|G_t\|_F^2\\
    &\leq
    \|Y_t-W^\star\|_F^2
    -2 \eta_t \langle S_t, Y_t-W^\star\rangle
    +\eta_t^2 \sigma^2.
\end{align*}
Since
\[
    Y_t=\widetilde W_t- b_t M_{t-1} = W_t-E_t- b_t M_{t-1},
\]
we have
\[
    \langle S_t, Y_t-W^\star\rangle
    =
    \langle S_t, W_t-W^\star\rangle
    -
    \langle S_t, E_t\rangle
    -
    b_t \langle S_t, M_{t-1}\rangle.
\]
Substituting this identity into the previous inequality and taking full expectation gives
\begin{align*}
    \mathbb{E} \|Y_{t+1}-W^\star\|_F^2 
    &\leq
    \mathbb{E} \|Y_t-W^\star\|_F^2
    -2\eta_t \mathbb{E} \langle S_t, W_t-W^\star\rangle 
    +2\eta_t \mathbb{E} \langle S_t, E_t\rangle \\
    &+2 b_t \eta_t \mathbb{E} \langle S_t, M_{t-1}\rangle
    +\eta_t^2\sigma^2.
\end{align*}
Rearranging and summing over $t = 0, \ldots, T$ gives
\begin{align}
    2 \sum_{t=0}^T \eta_t \mathbb{E} \langle S_t, W_t-W^\star\rangle \nonumber 
    &\leq
    \mathbb{E} \|Y_0-W^\star\|_F^2
    + 2 \sum_{t=0}^T \eta_t \mathbb{E} \langle S_t, E_t\rangle \\
    &+ 2 \sum_{t=0}^T b_t \eta_t \mathbb{E} \langle S_t, M_{t-1}\rangle
    + \sigma^2 \sum_{t=0}^T \eta_t^2.
    \label{eq:main-recursion-efm}
\end{align}
We can bound each of the terms on the RHS of \eqref{eq:main-recursion-efm} as follows.

First, we bound $\mathbb{E} \|M_t\|_F^2$. By unfolding the recursion
\[
    M_t=\beta M_{t-1}+(1-\beta)G_t,
\]
and using $M_{-1}=0$, we obtain
\[
    M_t
    =
    (1-\beta)\sum_{i=0}^t \beta^{t-i} G_i.
\]
The coefficients $(1-\beta)\beta^{t-i}$ are nonnegative and sum to
\[
    (1-\beta)\sum_{i=0}^t \beta^{t-i}
    =
    (1-\beta)\sum_{j=0}^t \beta^j
    =
    1-\beta^{t+1}
    \leq 1.
\]
Hence $M_t$ is a convex combination of $G_0,\dots,G_t$ and the origin. Since $V\mapsto \|V\|_F^2$ is convex,
\[
    \|M_t\|_F^2
    \leq
    (1-\beta)\sum_{i=0}^t \beta^{t-i}\|G_i\|_F^2.
\]
Taking expectations and using $\mathbb{E}\|G_i\|_F^2\leq \sigma^2$, we get
\[
    \mathbb{E}\|M_t\|_F^2
    \leq
    (1-\beta)\sum_{i=0}^t \beta^{t-i}\sigma^2
    \leq
    \sigma^2.
\]

Next, recall that
\[
    E_{t+1}=P_t-\cC(P_t),
    \qquad
    P_t=E_t+\lambda_t M_t.
\]
The compressor assumption implies
\[
    \|E_{t+1}\|_F^2
    =
    \|P_t-\cC(P_t)\|_F^2
    \leq
    (1-\delta)\|P_t\|_F^2
    =
    (1-\delta)\|E_t+\lambda_t M_t\|_F^2.
\]
If $\delta=1$, then 
\[
E_{t+1}=P_t-\cC(P_t)=0, 
\qquad t= 0, 1, \ldots.
\]
In this case, all terms involving $E_t$ vanish, and the remainder of the proof goes through unchanged. Thus it remains only to consider the case $0<\delta<1$.
Assume now that $0<\delta<1$. Using
\[
\|U+V\|_F^2\le (1+\alpha)\|U\|_F^2+(1+1/\alpha)\|V\|_F^2
\]
with
\[
\alpha=\frac{\delta}{2(1-\delta)},
\] 
we get
\[
    \|E_t+\lambda_t M_t\|_F^2
    \leq
    \frac{2-\delta}{2(1-\delta)}\|E_t\|_F^2
    +
    \frac{2 -\delta}{\delta}\lambda_t^2\|M_t\|_F^2.
\]
Multiplying by $(1-\delta)$ yields
\[
    \|E_{t+1}\|_F^2
    \leq
    \Bigl(1-\frac{\delta}{2}\Bigr)\|E_t\|_F^2
    +
    \frac{2(1-\delta)}{\delta}\lambda_t^2\|M_t\|_F^2.
\]
Taking expectations and using $\mathbb{E}\|M_t\|_F^2\leq \sigma^2$,
\[
    \mathbb{E}\|E_{t+1}\|_F^2
    \leq
    \Bigl(1-\frac{\delta}{2}\Bigr)\mathbb{E}\|E_t\|_F^2
    +
    \frac{2(1-\delta)}{\delta}\lambda_t^2\sigma^2.
\]
Since $E_0=0$, summing this inequality and multiplying by $2/\delta$ yields
\[
    \sum_{t=0}^T \mathbb{E}\|E_t\|_F^2
    \leq
    \frac{4(1-\delta)}{\delta^2} \sigma^2 \sum_{t=0}^T \lambda_t^2.
\]

By Cauchy--Schwarz and $\eta_t\le \lambda_t$,
\begin{align*}
    2 \sum_{t=0}^{T} \eta_t\mathbb E\langle S_t,E_t\rangle
    &\leq
    2\sum_{t=0}^{T}\eta_t
    \sqrt{\mathbb E\|S_t\|_F^2}
    \sqrt{\mathbb E\|E_t\|_F^2} \\
    &\le
    2\sigma\sum_{t=0}^{T}\eta_t \sqrt{\mathbb{E}\|E_t\|_F^2} \\
    &\le
    2\sigma
    \left(\sum_{t=0}^{T}\eta_t^2\right)^{1/2}
    \left(\sum_{t=0}^{T} \mathbb{E}\|E_t\|_F^2 \right)^{1/2} \\
    &\le
    2\sigma
    \left(\sum_{t=0}^{T}\lambda_t^2\right)^{1/2}
    \left(
        \frac{4(1-\delta)}{\delta^2}\sigma^2
        \sum_{t=0}^{T}\lambda_t^2
    \right)^{1/2} \\
    &=
    4\sigma^2\frac{\sqrt{1-\delta}}{\delta}
    \sum_{t=0}^{T}\lambda_t^2.
\end{align*}

Similarly,
\begin{align*}
    2 b_t \eta_t \mathbb{E}\langle S_t,M_{t-1}\rangle
    &\leq
    2 b_t \eta_t \mathbb{E}\bigl[\|S_t\|_F\|M_{t-1}\|_F\bigr]\\
    &\leq
    2 b_t \eta_t
    \sqrt{\mathbb{E}\|S_t\|_F^2}
    \sqrt{\mathbb{E}\|M_{t-1}\|_F^2}\\
    &\leq
    2 b_t \eta_t \sigma^2 \\
    &\leq
    2a \sigma^2 \lambda_t^2,
\end{align*}
where the last inequality uses $b_t \leq a \lambda_t$ and $\eta_t \leq \lambda_t$.

Applying the two bounds above to \eqref{eq:main-recursion-efm},
\begin{align*}
    2 \sum_{t=0}^T \eta_t \mathbb{E} \langle S_t, W_t-W^\star\rangle
    &\leq
    \mathbb{E} \|Y_0-W^\star\|_F^2
    + 4\sigma^2\frac{\sqrt{1-\delta}}{\delta} \sum_{t=0}^{T}\lambda_t^2
    + 2a \sigma^2 \sum_{t=0}^T \lambda_t^2
    + \sigma^2 \sum_{t=0}^T \eta_t^2 \\
    &\leq
    \mathbb{E} \|Y_0-W^\star\|_F^2
    + \sigma^2 \left( \frac{4 \sqrt{1-\delta}}{\delta} + 2a + 1 \right) \sum_{t=0}^T \lambda_t^2.
\end{align*}
Using $Y_0 = W_0$ and $f(W_t) - f(W^\star) \leq \langle S_t, W_t - W^\star \rangle$ from convexity of $f$, this means
\begin{equation*}
    \sum_{t=0}^T \eta_t \mathbb{E} \left[ f(W_t) - f(W^\star) \right] \leq \frac{\|W_0-W^\star\|_F^2}{2} + \sigma^2 \left( \frac{2 \sqrt{1-\delta}}{\delta} + a + \frac{1}{2} \right) \sum_{t=0}^T \lambda_t^2.
\end{equation*}
Since $\eta_t \geq \lambda_T$ and
$f(W_t)-f(W^\star)\geq 0$, 
dividing both sides by $\lambda_T (T+1)$ yields
\begin{equation*}
    \frac{1}{T+1} \sum_{t=0}^T \mathbb{E} \left[ f(W_t) - f(W^\star) \right] \leq \frac{\|W_0-W^\star\|_F^2}{2 \lambda_T (T+1)} + \sigma^2 \left( \frac{2 \sqrt{1-\delta}}{\delta} + a + \frac{1}{2} \right) \frac{\sum_{t=0}^T \lambda_t^2}{\lambda_T (T+1)},
\end{equation*}
so by convexity of $f$,
\begin{equation*}
    \mathbb{E} \left[ f(\bar{W}_T) \right] - f(W^\star) \leq \frac{\|W_0-W^\star\|_F^2}{2 \lambda_T (T+1)} + \sigma^2 \left( \frac{2 \sqrt{1-\delta}}{\delta} + a + \frac{1}{2} \right) \frac{\sum_{t=0}^T \lambda_t^2}{\lambda_T (T+1)},
\end{equation*}
which proves the claimed convergence rate.
\end{proof}

We can now prove \Cref{thm:efm-convex} by plugging in a stepsize choice to
\Cref{thm:efm-convex-general}.

\MainTheoremEFM*

\begin{proof}[Proof of \Cref{thm:efm-convex}]
Recall that \Cref{thm:efm-convex-general} implies for any non-increasing
stepsizes $\lambda_t$ that
\begin{equation*}
    \mathbb{E} \left[ f(\bar{W}_T) \right] - f(W^\star) \leq \frac{\|W_0-W^\star\|_F^2}{2 \lambda_T (T+1)} + \sigma^2 \left( \frac{2 \sqrt{1-\delta}}{\delta} + \frac{\beta}{1-\beta} + \frac{1}{2} \right) \frac{\sum_{t=0}^T \lambda_t^2}{\lambda_T (T+1)},
\end{equation*}
For the particular stepsize choice $\lambda_t = 1/\sqrt{t+1}$, we have
\[
    \sum_{t=0}^{T}\lambda_t^2=\sum_{t=1}^{T+1}\frac{1}{t}\leq 1 + \log (T+1),
\]
so
\begin{equation*}
    \mathbb{E} \left[ f(\bar{W}_T) \right] - f(W^\star) \leq \frac{\|W_0-W^\star\|_F^2}{2 \sqrt{T+1}} + \sigma^2 \left( \frac{2 \sqrt{1-\delta}}{\delta} + \frac{\beta}{1-\beta} + \frac{1}{2} \right) \frac{1 + \log(T+1)}{\sqrt{T+1}}.
\end{equation*}
\end{proof}

\paragraph{Background.}
Error feedback has been considered with Muon~\cite{gruntkowska2025errorfeedbackmuonfriends} but only in the smooth and distributed setting, and where the compression is done in addition to the matrix sign operation. That is, the matrix sign itself is not interpreted 
as a compression.

\section{Convergence, complexity and limitations that depend on $\beta$}
\label{asec:conv-complex-limit}
Our work shows that Muon does not converge on a certain class of functions. But there may still exist a \emph{complexity} result. Here we will clarify the difference, and properly define these notions. 

By convergence, what we mean is an any time convergence result~\cite{GarGow23}. In slightly informal terms, we say that a sequence of iterates $W_t$ (or some appropriate average of iterates) converges anytime if the suboptimality converges to zero, that is $\lim_{t\rightarrow \infty }f(W_t) -\inf f =0$. Our results in Counterexamples~\ref{Cex:offline stepsizes momentum} and~\ref{thm-counterex-dec-steps}  show that this is not possible, because for fixed $\beta$ the suboptimality remains bounded away from zero for \emph{all time}.

But sometimes it is enough to show that the suboptimality can be small, but not necessarily converge to zero. This is where complexity results matter. For instance,  a typical complexity would be as follows: given $T$ number of iterations, by setting the hyperparameters depending on $T$, show that $f(W_T) -\inf f \leq \frac{1}{T}$. Our counterexamples do not rule out such a result because given $T$ we could choose  $\beta = 1-\frac{1}{2T}$ and our lower bound~\eqref{eq:nonconv} from Counterexample~\ref{thm-counterex-dec-steps}  states that \[f(W_T)-\inf f \geq \frac{1}{2T}, \]
which does not rule out the possibility that $f(W_T)-\inf f \leq \frac{1}{T}$.  
As an illustration of such complexity result, in the stochastic smooth   setting, $O(T^{-1/4})$ stationarity complexity results have been established for Muon  by choosing $\beta = 1 - O(1/\sqrt{T})$~\cite[Theorem 4.1]{shen2025convergenceanalysisofmuon}.
But counter to this point, we know that when $\beta =1$, Muon~\eqref{eq:muon} 
will halt since the momentum buffer will become stationary $M_t = M_{-1}$, thus also ruling out convergence.

This brings us to another one of our stated limitations. All counterexamples rely on a fixed momentum  $\beta \in [0,1)$ coefficient. As such, there could exist a convergence result where, for instance $\beta_t = 1-\frac{1}{t}$. Our attempts at establishing such a proof  with such a scheduled momentum coefficient  have failed, so the question remains open whether convergence can be achieved with such a schedule.

\section{Experiments on counterexample function}
In this section we run Muon algorithm \eqref{eq:muon}, 
and \Cref{alg:efm}
with the function from \Cref{def:counterexample matrix kinky}, 
see \Cref{fig-counterex-level-set}.

\begin{figure}[h!]
\begin{center}
    \centering
        \includegraphics[width=0.7\textwidth]{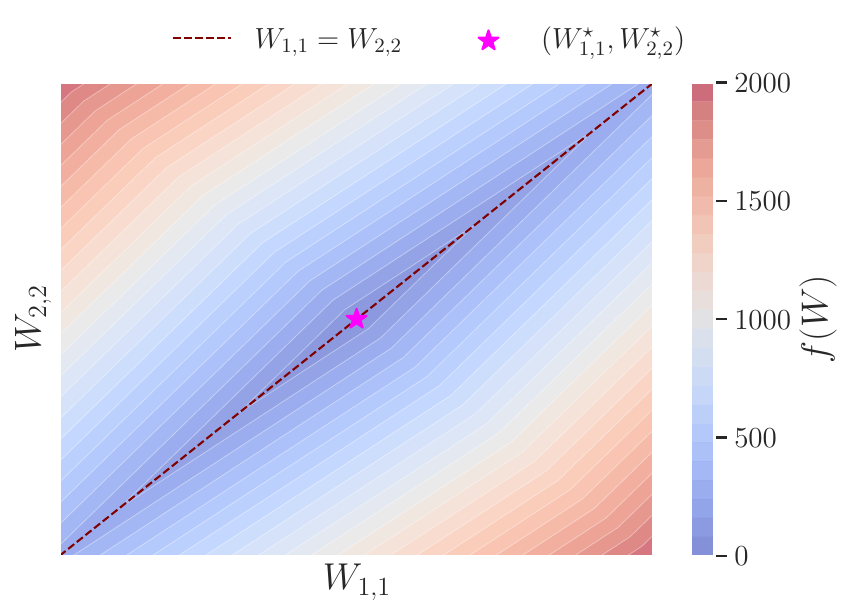}
    \caption{Level sets of $f$ from \Cref{def:counterexample matrix kinky}.} 
    \label{fig-counterex-level-set}
\end{center}
\end{figure}

\paragraph{Problem instance.}
Consider minimizing function from \Cref{def:counterexample matrix kinky}
with 
\[
c = \frac{1-\beta}{2(1+\beta)}, \qquad \beta = 0.9.
\] 
We choose the initial point satisfying
\[
(W_0)_{1,1} = 1 + \log 2, \qquad (W_0)_{2,2} = 1 - \log 2,
\] 
so that the conditions in Counterexample~\ref{thm-counterex-dec-steps}
hold. 
We set learning rate in \eqref{eq:muon} as $\lambda_t=\frac{1}{t+1}$,
and $\lambda_t=\frac{1}{\sqrt{t+1}}$ in \Cref{alg:efm}
for all $t=0, \ldots, T-1$.
We run the methods for $T=5000$ iterations.

\paragraph{Muon algorithm.}
Counterexample~\ref{thm-counterex-dec-steps} shows that, for every
$\beta \in [0,1)$, there exists an initialization $W_0$ for which
Muon algorithm fails to converge whenever
\[
c \in \left(0,\frac{1-\beta}{1+\beta}\right).
\]
\Cref{fig-counterex-muon-loss-w} illustrates this behavior for the
choices above. The iterates satisfy the invariant
\[
(W_t)_{1,1} + (W_t)_{2,2} = 2,
\]
so they cycle along a line in the
$((W_t)_{1,1},(W_t)_{2,2})$ plane. The resulting trajectory is shown in
\Cref{fig-counterex-muon-trajectory}.

\begin{figure}[h!]
\begin{center}
    \centering
        \includegraphics[width=\textwidth]{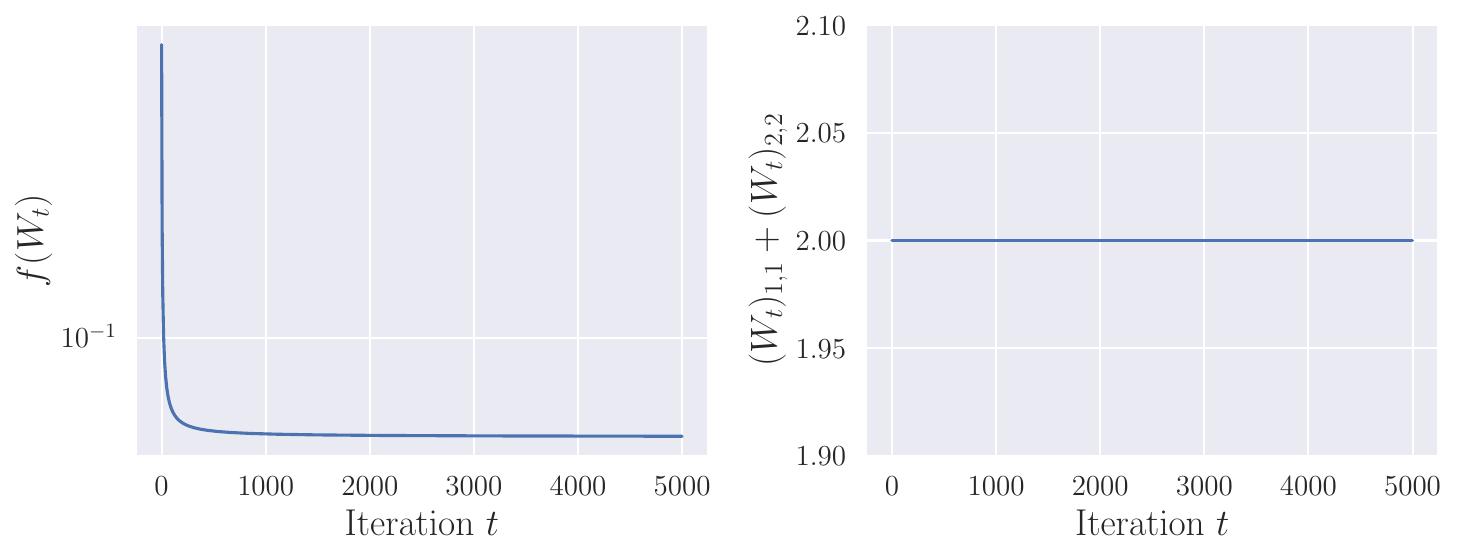}
    \caption{Muon algorithm \eqref{eq:muon} 
    on the counterexample:
(left) objective value $f(W_t)$; (right) invariant $(W_t)_{1,1}+(W_t)_{2,2}$.}
    \label{fig-counterex-muon-loss-w}
\end{center}
\end{figure}

\paragraph{EF-M algorithm.}
Consider \Cref{alg:efm} with learning rate
$\lambda_t = \frac{1}{\sqrt{t+1}}$.
In contrast to Muon method,
\Cref{thm:efm-convex} guarantees that \Cref{alg:efm}  converges for all 
convex Lipschitz functions.
\Cref{fig-counterex-ef-m-loss-w} illustrates 
that $f(W_t)\to f^\star$, and the right of 
\Cref{fig-counterex-muon-trajectory}
shows that the iterates converge, $((W_t)_{1,1}, (W_t)_{2,2}) \to (0, 0)$.

\begin{figure}[h!]
\begin{center}
    \centering
        \includegraphics[width=\textwidth]{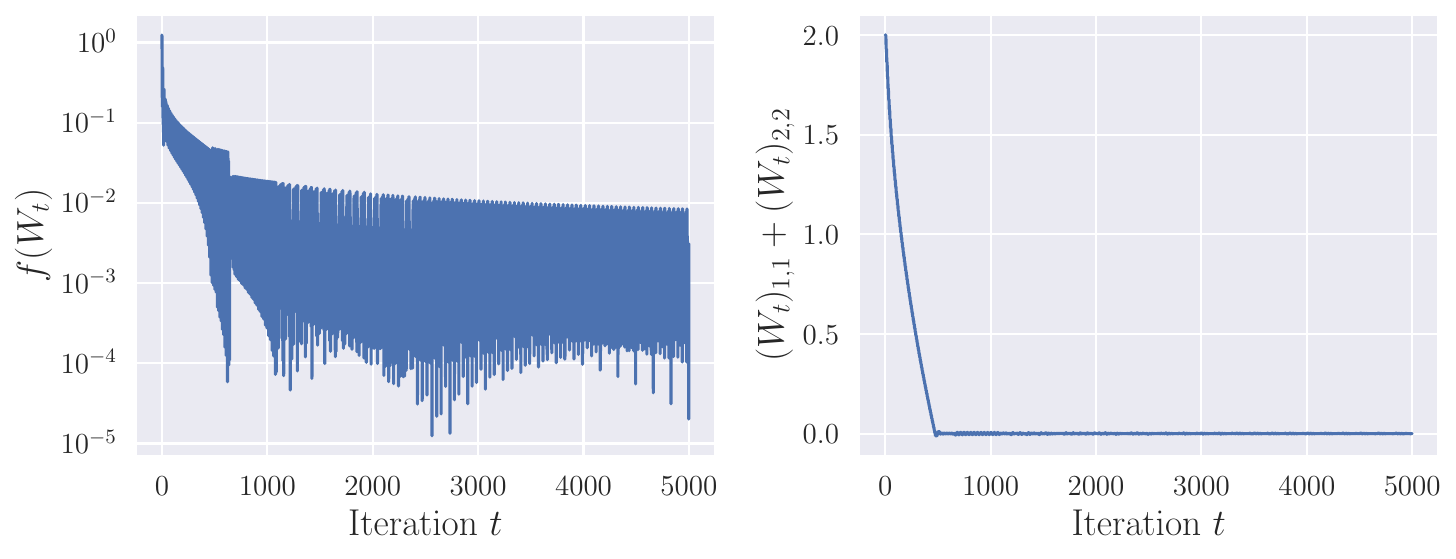}
    \caption{\Cref{alg:efm} on the counterexample:
(left) objective value $f(W_t)$; (right) invariant $(W_t)_{1,1}+(W_t)_{2,2}$.}
    \label{fig-counterex-ef-m-loss-w}
\end{center}
\end{figure}

\paragraph{Error feedback with spectral descent.}
We consider the case without momentum ($\beta=0$).
Similarly to the above,
\Cref{thm:efm-convex} 
guarantees that \Cref{alg:efm}  converges for all 
convex Lipschitz functions.
Indeed, \Cref{fig-counterex-polar-ef-gamma-t-loss-w} shows that $f(W_t)\to f^\star=0$,
and resulting trajectory in \Cref{fig-counterex-polar-ef-gamma-t-trajectory}
shows that $((W_t)_{1,1},(W_t)_{2,2}) \to (0, 0)=((W^\star)_{1,1},(W^\star)_{2,2})$.

\begin{figure}[h!]
\begin{center}
    \centering
        \includegraphics[width=\textwidth]{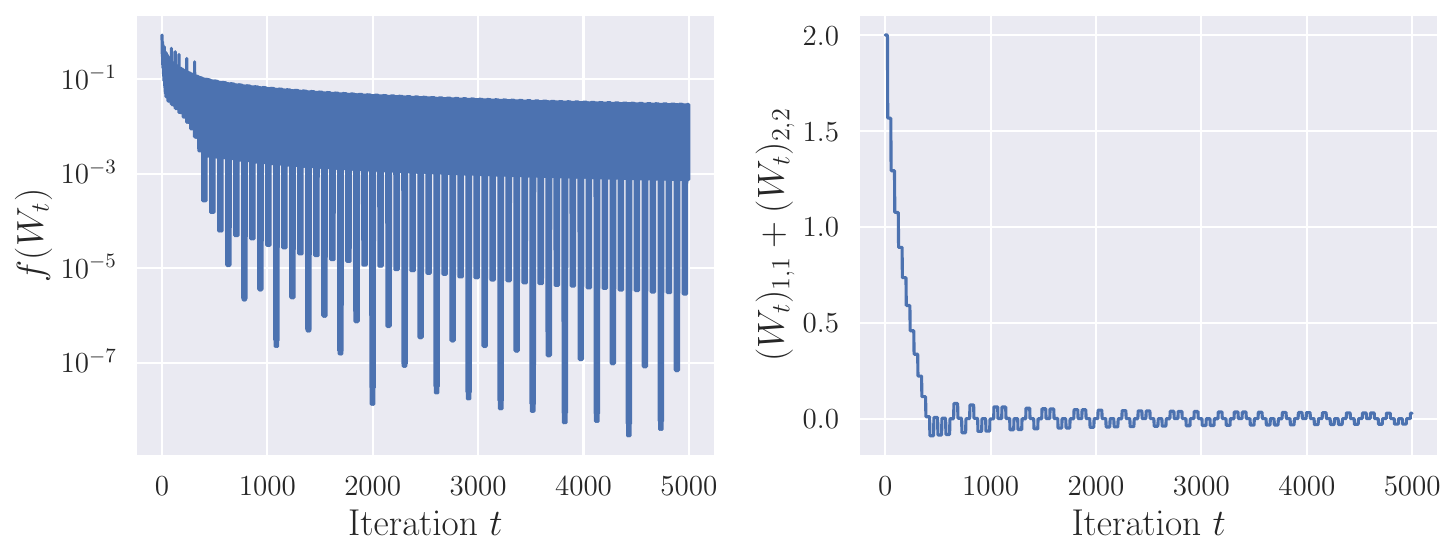}
    \caption{\Cref{alg:efm} without momentum ($\beta=0$) with $\lambda_t=\frac{1}{\sqrt{t+1}}$ on the counterexample:
(left) objective value $f(W_t)$; (right) $(W_t)_{1,1}+(W_t)_{2,2}$.}
    \label{fig-counterex-polar-ef-gamma-t-loss-w}
\end{center}
\end{figure}

\begin{figure}[h!]
\begin{center}
    \centering
        \includegraphics[width=0.7\textwidth]{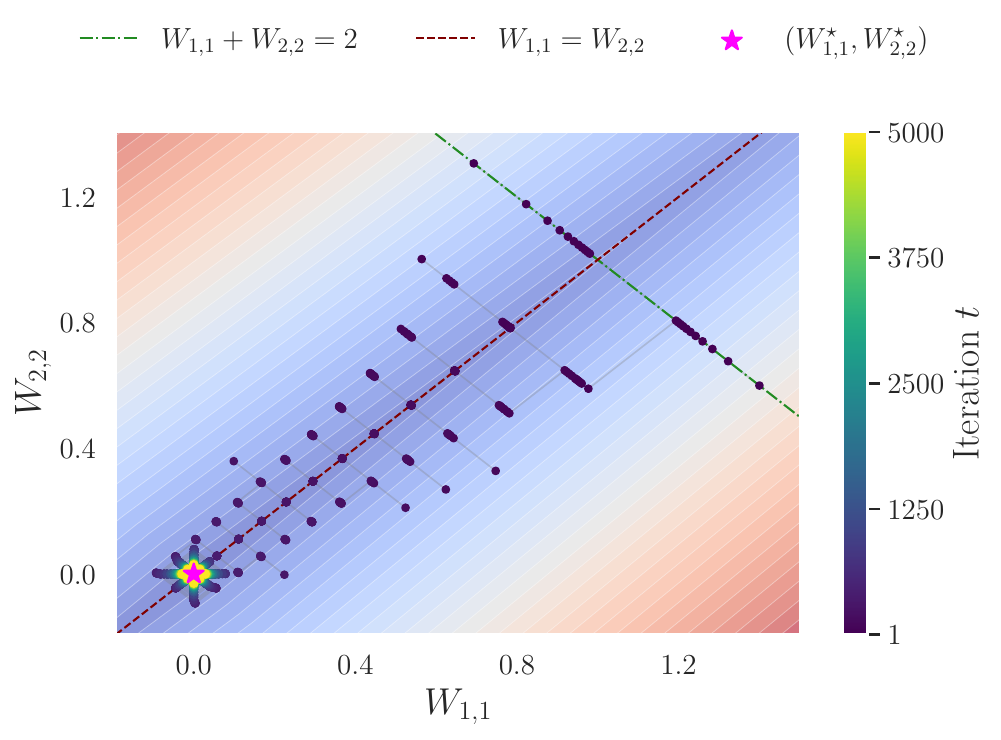}
    \caption{\Cref{alg:efm} without momentum ($\beta=0$) with $\lambda_t=\frac{1}{\sqrt{t+1}}$ trajectory in the $((W_t)_{1,1},(W_t)_{2,2})$ plane
    on the counterexample. Contours are level sets of $f$; dot color indicates the iteration $t$.} 
    \label{fig-counterex-polar-ef-gamma-t-trajectory}
\end{center}
\end{figure}

\section{Numerical experiments} \label{sec-numerical-experiments}

We compare performance of MuonMax \Cref{alg:reg-muonmax} and EF-MuonMax \Cref{alg:ef-muon-layers}
on image classification and language modeling tasks.
This section complements \Cref{sec-main-experiments}.

\paragraph{Parameter partition.}
We split all parameters into two groups: matrix parameters and paired parameters. Matrix parameters are weight matrices, excluding embeddings. These are updated by
vanilla Muon, MuonMax or EF-MuonMax. Paired parameters are
embeddings, biases, and the remaining one-dimensional parameters.
We use two update rules for the paired parameters.

\subsection{Image classification}
\label{sec:image-class}

\paragraph{Signed momentum paired optimizer.}
Matrix parameters are updated as above. For MuonMax and EF-MuonMax, paired
parameters are updated in the coupled paired mode, using signed momentum for
non-matrix parameters. We use momentum parameter $\beta=0.95$ and fix the scale
parameter at $s=1$.

\paragraph{Training setup.}
The model is a WideResNet-28-10 with 4 blocks per group and channel multiplier
10. It has 36M parameters. 
We train on CIFAR-10 dataset with 45K training images
and 5K validation images, batch size $128$, for 40 epochs
(around 14K steps). The learning rate schedule is cosine decay without
warmup. We use no weight decay.

\paragraph{Sweep.}
For each method, we sweep learning rate
\[
    \lambda_{\max} \in
    \{10^{-4},\,5\times 10^{-4},\,10^{-3},\,5\times 10^{-3},\,
      10^{-2},\,5\times 10^{-2}\}
\]
and Muon matrix learning rate (LR) multiplier
\[
    \{0.1,\,1,\,10,\,30,\,100\}.
\]
We also sweep Nesterov momentum over
$\{\mathrm{True},\mathrm{False}\}$.
This gives 60 trials per method. We select the best run by final validation
cross-entropy loss.
Results are shown in \Cref{tab:cifar10-results}.

\begin{table}[h]
\centering
\small
\begin{tabular}{lcccc}
\toprule
Method & $\lambda_{\max}$ & LR multiplier & Nesterov & Validation CE \\
\midrule
Vanilla Muon & $10^{-4}$ & $1$ & True & $0.287$ \\
MuonMax & $5\!\times\!10^{-4}$ & $100$ & False & \textbf{0.265} \\
EF-MuonMax & $5\!\times\!10^{-3}$ & $30$ & True & $0.388$ \\
\bottomrule
\end{tabular}
\vspace{0.5em}
\caption{Best configurations for image classification task.}
\label{tab:cifar10-results}
\end{table}

\paragraph{Vanilla Muon.}
We also include a vanilla Muon method with SGD paired optimizer. This
baseline is tuned on the same sweep as other methods.

\subsection{Language modeling}
\label{sec-lang-modeling}

\paragraph{NAdamW paired optimizer.}
Matrix parameters are updated as above. Paired parameters are updated by a
separate NAdamW optimizer~\cite{dozat2016incorporating,medapati_trai_neur_netw25},
with
\[
    \beta_1 = 0.9, \qquad
    \beta_2 = 0.999, \qquad
    \epsilon = 10^{-8}.
\]
This is not covered by the theory, but it is common in practice.

\paragraph{Training setup.}
The model is a $12$-layer RoPE transformer with embedding dimension $1024$, MLP
dimension $4096$, $16$ attention heads, RMSNorm, and QK-norm. It has 206M
parameters. We train on FineWeb-Edu 10B with sequence length $1024$, batch size
$64$, for 100K steps.
The learning rate schedule is cosine decay with $10\%$ warmup. We use momentum
parameter $\beta=0.95$ and weight decay $10^{-3}$. The scale parameter is fixed
at $s=1$.

\paragraph{Sweep.}
For each method, we sweep learning rate
\[
    \lambda_{\max} \in
    \{10^{-4},\,5\times 10^{-4},\,10^{-3},\,5\times 10^{-3},\,
      10^{-2},\,5\times 10^{-2}\}
\]
and Muon matrix LR multiplier
\[
    \{0.1,\,1,\,10,\,30,\,100\}.
\]
This gives 30 trials per method. We select the
best run by final validation cross-entropy. 

\paragraph{Vanilla Muon.}
We also include a vanilla Muon baseline with NAdamW paired optimizer. This
baseline is independently tuned, with
$\beta = 0.956$, LR multiplier equal to $28.6$ and weight decay of $0.024$,
with NAdamW paired hyperparameters
$\beta_1 = 0.966$,
$\beta_2 = 0.863$, 
and weight decay of $0.013$. The hyperparameter tuning is inherited from \texttt{init2winit} and followed the methodology outlined in~\cite{medapati_trai_neur_netw25}.
In particular, Muon was tuned on all the workloads of the \texttt{init2winit} codebase separately, with 400 runs per sweep. For each sweep, the hyperparameters were sampled randomly from the ranges given in \Cref{tab:muon-sweep-hparams};
final configuration is the one minimizing the cost function defined
in~\cite{medapati_trai_neur_netw25}.

\begin{table}[h]
\centering
\begin{tabular}{lll}
\toprule
Hyperparameter & Range/Values & Scale \\
\midrule
Base LR & $[0.00001, 0.005]$ & Log \\
Paired weight decay & $[0.01, 0.5]$ & Log \\
Muon weight decay & $[0.001, 0.5]$ & Log \\
Paired $\beta_1$ & $[0.5, 0.999]$ & Log \\
Paired $\beta_2$ & $[0.5, 0.999]$ & Log \\
Muon $\beta$ & $[0.2, 0.999]$ & Linear \\
Muon matrix LR Multiplier & $[0.01, 10000]$ & Log \\
Muon Nesterov & $\{ \texttt{False}, \texttt{True} \}$ & --- \\
Label smoothing & $\{0.0, 0.2\}$ & --- \\
Warmup fraction & $\{0.02, 0.05, 0.1\}$ & --- \\
\bottomrule
\end{tabular}
\caption{Vanilla Muon hyperparameter search space configuration for language modeling task.}
\label{tab:muon-sweep-hparams}
\end{table}

\end{document}